\theoremstyle{thmstyleone}%
\newtheorem{theorem}{Theorem}
\newtheorem{lemma}{Lemma}
\theoremstyle{thmstyletwo}%
\newtheorem{remark}{Remark}%
\theoremstyle{thmstylethree}%
\newtheorem{definition}{Definition}%
\begin{document}

\title[Properties of Approximations for ...]{Convergence Analysis of Max-Min Exponential Neural Network Operators in Orlicz Space}


\author[1]{\fnm{Satyaranjan } \sur{Pradhan}}\email{satya.math1993@gmail.com}

\author*[1,]{\fnm{Madan Mohan} \sur{Soren}}\email{mms.math@buodisha.edu.in}


\affil[1]{\orgdiv{Department of Mathematics}, \orgname{Berhampur University}, \orgaddress{\street{Berhampur}, \city{Ganjam}, \postcode{760007}, \state{Odisha}, \country{India}}}




\abstract{In this current work, we propose a Max–Min approach for approximating functions using exponential neural network operators. We extend this framework to develop the Max–Min Kantorovich-type exponential neural network operators and investigate their approximation properties. We study both pointwise and uniform convergence for univariate functions. To analyze the order of convergence, we use the logarithmic modulus of continuity and estimate the corresponding rate of convergence. Furthermore, we examine the convergence behavior of the Max–Min Kantorovich-type exponential neural network operators within the Orlicz space setting. We provide some graphical representations to illustrate the approximation error of the function through suitable kernel and sigmoidal activation functions.}

\keywords{Neural Network Operators, Sigmoidal Functions, Kantrovich Exopential Sampling series, Uniform Approximation, Order of Convergence, Logarthmic Modulus of Continuity, Mellin theory, Max-Min Operator,Orlicz space.}


\pacs[MSC Classification]{41A25, 41A30, 41A35, 41A81.}

\maketitle

\section{Introduction}\label{sec1}

The fundamental process of function sampling and reconstruction occurs at the center of approximation theory due to its wide applications in the field of signal analysis and image processing \cite{bede2016}. In 1949, a significant result was introduced by Shanon together with Whittekar and Kotelnikov, known as the \textbf{\bfseries WKS sampling theorem}, which establishes that band-limited signals can be completely reconstructed from their uniformly spaced sample nodes. However, in real-world scenarios, most signals are not strictly band-limited \cite{slepian1976bandwidth}. To address this, Butzer and Stens provided a constructive approach to approximate such signals over the real line\cite{butzer1992sampling}.
In parallel to classical sampling frameworks, the concept of exponential sampling was developed by researchers like Ostrowsky et al. \cite{ostrowsky1981}, Bertero and Pike \cite{bertero1991}, and Gori \cite{gori1993}, focusing on the reconstruction of functions from data sampled at exponentially spaced nodes. Mamedov was the first to initiate the study of the Mellin transform \cite{Mamedov1991}, and the foundational mathematical structure of exponential sampling was rigorously established by Butzer and Jansche, who demonstrated the advantages of Mellin transform methods over Fourier-based approaches in such settings\cite{Butzer1998,Butzer1997}.The generalized exponential sampling series is defined as
\begin{equation}
  (S^\varphi_w h)(z) := \sum_{k=-\infty}^{\infty} \varphi(e^{-k} z^{w}) \,\, h\left(e^{\frac{k}{w}}\right), \quad z \in \mathbb{R}_{+},\, k \in \mathbb{Z},\, w > 0,   
\end{equation}
where the series is absolutely convergent for suitable functions \( h : \mathbb{R}^+ \to \mathbb{R} \). 
These operators have been studied extensively for their convergence properties and approximation potential in various functional settings \cite{bardaro2017, bardaro2019b, balsamo2019}. In particular, Bardaro et al. \cite{bardaro2017} developed a generalization of the exponential sampling series and analyzed their approximation properties in detail, while their subsequent work \cite{bardaro2019b} examined convergence in Mellin–Lebesgue spaces. More recent advancements have extended the classical theory by incorporating Kantorovich-type modifications, where the sampled function is replaced by its local integral mean.The Kantorovich exponential sampling series is given by
\begin{equation}
    (I^\varphi_w h)(z) := \sum_{k=-\infty}^{\infty} \left( w \int_{\frac{k}{w}}^{\frac{k+1}{w}} h(e^u) \, du \right) \varphi(e^{-k} z^{w}),
\end{equation}
which converges absolutely for any locally integrable function \( h : \mathbb{R}_{+} \to \mathbb{R} \).
Such Kantorovich versions have been developed in univariate and multivariate settings and studied with a variety of kernels, including Hadamard-type and fractional integral types \cite{Acar2023a, Acar2023b, Kursun2023}. Acar and Kursun \cite{Acar2023a} investigated the pointwise convergence of  Kantorovich exponential sampling series, while their work with Turgay \cite{Acar2023b} introduced multidimensional Kantorovich modifications. Similarly, Kursun and Aral \cite{Kursun2023} explored Hadamard-type kernel modifications, and Angamuthu and Bajpeyi \cite{angamuthu2020} provided both direct and inverse approximation results for such operators. In addition to purely linear operators, exponential sampling techniques have been integrated into the framework of neural networks. Exponential sampling based neural network operators were introduced by Bajpeyi and Kumar \cite{bajpeyi2021b}.

Neural Network (NN) operators have played a pivotal role in approximation theory since the foundational work of Cybenko \cite{Cybenko1989} and Hornik et al. \cite{Hornik1989}, which established that feedforward neural networks with sigmoidal activation functions are universal approximators for continuous functions on compact subsets of $\mathbb{R}^n$. A feedforward neural network with a single hidden layer can be represented as :
\[
N_n(z) = \sum_{k=0}^{n} c_k \, \sigma(w_k \cdot z - \theta_k), \quad z = (z_1, \dots, z_d) \in \mathbb{R}^d,
\]
where \( n \in \mathbb{N} \) denotes the number of neurons in the hidden layer, \( \theta_k \in \mathbb{R} \) are the threshold values, and \( w_k = (w_1, \dots, w_d) \in \mathbb{R}^d \) are the weights associated with each neuron. The coefficients \( c_k \in \mathbb{R} \) represent the connection strengths of the output neuron, and \( \sigma : \mathbb{R} \to \mathbb{R} \) is the activation function. The notation \( w_k \cdot z \) refers to the standard inner product in \( \mathbb{R}^d \).

 Building on this foundation, researchers such as Costarelli and Vinti developed rigorous approximation frameworks using NN operators that involve sigmoidal and other smooth activation functions \cite{costarelli2014, costarelli2016a}, including their behavior in Orlicz spaces and for Lebesgue integrable functions.

An alternative and powerful approach in the analysis of  NN operators was introduced in \cite{cardaliaguet1992} as an extension of the framework developed in \cite{anastassiou2010}, employing the \textit{max-product method}. Originally initiated by Coroianu and Gal \cite{coroianu2010,coroianu2011,coroianu2012}, the max-product technique involves replacing summation symbols in finite sums or series with the supremum operator. This transformation turns linear operators into nonlinear ones, often resulting in enhanced approximation capabilities, especially when approximating continuous functions. Further developments include the convergence of max-product versions of classical Kantorovich-type neural network operators studied by Costarelli and Vinti in \cite{costarelli2016b}.

More recent studies have extended these results to broader functional settings. For instance, Boccali, Costarelli, and Vinti have extended these operators to Orlicz spaces, providing modular convergence results and quantitative estimates across \(L^p\), interpolation, and exponential spaces\cite{boccali2023,boccali2023a}. In the multivariate setting, Angamuthu has investigated the approximation behavior of multivariate max-product Kantorovich exponential sampling operators, establishing results concerning pointwise and uniform convergence, along with rates based on logarithmic moduli of smoothness\cite{angamuthu2023}. Jin, Yu, and Zhou have analyzed truncated versions of these operators in \(L^p\) spaces, deriving Jackson-type inequalities and employing maximal function techniques to quantify approximation errors\cite{jin2023}. Cantarini et al. have addressed the inverse approximation and saturation properties, establishing optimal approximation orders for the Kantorovich-type max-product neural network operators\cite{cantarini2022}. Furthermore, Pradhan and Soren have studied the behavior of these operators in weighted function spaces, deriving convergence estimates and a quantitative Voronovskaja-type theorem that characterizes their asymptotic performance~\cite{pradhan2025}. Recently in  \cite{bajpeyi2021b} Bajpeyi and Angamuthu introduced the exponial sampling type neural network operators and  Bajpeyi et al.  \cite{Bajpeyi2024} extended this framework to the exponential sampling type Kantorovich max-product NN operators, defined as :
\begin{equation}
(T_m^{\varrho} \zeta)(z) := 
\frac{
\displaystyle \bigvee_{i \in J^m} \varphi(e^{-i} z^m) \cdot m \int_{i/m}^{(i+1)/m} \zeta (e^u) \, du
}{
\displaystyle \bigvee_{i \in J^m} \varphi(e^{-i} z^m)
}, \quad z \in \mathbb{R}_+,
\end{equation}\label{eq1}
where \( I = [e^{-1}, e] \), \( \zeta : I \to \mathbb{R}_+ \) is a locally integrable function, \( \varphi \) is a sigmoidal function acting as the activation function, and \( J^m := \{ i \in \mathbb{Z} : i = -m, \dots, m-1 \} \).

In this paper, we are focusing on the conjugation of exponential sampling with neural network-based max--min operators. The exploration of max--min operators has received significant attention, especially in fuzzy approximation theory. Gökçer and Duman \cite{gokcer2020} established a general framework for max-min operators, proving their effectiveness in approximating nonnegative continuous functions and providing error estimates for Hölder continuous functions. Extending this, Gökçer and Aslan \cite{gokcer2022a} studied Kantorovich-type max-min operators, presenting quantitative convergence theorems and their applications to integral equations and signal processing, thus enabling the approximation of a broader class of Lebesgue integrable functions. Moreover, regular summability techniques were introduced to improve approximation behavior \cite{gokcer2022b}. Most recently, Aslan \cite{aslan2025} proposed max-min NN operators and analyzed their approximation properties in depth, further strengthening the theoretical foundations and application potential of the max-min approach.

This paper is structured as follows: In Section \ref{sec2}, we begin with the core ideas and fundamental concept of exponential sampling theory along with neural network-based approximation operators. We introduce in section \ref{sec3},  the max–min Kind exponential sampling neural network operator $ \mathfrak{GM}_{n} $ and analyze its convergence properties, both  pointwise and uniform. We  further derive the approximation rate  using the logarithmic modulus of continuity. Section \ref{sec4} extends the analysis of Section \ref{sec3} to the Max-Min Kantorovich type exponential sampling neural network operator $ \mathfrak{MK}_{n}^{(m)} $ including the study of convergence behaviour and rate of approximation via the logarithmic modulus of continuity. The order of convergence is analyzed under the assumption that the target function exhibits log-Hölder continuity of order $ \tau $, and, in addition convergence results are established in the setting of Orlicz space. Finally, in Section \ref{sec5}, numerical examples are provided to support the theoretical results and illustrate the approximation efficiency of the proposed operators.

\section{Preliminaries}\label{sec2}
In this paper, we use notations $\mathbb{N},\, \mathbb{Z},\, \mathbb{R},\,  \mathbb{R_{+}},\, \mathbb{R}_{+}^{0}$ to denote the sets of positive integers,\, integers,\, real numbers,\, positive real numbers,\, nonnegative real numbers, respectively. Through out this work we consider the interval $\mathscr{I} = [a,b]$ be any compact subset of $\mathbb{R}_{+}$.
\begin{definition}\label{def1}
    A function $\mathscr{F}:\mathbb{R}_{+} \rightarrow \mathbb{R}$ is defined to be  $\log$ continuous at a point $z \in\mathbb{R}_{+} $ if for each given $\epsilon > 0$,  $ \exists\,\rho > 0 $ such that $\,\, \forall y \in \mathbb{R}_{+} $, \[|\log(y) -\log(z)| \leq \rho \quad \Longrightarrow  \quad |\mathscr{F}(y) - \mathscr{F}(z)| < \epsilon .\]
\end{definition}
\begin{definition}\label{def1}
    A function $\mathscr{F}:\mathbb{R}_{+} \rightarrow \mathbb{R}$ is defined to be $\log$ uniformly continuous function if for any given $\epsilon > 0$, $ \exists\,\rho > 0 $ such that  $\quad \forall z_{1},z_{2} \in \mathbb{R}_{+}$, 
    \[|\log(z_{1}) -\log(z_{2})| \leq \rho \quad \Longrightarrow  \quad |\mathscr{F}(z_{1}) - \mathscr{F}(z_{2})| < \epsilon .\]
\end{definition}
Now, we introduce the space
\begin{itemize}[label={}]
    \item  $\mathscr{U}_{b}(\mathscr{I})\, := \, \{ \mathscr{F}: \mathscr{I} \to \mathbb{R}\, \vert  \text{$\mathscr{F}$ is  a bounded, $\log$ uniformly continuous function}\}$ equipped with the supremum norm $\|\mathscr{F}\|_{\infty} \,:= \,\sup\limits_{z\in \mathscr{I}} |\mathscr{F}(z)|.$
    \item $\mathcal{L}^{\infty}(\mathscr{I}) := \{\mathscr{F}: \mathscr{I} \to \mathbb{R}\,\, \vert\,     \operatorname{ess\,sup}\limits_{z\in\mathscr{I}} |\mathscr{F}|< \infty \}$
\end{itemize}
It is evident that log uniformly continuity  and uniformly continuity are equivallent over any compact subset of $\mathbb{R}_{+}.$
\subsection{ Mellin Orlicz space}
We define a version of the Haar measure $ \mu $ on $ \mathbb{R}_+ $ by:
\[\mu(H) = \int_H \frac{dv}{v}, \quad \text{for each measurable subset } H \subset \mathbb{R}_+.\] 
We denote by  $ \mathrm{M}(\mathbb{R}_+, \mu) $, the space of all measurable functions on \( \mathbb{R}_+ \) relative to the Haar measure \( \mu \).\\
Let $\eta : \mathbb{R}_+^0 \to \mathbb{R}_+^0 $ be a \emph{convex \(\varphi\)-function}, i.e., a function satisfying:
\setlength{\itemsep}{1pt}
\setlength{\parskip}{0pt}
\begin{description}
    \item[(i)\,\,:] $\eta$ is a non-decreasing and continuous function on $\mathbb{R}_0^+$;
    \item[(ii)\,:] $\eta(0) = 0$ and $\eta(u) > 0$ for all $u > 0$;
    \item[(iii):] $\eta$ is convex on $\mathbb{R}_0^+$.
\end{description}

We define the associated \textbf{modular functional} \( I_\eta \) corresponding to the Haar measure by:
\begin{align}\label{MFdef}
    I_\eta[\mathscr{F}] \,:= \int_0^\infty \eta(\,|\mathscr{F}(z)|\,) \,\, \frac{dz}{z}, \quad \mathscr{F} \in \mathrm{M}(\mathbb{R}_+, \mu).
\end{align}
It is evident that the  functional $I_\eta$ is convex. Next, we define the Orlicz space over \( \mathbb{R}_+ \) with respect to \( \mu \) and \( \eta \) as:
\[
L^\eta_\mu(\mathbb{R}_+) := \left\{ \mathscr{F}\in \mathrm{M}(\mathbb{R}_+, \mu) \,:\, \exists \, \ell > 0 \text{ such that } I_\eta[\ell \mathscr{F}] < \infty \right\}.\]
For fixed \( c \in \mathbb{R} \), the \textbf{Mellin-Orlicz space} is thereby defined as follows:
\[X_c^\eta := \left\{ \mathscr{F} : \mathbb{R}_+ \to \mathbb{R} \ \middle| \ \mathscr{F}(\cdot) \cdot (\cdot)^c \in L^\eta_\mu(\mathbb{R}_+) \right\}.\]
In this paper, we focus on the Mellin-Orlicz space for \( c = 0 \), denoted by:
$X_0^\eta = L^\eta_\mu(\mathbb{R}_+),$ which is simply the Orlicz space over \( \mathbb{R}_+ \) with Haar measure \( \mu \).

Now we define a subspace of finite elements of \( L^\eta_\mu(\mathbb{R}_+) \) as:
\[\mathscr{E}^\eta_\mu(\mathbb{R}_+) := \left\{ \mathscr{F} \in \mathrm{M}(\mathbb{R}_+, \mu) : I_\eta[\ell \mathscr{F}] < \infty \text{ for all } \ell > 0 \right\}.\]
A net $ (\mathscr{F}_m)_{m > 0}\, \subset \, L^\eta_\mu(\mathbb{R}_+) $ is said to \textbf{modularly converge} to $ \mathscr{F} \,\in\, L^\eta_\mu(\mathbb{R}_+) $ \\ if $\exists \, \,\ell > 0 \text{ such that }:$
\[ \lim_{m \to \infty} I_\eta[\ell\,(\mathscr{F}_m - \mathscr{F})] = 0.\]
For given \( \mathscr{F} \in L^\eta_\mu(\mathbb{R}_+) \), the Luxemburg norm is defined as:
\[\|\mathscr{F}\|_\eta := \inf \left\{ \ell > 0 : I_\eta\left[\frac{\mathscr{F}}{\ell}\right] \leq 1 \right\}.\]

We say that the convex $\varphi-$ function $\eta$ satisfies the \textbf{\( \Delta_2 \)-condition} if 
\[ \exists \quad\text{a constant} \,M >0\,\, \text{such that}\,\,\eta(2u) \leq M\, \eta(u), \quad \forall u \geq 0.\]

This condition ensures the the the the equivalence between norm and modular convergence. Specifically, if \( \eta \) satisfies the \( \Delta_2 \)-condition, then:
\[\|\mathscr{F}_m - \mathscr{F}\|_\eta \to 0 \iff I_\eta[\ell\,(\mathscr{F}_m - \mathscr{F})] \to 0 \quad \text{(for some } \ell > 0).\]

Moreover, under the \( \Delta_2 \)-condition, we have\,:\quad $L^\eta_\mu(\mathbb{R}_+) = \mathscr{E}^\eta_\mu(\mathbb{R}_+).$\\
The aforementioned theory is now presented through examples of convex $\varphi$- functions, highlighting key aspects of Mellin Orlicz space, as follows:
\subsubsection*{Example-1}

Let \( p > 1 \), and define:\, $\eta(u) = u^p, \quad u \geq 0.$ Then the functional \( I_\eta[f] \) becomes:\quad$I_\eta[\mathscr{F}] = \int\limits_0^\infty |\mathscr{F}(z)|^p \, \frac{dz}{z},$
and the associated Orlicz space \( L^\eta_\mu(\mathbb{R}_+) \) coincides with the  \( L^p \)-space over Haar measure $\mu$ is $L^p_\mu(\mathbb{R}_+)$. In this case, the  corresponding Mellin-Orlicz space becomes:
\[X_c^p := \left\{ \mathscr{F} \in L^p_\mu(\mathbb{R}_+): \int_0^\infty |\mathscr{F}(z)|^p z^{cp - 1} \, dz < \infty \right\}.\]
Moreover, the $\Delta_2$ condition on $\eta(u)$ ensures the equivalence between Luxemburg norm convergence and modular convergence.

\subsubsection*{Example-2}
For $\alpha >0 $, define $\eta_{\alpha}(u) = \exp{(u^{\alpha})} -1,\quad \forall \,u\geq 0$. Then the functional $I_{\eta_{\alpha}}[f] $  becomes: \, 
$I_{\eta_{\alpha}}[\mathscr{F}] = \int\limits_{0}^{\infty} \left[ \exp\left( |\mathscr{F}(z)|^{\alpha} \right) - 1 \right] \, \frac{dz}{z}, \quad \mathscr{F} \in \mathrm{M}(\mathbb{R}_{+},\mu).$\\ The associated Orlicz space over the Haar measure $\mu$ is given by:
\[L_{\mu}^{\eta_{\alpha}}(\mathbb{R}_{+}) := \left\{ \mathscr{F} \in \mathrm{M}(\mathbb{R}_{+}, \mu) \ \bigg| \ \int\limits_{0}^{\infty} \left[ \exp({|\mathscr{F}(z)|^{\alpha})} - 1 \right] \frac{dz}{z} < \infty \right\}.\]
In the Mellin setting, this becomes:
\[X_{c}^{\eta_{\alpha}} := \left\{ \mathscr{F} \, \in L_{\mu}^{\eta_{\alpha}}(\mathbb{R}_{+})\ : \ \int\limits_{0}^{\infty} \left[ \exp({|\mathscr{F}(z)|^{\alpha})} - 1 \right] \, z^{\,c-1} \, dz < \infty \right\}.\]

In this case, The  equivalence between Luxemberg norm convergence and Modular convergence does not hold as $\eta_{\alpha}$ fails to satisfy $\Delta_{2} $ condition.

\subsection{Sigmoidal Activation and Centered Bell Kernel}
\begin{definition}\label{def2}
    A measurable function $\delta : \mathbb{R} \to \mathbb{R}$ is defined  to be a sigmoidal activation function if 
   \begin{equation*}
        \lim_{z\to -\infty} \delta(z) = 0 \quad \And \quad \lim_{z\to \infty} \delta(z) = 1.
   \end{equation*}
\end{definition}

Let $\mathcal{C}^{(2)}(\mathbb{R})$ be the space of all twice continuously differentiable functions in $\mathbb{R}$. Assume that $\delta \in \mathcal{C}^{(2)}(\mathbb{R})$ is a nondecreasing and satisfies the conditions stated below:
\begin{enumerate}
    \item[]$\Delta{1}$~: $\delta(z) - \frac{1}{2}$ is an odd function.
    \item[]$\Delta{2}$~: $\delta(z)$ is concave on $\mathbb{R_{+}}.$
    \item[]$\Delta{3}$~: $\delta(z)= \mathcal{O}\left(|z|^{-1-v}\right)$, as $z \to -\infty$, for some $v> 0$
\end{enumerate}
\begin{remark}\label{rm1}
    From $\Delta{1}$, we can conclude that $\delta(z) + \delta(-z) = 1 \quad \forall \, z \in \mathbb{R}_{+}$ and using the fact  $\delta$ is nondecreasing together with condition $\Delta{2}$, we obtain $\delta^{'}(z) \geq 0, \quad \forall \, z \geq 0.$
\end{remark}
For the NN operator of exponential type, we define  the density function $\Psi_{\delta}$, using the sigmoidal  activation function $\delta $  as the kernel, as follows :
\begin{equation*}
    \Psi_{\delta}(z)= \frac{1}{2}\left[\delta(\log{z} +\,1) - \delta(\log{z} -1)\right], \quad z\in \mathbb{R_{+}}.
\end{equation*}
Since $\delta$ is non decreasing, so $\Psi_{\delta}$ is non negative on $\mathbb{R_{+}}$. Moreover, with aditional assumption $\delta(2) > \delta(0)$, it follows that $\Psi_{\delta} (e) >0$.

 Let  $\Psi_{\delta}$  be the Kernel function satisfying the properties stated below :
\begin{lemma}\label{lm1} (see \cite{costarelli2013a})
    \begin{enumerate}
        \item $\Psi_{\delta}(z) \geq \, \, 0 \quad  \forall \, z \in {\mathbb{R}_{+}}.$ 
        \item $\Psi_{\delta}(z)$ is even.
        \item $\underset{z\to  \infty}{\lim} \Psi_{\delta}(z) \, = \, \lim\limits_{z\to {0^{+}}} \Psi_{\delta}(z) = 0 .$\\[2pt]
        \item $ \Psi_{\delta}(z) $ is nondecreasing on $(0,1)$ and non increasing on $(1,\infty).$
        \item  $ \sum\limits_{k\in \mathbb{Z}} \Psi_{\delta}(e^{-k}z) = 1,\quad \forall z\in \mathbb{R}_{+}.$
    \end{enumerate}
\end{lemma}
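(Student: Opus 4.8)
The plan is to pass to logarithmic coordinates, which collapses all five assertions into statements about the single real-variable function $g(x) := \Psi_{\delta}(e^{x}) = \frac{1}{2}\left[\delta(x+1) - \delta(x-1)\right]$, $x \in \mathbb{R}$. Since $z \mapsto \log z$ is an increasing bijection of $\mathbb{R}_{+}$ onto $\mathbb{R}$ that sends $(0,1)$ to $(-\infty,0)$ and $(1,\infty)$ to $(0,\infty)$, every monotonicity, symmetry, and limit claim for $\Psi_{\delta}$ translates directly into the corresponding claim for $g$.

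Parts (1)--(3) then fall out quickly. For (1), the monotonicity of $\delta$ gives $\delta(x+1) \ge \delta(x-1)$, so $g \ge 0$. For (3), as $z \to \infty$ (respectively $z \to 0^{+}$) both arguments $x \pm 1 \to +\infty$ (respectively $-\infty$), and the sigmoidal limits of Definition \ref{def2} force $\delta(x+1)$ and $\delta(x-1)$ to the common limit $1$ (respectively $0$), so their difference vanishes. For (2), I would first extract from $\Delta{1}$ the reflection identity $\delta(-t) = 1 - \delta(t)$ recorded in Remark \ref{rm1}; substituting $t = x-1$ and $t = x+1$ gives $\delta(-x+1) = 1-\delta(x-1)$ and $\delta(-x-1)=1-\delta(x+1)$, whence $g(-x)=g(x)$, i.e. $\Psi_{\delta}(z)=\Psi_{\delta}(1/z)$.

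The core of the argument is part (4). Differentiating gives $g'(x) = \frac{1}{2}\left[\delta'(x+1)-\delta'(x-1)\right]$, so on $(0,\infty)$ I must show $\delta'(x+1)\le\delta'(x-1)$. Two facts about $\delta'$ drive this: differentiating the reflection identity shows $\delta'$ is even, while concavity ($\Delta{2}$) makes $\delta'$ nonincreasing on $\mathbb{R}_{+}$. For $x\ge 1$ both arguments are nonnegative and the claim is immediate from $x+1>x-1\ge0$. The delicate case is $0<x<1$, where $x-1<0$; here I would use evenness of $\delta'$ to replace $\delta'(x-1)$ by $\delta'(1-x)$, then compare the nonnegative numbers $x+1$ and $1-x$, noting $x+1>1-x$, to conclude $\delta'(x+1)\le\delta'(1-x)=\delta'(x-1)$. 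Thus $g'\le 0$ on $(0,\infty)$, and the evenness of $g$ from part (2) yields $g'\ge0$ on $(-\infty,0)$, which is precisely the monotonicity asserted for $\Psi_{\delta}$ on $(1,\infty)$ and $(0,1)$ respectively. I expect this handling of the sign change of $x-1$ to be the main obstacle, as it is the one place where $\Delta{1}$ and $\Delta{2}$ must be combined rather than used in isolation.

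Finally, for the partition-of-unity identity (5), I would write $\Psi_{\delta}(e^{-k}z)=g(\log z - k)$ and reindex to obtain the sum of doubly-shifted differences $\frac{1}{2}\sum_{j}\left[\delta(x+j+1)-\delta(x+j-1)\right]$ with $x=\log z$. Since the terms are nonnegative, it suffices to evaluate the symmetric partial sums, which telescope: all interior terms cancel and only the four boundary terms $\delta(x+N)+\delta(x+N+1)-\delta(x-N-1)-\delta(x-N)$ survive. Letting $N\to\infty$ and invoking the sigmoidal limits sends this expression to $2-0=2$, so the full sum equals $1$. The decay hypothesis $\Delta{3}$, together with the evenness from part (2) which transfers this decay to $+\infty$, guarantees absolute convergence of the series and thereby legitimizes the computation.
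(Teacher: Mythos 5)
Your proof is correct, and it is essentially the standard argument: the paper itself offers no proof of this lemma (it simply cites Costarelli--Spigler), and your route --- the logarithmic substitution $g(x)=\Psi_{\delta}(e^{x})$, the reflection identity $\delta(-t)=1-\delta(t)$ from $\Delta 1$, the combination of evenness of $\delta'$ with concavity on $\mathbb{R}_{+}$ to settle the case $0<x<1$ in part (4), and the telescoping of symmetric partial sums for part (5) --- is exactly the argument of the cited reference transported to the exponential kernel. One minor remark: your appeal to $\Delta 3$ at the end is superfluous, since nonnegativity of the terms already makes the unordered sum equal to the limit of the symmetric partial sums.
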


\begin{remark}\label{rem2}
According to the definition provided by Cardaliaguet and Euvrard in \cite{cardaliaguet1992}, the function \( \Psi_\delta(z) \) is a \emph{centered bell-shaped function}.
\end{remark}

\begin{remark}
It is important to note that if we omit condition \( (\Delta 2) \), and instead directly assume that the kernel \( \Psi_\delta \) satisfies assertion 4 of Lemma-\ref{lm1}, the main theoretical results of the framework remain valid. For further discussion, see for example \cite{cardaliaguet1992}.
\end{remark}
\noindent
To establish  the well-definiteness of the operator in the Max-Min setting  we need the follwing lemma.

\begin{lemma}\label{lm2}(see\cite{bajpeyi2021b})
    For a fixed $z \in \mathbb{R_{+}}$, we have
    \[\bigvee_{k\in \mathbb{Z}} \Psi_{\delta}(e^{-k}{}z^{n}) \geq \Psi_{\delta}(e) >0,\] for $\bar{k} \in \mathbb{Z}\,$ such that $z\in \left[e^{\frac{\bar{k}}{n}}, e^{\frac{\bar{k}+1}{n}}\right)\,$ and $n\in \mathbb{N}.$
\end{lemma}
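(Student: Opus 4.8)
The plan is to exhibit a single index $k$ for which the kernel value already dominates $\Psi_\delta(e)$, and then to use that the $\bigvee$ (supremum) over all integers is at least as large as any particular term.

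First I would unpack the hypothesis on $\bar{k}$. The membership $z \in \left[e^{\bar{k}/n}, e^{(\bar{k}+1)/n}\right)$ is equivalent, after taking logarithms, to $\bar{k} \le n\log z < \bar{k}+1$, that is $0 \le n\log z - \bar{k} < 1$. Rewriting the argument of the kernel at $k = \bar{k}$ as $e^{-\bar{k}}z^{n} = e^{\,n\log z - \bar{k}}$, this localises it precisely in the interval $[1,e)$.

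Next I would invoke assertion 4 of Lemma \ref{lm1}, namely that $\Psi_\delta$ is nondecreasing on $(0,1)$ and nonincreasing on $(1,\infty)$, so that $\Psi_\delta$ attains its global maximum at $1$ and decreases as the argument moves to the right of $1$. Since $e^{-\bar{k}}z^{n} \in [1,e)$, monotonicity yields $\Psi_\delta(e^{-\bar{k}}z^{n}) \ge \Psi_\delta(e)$: if the argument equals $1$ this follows from it being the maximiser, while if it lies in $(1,e)$ the nonincreasing behaviour on $(1,\infty)$ gives the bound directly. Because the supremum over $k \in \mathbb{Z}$ dominates the single term at $k = \bar{k}$, I obtain
\[
\bigvee_{k\in\mathbb{Z}} \Psi_\delta(e^{-k}z^{n}) \;\ge\; \Psi_\delta(e^{-\bar{k}}z^{n}) \;\ge\; \Psi_\delta(e).
\]
Finally, the strict positivity $\Psi_\delta(e) > 0$ follows from the additional assumption $\delta(2) > \delta(0)$ recorded just after the definition of the density function $\Psi_\delta$.

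There is no serious obstacle here: the estimate is an immediate consequence of the bell-shaped monotonicity of $\Psi_\delta$, and no appeal to the partition-of-unity identity is needed. The only point requiring care is the correct localisation of $e^{-\bar{k}}z^{n}$ within $[1,e)$, together with the correct direction of the monotonicity — one must ensure the chosen argument sits to the left of (or exactly at) $e$, so that the nonincreasing branch produces a \emph{lower} rather than an upper bound.
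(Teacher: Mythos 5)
Your proof is correct and is essentially the standard argument behind this lemma (the paper itself only cites \cite{bajpeyi2021b} for it): localise $e^{-\bar{k}}z^{n}=e^{\,n\log z-\bar{k}}$ in $[1,e)$, apply the monotonicity of the bell-shaped kernel from assertion 4 of Lemma \ref{lm1} to get $\Psi_{\delta}(e^{-\bar{k}}z^{n})\geq\Psi_{\delta}(e)$, bound the supremum below by this single term, and obtain strict positivity from $\delta(2)>\delta(0)$. No gaps; the one delicate point you flagged (the correct direction of monotonicity on the nonincreasing branch, with continuity of $\Psi_{\delta}$ covering the endpoint case $e^{-\bar{k}}z^{n}=1$) is handled properly.
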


\begin{lemma}\label{lm3}(see \cite{bajpeyi2021b})
    Let the sigmoidal activation function $\delta(z)\,$ satisfies condition $(\Delta{3})$ for $v > l $, for some $l \in \mathbb{N}$.Then for every fixed $ \gamma > 0 $, we have:\[ \lim_{n\to \infty} \bigvee_{|\log{u} -\, k| \geq \,\gamma n} \Psi_{\delta}(e^{-k} u)  |\log{}u- k|^{q} = 0,\] uniformly with respect to $u \in \mathbb{R}_{+}$, for every $0\leq q \leq l.$

\end{lemma}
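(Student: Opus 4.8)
The plan is to reduce the supremum to a decay estimate for a single bell-shaped profile and then exploit that $q<v$. First I would pass to logarithmic coordinates: writing $x=\log u$ and $t=x-k$, observe that $\log(e^{-k}u)=x-k=t$, so that
\[
\Psi_\delta(e^{-k}u)=\tfrac{1}{2}\bigl[\delta(t+1)-\delta(t-1)\bigr]=:\psi(t),
\]
and the quantity to be estimated becomes $\bigvee_{|t|\ge \gamma n}\psi(t)\,|t|^{q}$, where the supremum runs over the shifted lattice $t\in\{x-k:k\in\mathbb{Z}\}$. Since this lattice is contained in $\{t:|t|\ge \gamma n\}$, any bound for the supremum over the whole set $|t|\ge \gamma n$ dominates it and, crucially, is independent of $x$ (hence of $u$), which is precisely the uniformity asserted in the statement.

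The key step is a two-sided decay estimate of the form $\psi(t)\le C\,|t|^{-1-v}$ for $|t|$ large. Because $\delta$ is nondecreasing with $\lim_{z\to-\infty}\delta(z)=0$, it is nonnegative, so for $t\to-\infty$ one has $0\le\psi(t)\le\tfrac12\delta(t+1)$; applying $(\Delta 3)$ gives $\delta(t+1)=\mathcal{O}(|t+1|^{-1-v})$, and absorbing the shift via $|t+1|\ge |t|/2$ for $|t|\ge 2$ yields the claimed bound for $t\le -T$. For $t\to+\infty$ I would invoke the evenness of $\Psi_\delta$ from Lemma~\ref{lm1} (equivalently the symmetry $\psi(-t)=\psi(t)$, which follows from $(\Delta 1)$ since $\delta(-z)=1-\delta(z)$), so that the same estimate holds for $t\ge T$.

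With the decay estimate in hand, the remainder is immediate. For $n$ large enough that $\gamma n\ge T$,
\[
\bigvee_{|t|\ge \gamma n}\psi(t)\,|t|^{q}\;\le\;C\bigvee_{|t|\ge \gamma n}|t|^{\,q-1-v}\;=\;C\,(\gamma n)^{\,q-1-v},
\]
where the last equality uses that the hypotheses $0\le q\le l$ and $v>l$ force $q-1-v<0$, so $|t|^{\,q-1-v}$ is decreasing in $|t|$ and attains its supremum over $|t|\ge\gamma n$ at the endpoint $|t|=\gamma n$. Since $q-1-v<0$, the right-hand side tends to $0$ as $n\to\infty$, and as it does not depend on $u$ the convergence is uniform, which concludes the argument.

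I expect the main obstacle to be the decay estimate itself: $(\Delta 3)$ only controls $\delta$ near $-\infty$, so the positive tail $t\to+\infty$ must be recovered from the symmetry relations, and the additive shifts $\pm 1$ inside $\delta$ must be handled carefully so that the resulting bound is genuinely of order $|t|^{-1-v}$ rather than merely $o(1)$. Once that sharp power-type decay is secured, the comparison $q<v$ does all the remaining work.
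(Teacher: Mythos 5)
Your proof is correct; note that the paper itself does not prove this lemma but simply cites \cite{bajpeyi2021b}, and your argument---passing to logarithmic coordinates $t=\log u -k$, deriving the two-sided decay $\psi(t)\le C|t|^{-1-v}$ from $(\Delta 3)$ together with the symmetry $\psi(-t)=\psi(t)$ coming from $(\Delta 1)$, and then using $q\le l<v$ to get the bound $C(\gamma n)^{q-1-v}\to 0$ uniformly in $u$---is essentially the same standard argument as in that cited source.
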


\begin{lemma}\label{lm4}(see \cite{bajpeyi2021c})
    For $ v $ in $(\Delta3)$and $\gamma >0$, we have :
    \[ \bigvee_{|\log{u} - k|\, \geq\, \gamma n} \Psi_{\delta}(e^{-k} u) = \mathcal{O}(n^{-v})\,\, \text{as}\,\,  n \to \infty,\]  
    uniformly with respect to $u \in \mathbb{R}_{+}.$
\end{lemma}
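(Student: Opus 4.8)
The plan is to exploit the bell-shaped structure of $\Psi_\delta$ to collapse the supremum over the infinite tail into a single boundary value, and then to convert that value into a form governed by condition $(\Delta 3)$. First I would pass to logarithmic coordinates: writing $t = \log u$ and $g(s) := \Psi_\delta(e^s) = \tfrac{1}{2}\bigl[\delta(s+1) - \delta(s-1)\bigr]$, the quantity to be estimated becomes $\bigvee_{|t-k|\geq \gamma n} g(t-k)$. By Lemma \ref{lm1}, the kernel $\Psi_\delta$ is even and non-increasing on $(1,\infty)$; rewritten in the variable $s$ these say that $g$ is even and non-increasing on $(0,\infty)$, so that $g(s) = g(|s|)$ with $g$ monotonically decreasing in $|s|$.

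The key observation is that, because $g$ decays monotonically away from the origin, among all indices $k$ with $|t-k|\geq \gamma n$ the largest value of $g(t-k)$ is the one whose distance from $t$ is smallest, namely at least $\gamma n$. Hence for every such $k$ we have $g(t-k) = g(|t-k|) \leq g(\gamma n)$, and therefore
\[
\bigvee_{|t-k|\geq \gamma n} g(t-k) \;\leq\; g(\gamma n) \;=\; \Psi_\delta(e^{\gamma n}).
\]
Crucially this bound does not involve $t$, which is exactly what yields the uniformity in $u \in \mathbb{R}_{+}$ asserted in the statement.

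It then remains to estimate $\Psi_\delta(e^{\gamma n}) = \tfrac{1}{2}\bigl[\delta(\gamma n + 1) - \delta(\gamma n - 1)\bigr]$ as $n \to \infty$. Since $(\Delta 3)$ controls $\delta$ only as its argument tends to $-\infty$, I would first invoke the symmetry $\delta(-z) = 1 - \delta(z)$ from Remark \ref{rm1} (equivalently $(\Delta 1)$) to rewrite
\[
\delta(\gamma n + 1) - \delta(\gamma n - 1) = \delta(-\gamma n + 1) - \delta(-\gamma n - 1).
\]
Both arguments now tend to $-\infty$, so $(\Delta 3)$ gives $\delta(-\gamma n \pm 1) = \mathcal{O}\bigl((\gamma n)^{-1-v}\bigr)$; as $\delta$ is nondecreasing the difference is nonnegative and bounded above by $\delta(-\gamma n + 1) = \mathcal{O}(n^{-1-v})$, which is in particular $\mathcal{O}(n^{-v})$. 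Combining with the preceding display yields the claim.

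I expect the only genuinely delicate point to be the reduction step: one must argue carefully that the supremum over the (infinite, two-sided) family of excluded indices is dominated by the single term nearest the cutoff, and that the resulting bound $\Psi_\delta(e^{\gamma n})$ is valid uniformly in the peak location $t = \log u$. Everything past that is a direct, uniform application of $(\Delta 1)$ followed by $(\Delta 3)$.
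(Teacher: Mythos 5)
Your proof is correct; note that the paper itself contains no proof of this lemma (it is quoted from \cite{bajpeyi2021c}), and your argument --- collapsing the supremum to the single value $\Psi_{\delta}(e^{\gamma n})$ via the evenness and unimodality of the kernel from Lemma \ref{lm1}, then using $(\Delta 1)$ to move the arguments to $-\infty$ and applying $(\Delta 3)$ --- is exactly the standard route taken in that reference, and it in fact yields the stronger rate $\mathcal{O}(n^{-1-v})$, of which the stated $\mathcal{O}(n^{-v})$ is a weakening. The step you flag as delicate is not: a supremum of terms each dominated by the constant $\Psi_{\delta}(e^{\gamma n})$ is itself dominated by $\Psi_{\delta}(e^{\gamma n})$, regardless of how many indices are involved, so no additional care is required there.
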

\begin{lemma}\label{lm5} (see \cite{aslan2025})                                                        If $\bigvee\limits_{m \in \mathbb{Z}}x_{m} < \infty \, \text{or}\, \bigvee\limits_{m \in \mathbb{Z}}y_{m} < \infty $, then we have: 
\[\bigvee\limits_{m \in \mathbb{Z}}x_{m}  - \bigvee\limits_{m \in \mathbb{Z}}y_{m} \leq \bigvee\limits_{m \in \mathbb{Z}}\left|x_{m}-y_{m} \right| .\]                                                           
\end{lemma}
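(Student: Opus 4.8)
The plan is to reduce the inequality to a single index-wise estimate and then pass to the supremum, handling the finiteness hypothesis with care. Write $X := \bigvee_{m\in\mathbb{Z}} x_m$, $Y := \bigvee_{m\in\mathbb{Z}} y_m$, and $S := \bigvee_{m\in\mathbb{Z}} |x_m - y_m|$; note at the outset that $S \geq 0$, since each $|x_m - y_m| \geq 0$. The engine of the argument is the elementary inequality, valid for every fixed $m \in \mathbb{Z}$,
\[
x_m = y_m + (x_m - y_m) \leq y_m + |x_m - y_m|.
\]

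First I would suppose we are in the case $Y < \infty$. For each $m$ the right-hand side above is bounded by $y_m + S \leq Y + S$, because $y_m \leq Y$ and $|x_m - y_m| \leq S$. Hence $x_m \leq Y + S$ holds uniformly in $m$, and taking the supremum over $m \in \mathbb{Z}$ gives $X \leq Y + S$. If $S = \infty$ the claimed inequality $X - Y \leq S$ is trivial; if $S < \infty$ then $X \leq Y + S < \infty$, so subtracting the finite quantity $Y$ yields $X - Y \leq S$, as required.

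It remains to address the alternative permitted by the hypothesis, namely $X < \infty$ while $Y = \infty$. The hard part here is purely interpretive: one must fix the convention that $X - Y = -\infty$ in this situation, after which the inequality holds trivially because $S \geq 0 > -\infty$. I would state this convention explicitly so that the lemma is unambiguous, since this is the only configuration in which the two-sided ``or'' in the hypothesis genuinely matters; in every other case the finite-$Y$ argument above applies verbatim. No deeper estimate is needed, as the whole content reduces to the pointwise bound $x_m \leq y_m + |x_m - y_m|$ together with monotonicity of the supremum, and the sole subtlety is bookkeeping of the $\pm\infty$ cases so that the difference $X - Y$ is well defined.
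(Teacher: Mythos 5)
Your proof is correct. The paper does not prove Lemma~\ref{lm5} at all --- it is quoted from \cite{aslan2025} --- and the argument there is essentially the one you give: the pointwise bound $x_m \leq y_m + |x_m - y_m|$, monotonicity of the supremum to get $\bigvee_m x_m \leq \bigvee_m y_m + \bigvee_m |x_m - y_m|$, and then subtraction in the case where $\bigvee_m y_m$ is finite, with your explicit treatment of the $\pm\infty$ conventions being a welcome (and harmless) extra precaution.
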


\begin{lemma}\label{lm6} (see \cite{bede2016}) 
    For any $q,r,s, \in [0,1]$, we have:
    \[\left|q \land r - q \land s\right|\leq q \land \left| r-s\right|, \]
    where $ q \land r \, $  denotes the min$\{q,r\}.$
\end{lemma}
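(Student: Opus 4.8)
The plan is to bound $\lvert q\land r - q\land s\rvert$ by the two quantities $q$ and $\lvert r-s\rvert$ \emph{separately}, and then combine them: since any number that is simultaneously at most $q$ and at most $\lvert r-s\rvert$ is at most their minimum $q\land\lvert r-s\rvert$, the two bounds together give the claim. This avoids a single monolithic case analysis and isolates the two mechanisms at work.

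First I would establish $\lvert q\land r - q\land s\rvert \le q$. This is immediate from the fact that both $q\land r$ and $q\land s$ lie in the interval $[0,q]$: indeed $q,r,s\in[0,1]$ are nonnegative, so each minimum is nonnegative, and neither can exceed $q$. Two numbers drawn from $[0,q]$ can differ by at most $q$, which is exactly the desired inequality.

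Next I would prove the nonexpansiveness bound $\lvert q\land r - q\land s\rvert \le \lvert r-s\rvert$, that is, the map $x\mapsto q\land x$ is $1$-Lipschitz. By the symmetry of the statement in $r$ and $s$, I may assume $r\ge s$, so that $q\land r\ge q\land s$ and the absolute value may be dropped. A short case analysis on the position of $q$ relative to $s$ and $r$ then finishes this step: if $q\le s\le r$ both minima equal $q$ and the difference is $0$; if $s\le q\le r$ the difference equals $q-s\le r-s$ because $q\le r$; and if $s\le r\le q$ the difference equals $r-s$ exactly. In every case the difference is at most $r-s=\lvert r-s\rvert$.

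Combining the two estimates yields $\lvert q\land r - q\land s\rvert \le \min\{q,\lvert r-s\rvert\} = q\land\lvert r-s\rvert$, as required. There is no genuine obstacle here; the only point needing care is the three-way case split in the Lipschitz step, and even that is routine once the symmetry reduction to $r\ge s$ has been made. I note that the hypothesis $q,r,s\in[0,1]$ enters only through nonnegativity (used to place the two minima in $[0,q]$), so the inequality in fact remains valid for all nonnegative reals.
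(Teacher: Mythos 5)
Your proof is correct. Note that the paper itself supplies no proof of this lemma at all: it is quoted verbatim from the reference (Bede--Coroianu--Gal, \emph{Approximation by Max-Product Type Operators}), so there is no in-paper argument to compare yours against. Your two-bound strategy --- establishing $\left|q \land r - q \land s\right| \leq q$ from the fact that both minima lie in $[0,q]$, establishing $\left|q \land r - q \land s\right| \leq |r-s|$ via the $1$-Lipschitz property of $x \mapsto q \land x$, and then combining the two bounds into a bound by the minimum --- is a clean, self-contained, and fully rigorous derivation. The three-way case split in the Lipschitz step is exhaustive and each case is handled correctly, and your closing observation is also accurate: the restriction to $[0,1]$ is stronger than needed, since only nonnegativity of $q,r,s$ is used, so the inequality holds for all nonnegative reals. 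This matters in the paper's context only insofar as the operators $\mathfrak{GM}_n$ and $\mathfrak{MK}_n^{(m)}$ are applied to functions and kernel ratios taking values in $[0,1]$, where the lemma is invoked; your more general statement covers that usage with room to spare.
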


\begin{lemma}\label{lm7}(see \cite{bede2008})
    For any $r,s,t \geq 0$, we have \[ r\land t + s \land t \geq  (r+s)\land t .\]
\end{lemma}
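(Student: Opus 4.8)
The plan is to treat this as an elementary inequality about the minimum operation and argue by a short case analysis on the size of $r+s$ relative to $t$. Writing $r \land t = \min\{r,t\}$ throughout, I would split into the two exhaustive cases $r + s \leq t$ and $r + s > t$.

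In the first case, $r + s \leq t$ forces both $r \leq t$ and $s \leq t$ (since $r,s \geq 0$), so that $r \land t = r$, $s \land t = s$, and $(r+s) \land t = r+s$; the claimed inequality then holds with equality. In the second case $(r+s) \land t = t$, and I would further distinguish whether at least one of $r,s$ exceeds $t$ or both lie below $t$. If, say, $r \geq t$, then $r \land t = t$ and, since $s \land t \geq 0$, the left-hand side is at least $t$; if instead $r < t$ and $s < t$, then $r \land t + s \land t = r + s > t$. Either way the left-hand side dominates $t = (r+s) \land t$, which completes the argument.

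Alternatively, I would record a cleaner structural reason: the map $g(x) := x \land t$ is concave on $[0,\infty)$ (being the pointwise minimum of the two concave functions $x \mapsto x$ and $x \mapsto t$) and satisfies $g(0) = 0$. Any concave function vanishing at the origin is subadditive on $[0,\infty)$, since for $a,b \geq 0$ with $a+b>0$ concavity gives $g(a) \geq \tfrac{a}{a+b}\,g(a+b)$ and $g(b) \geq \tfrac{b}{a+b}\,g(a+b)$, and adding these yields $g(a)+g(b) \geq g(a+b)$. Applying this with $a = r$ and $b = s$ reproduces $r \land t + s \land t \geq (r+s) \land t$ in one line.

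Since the statement is genuinely elementary, there is no real obstacle; the only care needed is to keep the case split exhaustive and to invoke $r,s,t \geq 0$ at the two places where it matters, namely to deduce $r,s \leq t$ from $r+s \leq t$ and to bound a discarded minimum below by $0$. I would ultimately present the subadditivity viewpoint, as it isolates the single property of $x \land t$ that drives the inequality and generalizes at once.
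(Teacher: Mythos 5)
Your proof is correct in both of its versions. One point of comparison worth making explicit: the paper offers no proof of this lemma at all --- it is quoted directly from Bede et al.~\cite{bede2008} --- so your argument is a genuinely self-contained supplement rather than a variant of an in-paper proof. Your case analysis is exhaustive and invokes nonnegativity exactly at the two places you identify: to deduce $r,s \le t$ from $r+s \le t$, and to bound the discarded term $s \land t$ below by $0$ when $r \ge t$. The subadditivity route is also sound and is the better one to present: $g(x) := x \land t$ is concave on $[0,\infty)$ as the minimum of two concave functions, $g(0)=0$ because $t \ge 0$, and concavity together with $g(0)=0$ gives $g(a) \ge \frac{a}{a+b}\,g(a+b)$ and $g(b) \ge \frac{b}{a+b}\,g(a+b)$ whenever $a+b>0$, whence $g(a)+g(b) \ge g(a+b)$. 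The only micro-omission is the degenerate case $a=b=0$, where the claim reduces to the trivial identity $0 \ge 0$; add that half-sentence so the statement ``subadditive on $[0,\infty)$'' is literally complete. The structural view also delivers the generalization you allude to: the same one-line argument proves $g(r)+g(s) \ge g(r+s)$ for any concave $g$ on $[0,\infty)$ with $g(0)=0$, of which $x \land t$ is just the special case relevant here.
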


\begin{lemma}\label{lm8}(see\cite{bajpeyi2025})
   Let  $ I $ be a finite index set and $ \lambda > 0 $. Then for sequences $ \{x_m\}_{m \in I} $ and $ \{y_m\}_{m \in I} $ with values in $ [0,1] $, we have
\[
\lambda \bigvee_{m \in I} (x_m \wedge y_m) = \bigvee_{m \in I} (\lambda x_m \wedge \lambda y_m).
\]
 
\end{lemma}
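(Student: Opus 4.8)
The plan is to reduce the identity to two elementary facts about how multiplication by a positive scalar interacts, respectively, with the binary minimum $\wedge$ and with the finite supremum $\bigvee$. Since $\lambda>0$, the map $t\mapsto \lambda t$ is strictly increasing on $[0,\infty)$ and hence order-preserving; the whole argument rests on exploiting this monotonicity, and no structural feature of the sampling operators is involved.

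First I would establish the pointwise commutation of $\lambda$ with the minimum: for any $a,b\in[0,1]$ and $\lambda>0$,
\[
\lambda\,(a\wedge b)=(\lambda a)\wedge(\lambda b).
\]
Without loss of generality assume $a\le b$, so that $a\wedge b=a$. Because $t\mapsto\lambda t$ preserves order, $\lambda a\le\lambda b$, whence $(\lambda a)\wedge(\lambda b)=\lambda a=\lambda\,(a\wedge b)$. Applying this with $a=x_m$ and $b=y_m$ for each index $m\in I$ yields the termwise identity $\lambda\,(x_m\wedge y_m)=(\lambda x_m)\wedge(\lambda y_m)$.

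Next I would factor the positive scalar out of the finite join. Writing $z_m:=x_m\wedge y_m$ and $M:=\bigvee_{m\in I}z_m$, I claim $\lambda M=\bigvee_{m\in I}\lambda z_m$. Since $z_m\le M$ for every $m$, monotonicity gives $\lambda z_m\le\lambda M$, so $\bigvee_{m\in I}\lambda z_m\le\lambda M$. For the reverse inequality I use that $I$ is finite, so the supremum is attained: there is an index $m_0$ with $z_{m_0}=M$, and then $\lambda M=\lambda z_{m_0}\le\bigvee_{m\in I}\lambda z_m$. Combining the two inequalities gives equality. Chaining this with the termwise identity of the previous step,
\[
\lambda\bigvee_{m\in I}(x_m\wedge y_m)=\bigvee_{m\in I}\lambda\,(x_m\wedge y_m)=\bigvee_{m\in I}\big((\lambda x_m)\wedge(\lambda y_m)\big),
\]
which is precisely the assertion.

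I do not anticipate a genuine obstacle here, as the statement is elementary; the only point demanding care is the essential use of the hypothesis $\lambda>0$ (for $\lambda<0$ the map $t\mapsto\lambda t$ reverses order and would interchange $\wedge$ and $\bigvee$), together with the finiteness of $I$, which guarantees the join is in fact a maximum and is therefore attained. The restriction of the values to $[0,1]$ is not needed for the algebraic identity itself and merely reflects the setting in which the lemma will later be applied.
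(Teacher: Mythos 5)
Your proof is correct. Note that the paper does not actually prove this lemma itself: it is imported as a known result with a citation to Vayeda and Bajpeyi, so there is no internal proof to compare against. Your two-step argument --- the pointwise identity $\lambda(a\wedge b)=(\lambda a)\wedge(\lambda b)$ via order preservation of $t\mapsto\lambda t$ for $\lambda>0$, followed by factoring the scalar out of the finite join using attainment of the maximum --- is exactly the natural elementary proof, and your closing remarks (that $\lambda>0$ is essential, that finiteness guarantees attainment, and that the restriction to $[0,1]$ is inessential for the identity itself) are all accurate.
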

\begin{definition}\label{def3}
    For a given $z\in \mathscr{I}\subseteq \mathbb{R}_{+}$ and $ n \in \mathbb{N}$ and $\gamma > 0$,\\ we define $\mathcal{B}_{\gamma, n} $ as  : 
    \[\mathscr{B}_{\gamma, n} : = \{ p = \lceil{n\log{a}}\rceil,.......\lfloor{n\log{b}}\rfloor  : \, \left| \frac{p}{n}- \log{z}\right| \leq \gamma \}.\]
\end{definition}
 
\section{Results of max-min exponential neural network operator}\label{sec3}
In this section, we summarize the key properties of Max-Min exponential NN operator, including  its pointwise and uniform convergence, and estimates its convergence rate  using logarithmic modulus of smoothness for $\mathfrak{GM}_{n}.$\\[2pt]
 Let $\mathscr{F}: \mathscr{I} \to [0,1] $ be a log uniformly continuous and bounded function. Now we define the Max-Min kind of exponential neural network operator  $\mathfrak{GM}_{n}(\mathscr{F};z)$  as follows:
 \begin{equation}\label{Meq1}
     \mathfrak{GM}_{n}(\mathscr{F};z) := \bigvee\limits_{k=\lceil{n\log{}a}\rceil}^{\lfloor{n\log{}b}\rfloor} \mathscr{F} (e^{\frac{k}{n}})\,  \land \, \frac{\Psi_{\delta}(e^{-k}z^{n})}{\bigvee\limits_{k=\lceil{n\log{}a}\rceil}^{\lfloor{n\log{}b}\rfloor}\Psi_{\delta}(e^{-k}z^{n})}    ,   \left(\forall z \in [a,b]\right).
 \end{equation}
For sufficiently large $n \in \mathbb{N}$, we always obtain $\lceil{n\log{a}}\rceil \leq \lfloor{n\log{b}}\rfloor$ and whenever $\lceil{n\log{a}}\rceil \leq p \leq  \lfloor{n\log{b}}\rfloor$, it follows that  $ a\leq e^{\frac{p}{n}} \leq b $. Further by combining lemma \ref{lm2}, and the boundedness of the function $\mathscr{F}$, we conclude that the aforementioned operators are well defined.
 \par 
Some important properties of the Max-Min exponential neural network operator are given in the following lemma.
 \begin{lemma}\label{lm8}
     For two bounded functions $\mathscr{F}_{1}, \mathscr{F}_{2} : \mathscr{I} \to [0,1]$, the following properties hold.
     \begin{enumerate}
         \item[(a)] $\mathfrak{GM}_{n}(\mathscr{F}_{1})$ is log continuous on  $\mathscr{I}$.
         \item[(b)] If $\mathscr{F}_{1}(z) \,\leq \,\mathscr{F}_{2}(z)\,\,\,  \text{then}\,\,\, \mathfrak{GM_{n}}(\mathscr{F}_{1}) \leq\mathfrak{GM_{n}}(\mathscr{F}_{2})\,\, \forall z \in \mathscr{I}.$
         \item[(c)] For  sufficiently large $n\in \mathbb{N}$ and  $ \forall \,z \in \mathbb{R_{+}}$, we have:\[ \left|\mathfrak{GM}_{n}(\mathscr{F}_{1};z) - \mathfrak{GM}_{n}(\mathscr{F}_{2};z) \right| \leq \mathfrak{GM}_{n}(|\mathscr{F}_{1} - \mathscr{F}_{2}|;z).\]
         \item[(d)] $\mathfrak{GM}_{n}(\mathscr{F}_{1} + \mathscr{F}_{2};z)\leq \mathfrak{GM}_{n}(\mathscr{F}_{1};z)+ \mathfrak{GM}_{n}(\mathscr{F}_{2};z)\,\, \, ,  \forall z \in \mathbb{R}_{+}$.
         \item[(e)] $\mathfrak{GM}_{n}$ is Pseudo linear.\\
         i.e $ \mathfrak{GM}_{n}\left(\alpha \land \mathscr{F}_{1}) \bigvee ( \beta \land  \mathscr{F}_{2}) \, ;z\right) = \alpha \land \mathfrak{GM}_{n}(\mathscr{F}_{1} ;z) \bigvee  \, \beta \land \mathfrak{GM}_{n}( \mathscr{F}_{2};z).$
     \end{enumerate}
\end{lemma}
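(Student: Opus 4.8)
The plan is to first isolate the lattice structure by introducing convenient notation. For each index $k$ in the range $\lceil n\log a\rceil\le k\le\lfloor n\log b\rfloor$, write $\Phi_{n,k}(z):=\Psi_{\delta}(e^{-k}z^{n})\big/\bigvee_{j=\lceil n\log a\rceil}^{\lfloor n\log b\rfloor}\Psi_{\delta}(e^{-j}z^{n})$ for the normalized kernel, and set $c_k:=\mathscr{F}_1(e^{k/n})$ and $d_k:=\mathscr{F}_2(e^{k/n})$ for the sampled values, so that $\mathfrak{GM}_{n}(\mathscr{F}_1;z)=\bigvee_k\bigl(c_k\wedge\Phi_{n,k}(z)\bigr)$. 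Two facts underpin the whole argument: by Lemma \ref{lm2} the denominator is bounded below by $\Psi_{\delta}(e)>0$, so each $\Phi_{n,k}$ is well defined and $\Phi_{n,k}(z)\in[0,1]$; and the sample values lie in $[0,1]$, which supplies the finiteness hypotheses required by the max-min lemmas.

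For (a), which I expect to be the main obstacle, the target is continuity of $z\mapsto\mathfrak{GM}_{n}(\mathscr{F}_1;z)$ on $\mathscr{I}$. I would first show that each $\Phi_{n,k}$ is continuous: since $\Psi_{\delta}(e^{-k}z^{n})=\frac{1}{2}\bigl[\delta(n\log z-k+1)-\delta(n\log z-k-1)\bigr]$ and $\delta\in\mathcal{C}^{(2)}(\mathbb{R})$, the numerator is continuous in $z$; the denominator is a finite maximum of such continuous functions and, by Lemma \ref{lm2}, stays $\ge\Psi_{\delta}(e)>0$, hence is continuous and nonvanishing, so the quotient is continuous. Because $c_k\wedge\Phi_{n,k}(z)$ is the minimum of a constant and a continuous function it is continuous, and a finite supremum of continuous functions is continuous; finally, since $z\mapsto\log z$ is a homeomorphism of $\mathbb{R}_+$ and $\mathscr{I}$ is compact, log continuity and ordinary continuity coincide, which gives (a). The delicate step is precisely that the denominator stays uniformly away from $0$, and this is exactly what Lemma \ref{lm2} provides.

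Properties (b)--(d) reduce to monotonicity together with the three max-min inequalities. For (b), $\mathscr{F}_1\le\mathscr{F}_2$ yields $c_k\le d_k$, hence $c_k\wedge\Phi_{n,k}(z)\le d_k\wedge\Phi_{n,k}(z)$, and taking the supremum over $k$ preserves the inequality. For (c), I would apply Lemma \ref{lm5} with $x_k=c_k\wedge\Phi_{n,k}(z)$ and $y_k=d_k\wedge\Phi_{n,k}(z)$ to bound $\mathfrak{GM}_{n}(\mathscr{F}_1;z)-\mathfrak{GM}_{n}(\mathscr{F}_2;z)$ by $\bigvee_k|x_k-y_k|$, then invoke Lemma \ref{lm6} with $q=\Phi_{n,k}(z)$, $r=c_k$, $s=d_k$ to dominate each term by $\Phi_{n,k}(z)\wedge|c_k-d_k|$; the supremum of these is $\mathfrak{GM}_{n}(|\mathscr{F}_1-\mathscr{F}_2|;z)$, and the reverse difference is handled identically by symmetry, giving the absolute value. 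For (d), Lemma \ref{lm7} with $r=c_k$, $s=d_k$, $t=\Phi_{n,k}(z)$ gives $(c_k+d_k)\wedge\Phi_{n,k}(z)\le c_k\wedge\Phi_{n,k}(z)+d_k\wedge\Phi_{n,k}(z)$, and the elementary bound $\bigvee_k(u_k+v_k)\le\bigvee_k u_k+\bigvee_k v_k$ then yields the subadditivity.

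The pseudo-linearity (e) is purely lattice-algebraic. Sampling $(\alpha\wedge\mathscr{F}_1)\vee(\beta\wedge\mathscr{F}_2)$ at $e^{k/n}$ and taking the minimum with $\Phi_{n,k}(z)$, I would use the distributivity of $\wedge$ over $\vee$ together with the associativity and commutativity of $\wedge$ to rewrite the $k$-th term as $\bigl(\alpha\wedge(c_k\wedge\Phi_{n,k}(z))\bigr)\vee\bigl(\beta\wedge(d_k\wedge\Phi_{n,k}(z))\bigr)$. Splitting the outer finite supremum over the two branches and then pulling the constants out via the distributive identity $\alpha\wedge\bigvee_k x_k=\bigvee_k(\alpha\wedge x_k)$, valid for a finite index set, reconstitutes $\bigl(\alpha\wedge\mathfrak{GM}_{n}(\mathscr{F}_1;z)\bigr)\vee\bigl(\beta\wedge\mathfrak{GM}_{n}(\mathscr{F}_2;z)\bigr)$, which is the claimed identity.
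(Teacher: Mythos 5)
Your proposal is correct and takes essentially the same route as the paper: part (c) rests on Lemma \ref{lm5} followed by Lemma \ref{lm6}, part (d) on Lemma \ref{lm7} together with subadditivity of the finite supremum, and part (e) on distributivity of $\land$ over $\vee$ and the pulling-out of constants over the finite index set, exactly as in the paper's argument. The only difference is that you spell out (a) and (b), which the paper dismisses as obvious; your continuity argument for (a) — continuity of each normalized kernel via the lower bound $\Psi_{\delta}(e)>0$ from Lemma \ref{lm2}, plus the equivalence of log continuity and ordinary continuity on a compact subset of $\mathbb{R}_{+}$ — is sound and consistent with the paper's treatment.
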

\begin{proof}
    The proof of (a),(b) is obvious.Now for the proof of (c) we have:\\[0.2em]
    $\left| \mathfrak{GM}_{n}(\mathscr{F}_{1};z) - \mathfrak{GM}_{n}(\mathscr{F}_{2};z)\right|$
    \begin{align*}
          & = \left|\bigvee\limits_{k=\lceil{n\log{}a}\rceil}^{\lfloor{n\log{}b}\rfloor} \mathscr{F}_{1} (e^{\frac{k}{n}})\,  \land \, \frac{\Psi_{\delta}(e^{-k}z^{n})}{\bigvee\limits_{k=\lceil{n\log{}a}\rceil}^{\lfloor{n\log{}b}\rfloor}\Psi_{\delta}(e^{-k}z^{n})} - \bigvee\limits_{k=\lceil{n\log{}a}\rceil}^{\lfloor{n\log{}b}\rfloor} \mathscr{F}_{2} (e^{\frac{k}{n}})\,  \land \, \frac{\Psi_{\delta}(e^{-k}z^{n})}{\bigvee\limits_{k=\lceil{n\log{}a}\rceil}^{\lfloor{n\log{}b}\rfloor}\Psi_{\delta}(e^{-k}z^{n})}\right| \\
          & \leq  \bigvee\limits_{k=\lceil{n\log{}a}\rceil}^{\lfloor{n\log{}b}\rfloor} \left|\mathscr{F}_{1} (e^{\frac{k}{n}})\,  \land \, \frac{\Psi_{\delta}(e^{-k}z^{n})}{\bigvee\limits_{k=\lceil{n\log{}a}\rceil}^{\lfloor{n\log{}b}\rfloor}\Psi_{\delta}(e^{-k}z^{n})}-  \mathscr{F}_{2} (e^{\frac{k}{n}})\,  \land \, \frac{\Psi_{\delta}(e^{-k}z^{n})}{\bigvee\limits_{k=\lceil{n\log{}a}\rceil}^{\lfloor{n\log{}b}\rfloor}\Psi_{\delta}(e^{-k}z^{n})}\right|.\\
    \end{align*}
    By using the lemma~\ref{lm6}, we get :\\
    \\
 $\left| \mathfrak{GM}_{n}(\mathscr{F}_{1};z) - \mathfrak{GM}_{n}(\mathscr{F}_{2};z)\right|$
    \begin{align*}
          & \leq \bigvee\limits_{k=\lceil{n\log{}a}\rceil}^{\lfloor{n\log{}b}\rfloor} \left|\mathscr{F}_{1} (e^{\frac{k}{n}})- \mathscr{F}_{2} (e^{\frac{k}{n}})\right|\,  \land \, \frac{\Psi_{\delta}(e^{-k}z^{n})}{\bigvee\limits_{k=\lceil{n\log{}a}\rceil}^{\lfloor{n\log{}b}\rfloor}\Psi_{\delta}(e^{-k}z^{n})} \\
          & = \mathfrak{GM}_{n}(|\mathscr{F}_{1} - \mathscr{F}_{2}|;z).
    \end{align*}
Now in the proof of part(d), using the Lemma~\ref{lm7}, we have:
    \begin{align*}
        \mathfrak{GM}_{n}(\mathscr{F}_{1} + \mathscr{F}_{2};z)= & \bigvee\limits_{k=\lceil{n\log{}a}\rceil}^{\lfloor{n\log{}b}\rfloor} (\mathscr{F}_{1} +\mathscr{F}_{2})(e^{\frac{k}{n}})\,  \land \, \frac{\Psi_{\delta}(e^{-k}z^{n})}{\bigvee\limits_{k=\lceil{n\log{}a}\rceil}^{\lfloor{n\log{}b}\rfloor}\Psi_{\delta}(e^{-k}z^{n})}\\
        \leq & \bigvee\limits_{k=\lceil{n\log{}a}\rceil}^{\lfloor{n\log{}b}\rfloor} \mathscr{F}_{1} (e^{\frac{k}{n}})\,  \land \, \frac{\Psi_{\delta}(e^{-k}z^{n})}{\bigvee\limits_{k=\lceil{n\log{}a}\rceil}^{\lfloor{n\log{}b}\rfloor}\Psi_{\delta}(e^{-k}z^{n})} \\ & + \bigvee\limits_{k=\lceil{n\log{}a}\rceil}^{\lfloor{n\log{}b}\rfloor} \mathscr{F}_{2} (e^{\frac{k}{n}})\,  \land \, \frac{\Psi_{\delta}(e^{-k}z^{n})}{\bigvee\limits_{k=\lceil{n\log{}a}\rceil}^{\lfloor{n\log{}b}\rfloor}\Psi_{\delta}(e^{-k}z^{n})}\\
        = & \mathfrak{GM}_{n}(\mathscr{F}_{1};z) + \mathfrak{GM}_{n}(\mathscr{F}_{2};z).
    \end{align*}
    For(e) let $z\in \mathscr{I} $ be fixed and $ \alpha ,\, \beta \in [0,1]$.Then since $([0,1],\lor, \land )$ is an ordered semiring, there holds
    \par
    $\mathfrak{GM}_{n}\left( (\alpha \land \mathscr{F}_{1})\bigvee (\beta \land \mathscr{F}_{2}) \, ;z \right)$
    \begin{align*}
        = & \bigvee\limits_{k=\lceil{n\log{}a}\rceil}^{\lfloor{n\log{}b}\rfloor} \left( \alpha \land \mathscr{F}_{1} (e^{\frac{k}{n}}) \bigvee \beta \land \mathscr{F}_{2} (e^{\frac{k}{n}})\right)\,  \land \, \frac{\Psi_{\delta}(e^{-k}z^{n})}{\bigvee\limits_{k=\lceil{n\log{}a}\rceil}^{\lfloor{n\log{}b}\rfloor}\Psi_{\delta}(e^{-k}z^{n})}\\
        = & \bigvee\limits_{k=\lceil{n\log{}a}\rceil}^{\lfloor{n\log{}b}\rfloor} \left( \alpha \land \mathscr{F}_{1} (e^{\frac{k}{n}}) \right)\,  \land \, \frac{\Psi_{\delta}(e^{-k}z^{n})}{\bigvee\limits_{k=\lceil{n\log{}a}\rceil}^{\lfloor{n\log{}b}\rfloor}\Psi_{\delta}(e^{-k}z^{n})} \\ &  \bigvee \,  \left( \beta \land \mathscr{F}_{2} (e^{\frac{k}{n}})\right)\,  \land \, \frac{\Psi_{\delta}(e^{-k}z^{n})}{\bigvee\limits_{k=\lceil{n\log{}a}\rceil}^{\lfloor{n\log{}b}\rfloor}\Psi_{\delta}(e^{-k}z^{n})}\\
        = & \, \alpha \land \bigvee\limits_{k=\lceil{n\log{}a}\rceil}^{\lfloor{n\log{}b}\rfloor}   \mathscr{F}_{1} (e^{\frac{k}{n}}) \,  \land \, \frac{\Psi_{\delta}(e^{-k}z^{n})}{\bigvee\limits_{k=\lceil{n\log{}a}\rceil}^{\lfloor{n\log{}b}\rfloor}\Psi_{\delta}(e^{-k}z^{n})} \\ & \bigvee\beta \land  \bigvee\limits_{k=\lceil{n\log{}a}\rceil}^{\lfloor{n\log{}b}\rfloor} \,   \mathscr{F}_{2} (e^{\frac{k}{n}})\,  \land \, \frac{\Psi_{\delta}(e^{-k}z^{n})}{\bigvee\limits_{k=\lceil{n\log{}a}\rceil}^{\lfloor{n\log{}b}\rfloor}\Psi_{\delta}(e^{-k}z^{n})}\\
        = & \alpha \land \mathfrak{GM}_{n}(\mathscr{F}_{1}\, ; z)\, \, \bigvee  \beta \land \mathfrak{GM}_{n}(\mathscr{F}_{2}\, ; z).
    \end{align*}
    for  sufficiently large $n \in \mathbb{N}$.
\end{proof}
\vspace{1em}
\subsection{Pointwise \& Uniform Convergence}\label{subsec2}
\begin{theorem}
    Let $\mathscr{F}:\mathscr{I} \to [0,1]\,$ be given. Then at any point of log continuity $z\in \mathscr{I} \subseteq \mathbb{R}_{+}$ \[ \lim\limits_{n\to \infty} \mathfrak{GM}_{n}(\mathscr{F};z) = \mathscr{F}(z) .\]
Moreover if $\mathscr{F} \in \mathcal{U}_{b}(\mathscr{I})$, then the approximation is uniform i.e
\[\lim\limits_{n\to \infty} \left||\mathfrak{GM}_{n}(\mathscr{F};z) - \mathscr{F}(z)\right|| = 0 ,\]
where $\| .\|$ denotes the supremum norm.
\end{theorem}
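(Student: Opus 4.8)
The plan is to deduce convergence from a single master inequality obtained by comparing $\mathscr{F}$ with the constant function equal to its value at $z$. First I would record that $\mathfrak{GM}_n$ reproduces constants: for $c\in[0,1]$, since the normalized kernel $\Psi_\delta(e^{-k}z^n)\big/\bigvee_{j}\Psi_\delta(e^{-j}z^n)$ lies in $[0,1]$ and equals $1$ at the maximizing index, each term $c\wedge(\cdot)$ is at most $c$ while one term equals $c$, so $\mathfrak{GM}_n(c;z)=c$. Taking $\mathscr{F}_1=\mathscr{F}$ and $\mathscr{F}_2\equiv\mathscr{F}(z)$ in part (c) of Lemma~\ref{lm8} then yields
\[\left|\mathfrak{GM}_n(\mathscr{F};z)-\mathscr{F}(z)\right|=\left|\mathfrak{GM}_n(\mathscr{F};z)-\mathfrak{GM}_n(\mathscr{F}(z);z)\right|\le \mathfrak{GM}_n\!\left(|\mathscr{F}-\mathscr{F}(z)|;z\right),\]
so it suffices to show the right-hand side tends to $0$.

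Next I would fix $\epsilon>0$, take $\rho>0$ from the log continuity of $\mathscr{F}$ at $z$, set $\gamma=\rho$, and split the defining supremum over $k$ into the near indices $k\in\mathscr{B}_{\gamma,n}$ (those with $|k/n-\log z|\le\gamma$) and the remaining far indices. On the near block, $|k/n-\log z|\le\rho$ forces $|\mathscr{F}(e^{k/n})-\mathscr{F}(z)|<\epsilon$, so each near term $|\mathscr{F}(e^{k/n})-\mathscr{F}(z)|\wedge(\cdot)$ is below $\epsilon$, whence the near part of the supremum is at most $\epsilon$.

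The decisive part is the far block. There I would bound $|\mathscr{F}(e^{k/n})-\mathscr{F}(z)|\le 1$ (as $\mathscr{F}$ is $[0,1]$-valued), so each far term is at most the normalized kernel, and then invoke the uniform lower bound $\bigvee_{j}\Psi_\delta(e^{-j}z^n)\ge\Psi_\delta(e)>0$ from Lemma~\ref{lm2} to obtain a far part bounded by $\Psi_\delta(e)^{-1}\bigvee_{\mathrm{far}}\Psi_\delta(e^{-k}z^n)$. Writing $u=z^n$, so that $|k/n-\log z|\ge\gamma$ becomes exactly $|\log u-k|\ge\gamma n$, Lemma~\ref{lm4} (equivalently Lemma~\ref{lm3} with $q=0$) gives $\bigvee_{\mathrm{far}}\Psi_\delta(e^{-k}z^n)=\mathcal{O}(n^{-v})$ uniformly in $u\in\mathbb{R}_+$, so the far part is $\mathcal{O}(n^{-v})\to0$. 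Choosing $n$ large makes it smaller than $\epsilon$; since a supremum of two quantities each below $\epsilon$ is below $\epsilon$, combining the two blocks gives $|\mathfrak{GM}_n(\mathscr{F};z)-\mathscr{F}(z)|<\epsilon$, establishing pointwise convergence.

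Finally, for $\mathscr{F}\in\mathcal{U}_b(\mathscr{I})$ the radius $\rho$ may be chosen independently of $z$ by log uniform continuity, the lower bound $\Psi_\delta(e)$ is independent of $z$, and the decay in Lemma~\ref{lm4} is uniform in $u\in\mathbb{R}_+$, hence in $z\in\mathscr{I}$; therefore every estimate above is uniform in $z$, which yields $\|\mathfrak{GM}_n(\mathscr{F})-\mathscr{F}\|_\infty\to0$. I expect the main obstacle to be precisely this far-block estimate, namely performing the change of variable $u=z^n$ to translate the separation condition $|k/n-\log z|\ge\gamma$ into the hypothesis $|\log u-k|\ge\gamma n$ of Lemmas~\ref{lm3}--\ref{lm4} and combining it with the denominator lower bound to secure uniform decay; the reduction to constants and the near-block control are routine.
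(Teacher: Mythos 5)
Your proposal is correct and takes essentially the same route as the paper: the paper also compares $\mathfrak{GM}_{n}(\mathscr{F};z)$ with the operator applied to the constant $\mathscr{F}(z)$ (its term $\mathbb{I}_{2}=0$ is exactly your constant-reproduction observation, and its estimate of $\mathbb{I}_{1}$ via Lemmas~\ref{lm5} and~\ref{lm6} is precisely part (c) of the operator-properties lemma that you invoke), and then performs the identical near/far split over $\mathscr{B}_{\gamma,n}$, using the denominator bound $\Psi_{\delta}(e)$ of Lemma~\ref{lm2} and the $\mathcal{O}(n^{-v})$ decay of Lemma~\ref{lm4} for the far block. Your explicit remark that all constants are independent of $z$ in the uniform case is a slightly more careful writeup of what the paper leaves implicit, but the mathematics is the same.
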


\begin{proof}
    Suppose the map $\mathscr{F}:\mathscr{I} \to [0,1]\,$ is log continuous at $z\in \mathscr{I}$. By  triangle inequality,\\ 
    \par
    $\left|\mathfrak{GM}_{n}(\mathscr{F};z) - \mathscr{F}(z)\right|$
    \begin{align*}
        = & \left| \bigvee\limits_{k=\lceil{n\log{a}}\rceil}^{\lfloor{n\log{}b}\rfloor} \mathscr{F}(e^{\frac{k}{n}}) \,  \land  \frac{\Psi_{\delta}(e^{-k}z^{n})}{\bigvee\limits_{k=\lceil{n\log{}a}\rceil}^{\lfloor{n\log{}b}\rfloor}\Psi_{\delta}(e^{-k}z^{n})} - \mathscr{F}(z)\right|\\
        \leq & \left| \bigvee\limits_{k=\lceil{n\log{}a}\rceil}^{\lfloor{n\log{}b}\rfloor} \mathscr{F}(e^{\frac{k}{n}}) \,  \land  \frac{\Psi_{\delta}(e^{-k}z^{n})}{\bigvee\limits_{k=\lceil{n\log{}a}\rceil}^{\lfloor{n\log{}b}\rfloor}\Psi_{\delta}(e^{-k}z^{n})} -\bigvee\limits_{k=\lceil{n\log{}a}\rceil}^{\lfloor{n\log{}b}\rfloor} \mathscr{F}(z) \,  \land  \frac{\Psi_{\delta}(e^{-k}z^{n})}{\bigvee\limits_{k=\lceil{n\log{}a}\rceil}^{\lfloor{n\log{}b}\rfloor}\Psi_{\delta}(e^{-k}z^{n})}\right|\\&+ \left|\bigvee\limits_{k=\lceil{n\log{}a}\rceil}^{\lfloor{n\log{}b}\rfloor} \mathscr{F}(z) \,  \land  \frac{\Psi_{\delta}(e^{-k}z^{n})}{\bigvee\limits_{k=\lceil{n\log{}a}\rceil}^{\lfloor{n\log{}b}\rfloor}\Psi_{\delta}(e^{-k}z^{n})}  - \mathscr{F}(z)\right|\\ 
        = & \mathbb{I}_{1} + \mathbb{I}_{2}.
    \end{align*}
      To estimate $\mathbb{I}_{1}$, using lemma \ref{lm5}  and \ref{lm6} we write:
     \begin{align*}
         \mathbb{I}_{1}= & \left| \bigvee\limits_{k=\lceil{n\log{}a}\rceil}^{\lfloor{n\log{}b}\rfloor} \mathscr{F}(e^{\frac{k}{n}}) \,  \land  \frac{\Psi_{\delta}(e^{-k}z^{n})}{\bigvee\limits_{k=\lceil{n\log{}a}\rceil}^{\lfloor{n\log{}b}\rfloor}\Psi_{\delta}(e^{-k}z^{n})} -\bigvee\limits_{k=\lceil{n\log{}a}\rceil}^{\lfloor{n\log{}b}\rfloor} \mathscr{F}(z) \,  \land  \frac{\Psi_{\delta}(e^{-k}z^{n})}{\bigvee\limits_{k=\lceil{n\log{}a}\rceil}^{\lfloor{n\log{}b}\rfloor}\Psi_{\delta}(e^{-k}z^{n})}\right|\\
         \leq &  \bigvee\limits_{k=\lceil{n\log{a}}\rceil}^{\lfloor{n\log{b}}\rfloor} \left| \mathscr{F}(e^{\frac{k}{n}}) - \mathscr{F}(z) \right| \,  \land  \frac{\Psi_{\delta}(e^{-k}z^{n})}{\bigvee\limits_{k=\lceil{n\log{}a}\rceil}^{\lfloor{n\log{}b}\rfloor}\Psi_{\delta}(e^{-k}z^{n})} .
     \end{align*}
     Now splitting the Max operation over the index set:
     \begin{align*}
         \mathbb{I}_{1} \leq & \bigvee\limits_{k\in\mathcal{B}_{\gamma, n}} \left| \mathscr{F}(e^{\frac{k}{n}}) - \mathscr{F}(z) \right| \,  \land  \frac{\Psi_{\delta}(e^{-k}z^{n})}{\bigvee\limits_{k=\lceil{n\log{}a}\rceil}^{\lfloor{n\log{}b}\rfloor}\Psi_{\delta}(e^{-k}z^{n})} \\ &  \bigvee \, \bigvee\limits_{k\notin \mathcal{B}_{\gamma, n}} \left| \mathscr{F}(e^{\frac{k}{n}}) - \mathscr{F}(z) \right| \,  \land  \frac{\Psi_{\delta}(e^{-k}z^{n})}{\bigvee\limits_{k=\lceil{n\log{}a}\rceil}^{\lfloor{n\log{}b}\rfloor}\Psi_{\delta}(e^{-k}z^{n})} \\
         \leq & \mathbb{I}_{1.1} \bigvee  \mathbb{I}_{1.2} .
     \end{align*}
    Since $\mathscr{F}(z)$ is log continuous at z , then  for every $ \varepsilon >0,\, \mathbb{I}_{1.1}$ can be written as
    \begin{align*}
        \mathbb{I}_{1.1} \leq & \bigvee\limits_{k\in\mathcal{B}_{\gamma, n}} \left| \mathscr{F}(e^{\frac{k}{n}}) - \mathscr{F}(z) \right| \,  \land  \frac{\Psi_{\delta}(e^{-k}z^{n})}{\bigvee\limits_{k=\lceil{n\log{}a}\rceil}^{\lfloor{n\log{}b}\rfloor}\Psi_{\delta}(e^{-k}z^{n})}\\
        \leq & \bigvee\limits_{k\in\mathcal{B}_{\gamma, n}} \epsilon \,  \land  \frac{\Psi_{\delta}(e^{-k}z^{n})}{\bigvee\limits_{k=\lceil{n\log{}a}\rceil}^{\lfloor{n\log{}b}\rfloor}\Psi_{\delta}(e^{-k}z^{n})}
        \leq\bigvee\limits_{k\in\mathcal{B}_{\gamma, n}} \epsilon \,  \land 1\
        \leq  \epsilon.   
    \end{align*}
     For $\mathbb{I}_{1.2}$, using $a\land b \leq b\,\,, \forall a,b \in \mathbb{R}{+}$, we have: 
     \begin{align*}
         \mathbb{I}_{1.2} = & \bigvee\limits_{k\notin \mathcal{B}_{\gamma, n}} \left| \mathscr{F}(e^{\frac{k}{n}}) - \mathscr{F}(z) \right| \,  \land  \frac{\Psi_{\delta}(e^{-k}z^{n})}{\bigvee\limits_{k=\lceil{n\log{}a}\rceil}^{\lfloor{n\log{}b}\rfloor}\Psi_{\delta}(e^{-k}z^{n})}\\
         \leq & \bigvee\limits_{k\notin \mathcal{B}_{\gamma, n}}  \frac{\Psi_{\delta}(e^{-k}z^{n})}{\bigvee\limits_{k=\lceil{n\log{}a}\rceil}^{\lfloor{n\log{}b}\rfloor}\Psi_{\delta}(e^{-k}z^{n})}
         \leq\frac{1}{\Psi_{\delta}(e)} \bigvee\limits_{k\notin \mathcal{B}_{\gamma, n}}\Psi_{\delta}(e^{-k}z^{n})
         \leq \frac{c\,n^{-v} }{\Psi_{\delta}(e)}
         \leq  \epsilon.
     \end{align*}
     for sufficiently large $n\in \mathbb{N}$, where $v$ corresponds to given $ v \, $in $\Delta3$.\\[2pt]
    From $\mathbb{I}_{2}$, we have: 
    \begin{align*}
        \mathbb{I}_{2} = & \,\left|\bigvee\limits_{k=\lceil{n\log{}a}\rceil}^{\lfloor{n\log{}b}\rfloor} \mathscr{F}(z) \,  \land  \frac{\Psi_{\delta}(e^{-k}z^{n})}{\bigvee\limits_{k=\lceil{n\log{}a}\rceil}^{\lfloor{n\log{}b}\rfloor}\Psi_{\delta}(e^{-k}z^{n})}  - \mathscr{F}(z)\right|\\ 
        = & \,\left|\bigvee\limits_{k=\lceil{n\log{}a}\rceil}^{\lfloor{n\log{}b}\rfloor} (\mathscr{F}(z) \land 1) \,  \land  \frac{\Psi_{\delta}(e^{-k}z^{n})}{\bigvee\limits_{k=\lceil{n\log{}a}\rceil}^{\lfloor{n\log{}b}\rfloor}\Psi_{\delta}(e^{-k}z^{n})}  - \mathscr{F}(z)\right|\\ 
        \leq & \,\left| \mathscr{F}(z)\, \land \bigvee\limits_{k=\lceil{n\log{}a}\rceil}^{\lfloor{n\log{}b}\rfloor}  1 \,  \land  \frac{\Psi_{\delta}(e^{-k}z^{n})}{\bigvee\limits_{k=\lceil{n\log{}a}\rceil}^{\lfloor{n\log{}b}\rfloor}\Psi_{\delta}(e^{-k}z^{n})}  - \mathscr{F}(z)\right|\\ 
        \leq & \, \left| \mathscr{F}(z)\, \land \bigvee\limits_{k=\lceil{n\log{}a}\rceil}^{\lfloor{n\log{}b}\rfloor}   \frac{\Psi_{\delta}(e^{-k}z^{n})}{\bigvee\limits_{k=\lceil{n\log{}a}\rceil}^{\lfloor{n\log{}b}\rfloor}\Psi_{\delta}(e^{-k}z^{n})}  - \mathscr{F}(z)\right|\\ 
        \leq &\, \left| \mathscr{F}(z)\, \land 1  - \mathscr{F}(z)\right|\\ 
        = &\, 0.
    \end{align*}
Hence,  $\left|\mathfrak{GM}_{n}(\mathscr{F};z) - \mathscr{F}(z)\right| \, \leq \varepsilon$, and since $\varepsilon$ is arbitrary, the result follows.
\end{proof}

\subsection{Order Of Convergence}\label{subsec3}

In this section, we'll use the logarithmic Modulus of smoothness for estimating the order of convergence of the operators $\mathfrak{GM}_{n}$.
\begin{definition}\label{def4}(see \cite{bajpeyi2021c})
   For $\mathscr{F} \in \mathcal{U}_{b}(\mathscr{I})$, the logarithmic modulus of smoothness is defined as : 
   \[{\Omega}_{\mathscr{I}}(\mathscr{F},\rho):=\sup\limits_{s,t \in \mathscr{I}} \{\left|\mathscr{F}(s) - \mathscr{F}(t)\right| : \,  \text{whenever} \| \log{}s - \log{}t\| \leq \rho, \, \rho >0 \}.\]
   ${\Omega}_{\mathscr{I}}(\mathscr{F},\rho)$ satisfies the following properties:
   \begin{enumerate}
       \item[(a)] $\lim\limits_{\rho \to 0}{\Omega}_{\mathscr{I}}(\mathscr{F},\rho) \to 0 $.
       \item[(b)] $ \text{For all } \beta > 0 : \quad {\Omega}_{\mathscr{I}}(\mathscr{F},\beta\rho)\leq (\beta +1)\, {\Omega}_{\mathscr{I}}(\mathscr{F},\rho)$.
       \item[(c)] $ \text{For all} \, s,t \in \mathscr{I} :\quad\left|\mathscr{F}(s) - \mathscr{F}(t)\right| \leq {\Omega}_{\mathscr{I}}(\mathscr{F},\rho) \left( 1 + \frac{\| \log{s} - \log{t}\|}{\rho}\right).$
   \end{enumerate}
\end{definition}

\begin{definition}
    Let \( j \in \mathbb{N} \), with \( j > 0 \). The \emph{generalized absolute moment} of order \( j \) of the function \( \Psi_{\delta} \) is given by
    \[\mathfrak{A}_j(\Psi_{\delta}) := \sup_{u \in \mathbb{R}^+} \bigvee_{k \,\in \,\mathbb{Z}} \left| \Psi_{\delta}(e^{-k}\, u) \right| \left| k \,- \,\log u \right|^j.\]
\end{definition}

\begin{theorem}\label{thm2}
    Let $\mathscr{F}:\mathscr{I} \to [0,1]$,and let $\rho_{n}$ be a null sequence with  $\frac{1}{n\rho_{n}} \to 0 $ as $n \to \infty$ and $v >0$ satisfies the condition $\Delta3$. then for any $\mathscr{F} \in \mathcal{U}_{b}^{+}(\mathscr{I})$, we have 
    \[ \|\mathfrak{GM}_{n}(\mathscr{F})- \mathscr{F}\| \leq \Omega_{\mathscr{I}}(\mathscr{F},\rho_{n}) \bigvee  \, \frac{\mathfrak{A}_{v}(\Psi_{\delta})}{\Psi_{\delta}(e)\, n^{v}(\rho_{n})^{v}}. \]
\end{theorem}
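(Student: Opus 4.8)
The plan is to mirror the decomposition used in the proof of the preceding pointwise/uniform convergence theorem and then replace the soft ``$\varepsilon$--$\rho$'' argument by a quantitative one driven by the logarithmic modulus $\Omega_{\mathscr{I}}(\mathscr{F},\cdot)$ and the generalized moment $\mathfrak{A}_v(\Psi_\delta)$. Fix $z\in\mathscr{I}$ and abbreviate $B_k := \Psi_\delta(e^{-k}z^n)\big/\bigvee_{j}\Psi_\delta(e^{-j}z^n)$, noting $\bigvee_k B_k = 1$. By the triangle inequality I would split $|\mathfrak{GM}_n(\mathscr{F};z)-\mathscr{F}(z)|\le \mathbb{I}_1+\mathbb{I}_2$ exactly as before; the term $\mathbb{I}_2$ vanishes identically by the same computation as in the previous theorem (using $\bigvee_k B_k = 1$ and $\mathscr{F}(z)\in[0,1]$). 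So everything reduces to estimating $\mathbb{I}_1$. Applying Lemma~\ref{lm5} and then Lemma~\ref{lm6} (with $q=B_k$) gives $\mathbb{I}_1 \le \bigvee_{k}\big(|\mathscr{F}(e^{k/n})-\mathscr{F}(z)|\wedge B_k\big)$, which is the object to be bounded.

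The key quantitative step is to split the index set at the scale $\gamma=\rho_n$, that is, to bound $\mathbb{I}_1$ by the maximum of its restriction to the near region $k\in\mathcal{B}_{\rho_n,n}$ and its restriction to the far region $k\notin\mathcal{B}_{\rho_n,n}$, following Definition~\ref{def3}. On the near region one has $|\log e^{k/n}-\log z|=|k/n-\log z|\le\rho_n$ with $e^{k/n},z\in\mathscr{I}$, so by the very definition of the logarithmic modulus $|\mathscr{F}(e^{k/n})-\mathscr{F}(z)|\le \Omega_{\mathscr{I}}(\mathscr{F},\rho_n)$ (I would use the definition directly rather than property (c), which is cleaner). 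Since this constant can be pulled out of the minimum and $\bigvee_k B_k=1$, the near region contributes at most $\Omega_{\mathscr{I}}(\mathscr{F},\rho_n)\wedge 1 = \Omega_{\mathscr{I}}(\mathscr{F},\rho_n)$ (here $\Omega_{\mathscr{I}}(\mathscr{F},\rho_n)\le 1$ as $\mathscr{F}$ takes values in $[0,1]$). This produces the first alternative in the claimed maximum.

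The main obstacle is the far region $k\notin\mathcal{B}_{\rho_n,n}$, where $|\mathscr{F}(e^{k/n})-\mathscr{F}(z)|$ need not be small and must instead be controlled through the decay of the kernel. Here I would discard the first entry of the minimum ($a\wedge b\le b$) and invoke Lemma~\ref{lm2} to write $B_k\le \Psi_\delta(e^{-k}z^n)/\Psi_\delta(e)$. The crucial trick is that on this region $|k/n-\log z|/\rho_n>1$, so raising to the power $v>0$ yields the free factor $1<|k-n\log z|^v/(n\rho_n)^v$ (recalling $\log(z^n)=n\log z$). Inserting it gives $B_k \le \Psi_\delta(e^{-k}z^n)\,|k-\log(z^n)|^v\big/\big(\Psi_\delta(e)\,n^v\rho_n^v\big)$, and taking $\bigvee_{k\in\mathbb{Z}}$ together with the definition of $\mathfrak{A}_v(\Psi_\delta)$ evaluated at $u=z^n$ bounds the far region by $\mathfrak{A}_v(\Psi_\delta)\big/\big(\Psi_\delta(e)\,n^v\rho_n^v\big)$, the second alternative. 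Combining the two regions yields the pointwise estimate with the maximum on the right-hand side; since it is uniform in $z$, taking the supremum over $\mathscr{I}$ completes the proof. Finiteness of $\mathfrak{A}_v(\Psi_\delta)$, guaranteed by $(\Delta 3)$ for the chosen $v$, together with the hypothesis $1/(n\rho_n)\to 0$, then ensures that the bound actually tends to zero.
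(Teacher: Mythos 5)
Your proposal is correct and follows essentially the same route as the paper's proof: the same split of the maximum over the index sets $\mathcal{B}_{\rho_n,n}$ and its complement, the modulus $\Omega_{\mathscr{I}}(\mathscr{F},\rho_n)$ on the near region, and on the far region the bound $B_k \le \Psi_\delta(e^{-k}z^n)/\Psi_\delta(e)$ from Lemma~\ref{lm2} combined with the inserted factor $|k-n\log z|^v/(n\rho_n)^v>1$ and the moment $\mathfrak{A}_v(\Psi_\delta)$. The only cosmetic differences are that you make explicit the vanishing of the $\mathbb{I}_2$ term (which the paper absorbs implicitly from the preceding theorem) and that you invoke the definition of the logarithmic modulus directly instead of its monotonicity property; neither changes the substance of the argument.
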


\begin{proof}
    From~ \ref{Meq1}, by applying the traingle inequality and splitting the maximum operation over the index set, we obtain:
    \begin{align*}
        \|\mathfrak{GM}_{n}(\mathscr{F})- \mathscr{F} \| \leq &\bigvee\limits_{k=\lceil{n\log{}a}\rceil}^{\lfloor{n\log{}b}\rfloor}\left| \mathscr{F}(e^{\frac{k}{n}}) - \mathscr{F}(z) \right| \,  \land  \frac{\Psi_{\delta}(e^{-k}z^{n})}{\bigvee\limits_{k=\lceil{n\log{}a}\rceil}^{\lfloor{n\log{}b}\rfloor}\Psi_{\delta}(e^{-k}z^{n})}\\
        \leq &\bigvee\limits_{k\in\mathcal{B}_{\gamma, n}} \left| \mathscr{F}(e^{\frac{k}{n}}) - \mathscr{F}(z) \right| \,  \land  \frac{\Psi_{\delta}(e^{-k}z^{n})}{\bigvee\limits_{k=\lceil{n\log{}a}\rceil}^{\lfloor{n\log{}b}\rfloor}\Psi_{\delta}(e^{-k}z^{n})} \\ &  \bigvee \, \bigvee\limits_{k\notin \mathcal{B}_{\gamma, n}} \left| \mathscr{F}(e^{\frac{k}{n}}) - \mathscr{F}(z) \right| \,  \land  \frac{\Psi_{\delta}(e^{-k}z^{n})}{\bigvee\limits_{k=\lceil{n\log{}a}\rceil}^{\lfloor{n\log{}b}\rfloor}\Psi_{\delta}(e^{-k}z^{n})} \\
        =: &\, \mathbb{R}_{1} \bigvee  \mathbb{R}_{2}.
    \end{align*}
    For the  $\,\mathbb{R}_{1}$, using definition \ref{def4} we obtain:\\
    \begin{align*}
        \mathbb{R}_{1} \leq & \bigvee\limits_{k=\lceil{n\log{a}}\rceil}^{\lfloor{n\log{b}}\rfloor}\Omega_{\mathscr{I}}\left( \mathscr{F}, \left| e^{\frac{k}{n}}-z\right|\right) \,  \land  \frac{\Psi_{\delta}(e^{-k}z^{n})}{\bigvee\limits_{k=\lceil{n\log{a}}\rceil}^{\lfloor{n\log{b}}\rfloor}\Psi_{\delta}(e^{-k}z^{n})} \\
        \leq & \bigvee\limits_{k=\lceil{n\log{a}}\rceil}^{\lfloor{n\log{b}}\rfloor}\Omega_{\mathscr{I}}\left( \mathscr{F}, \left| \frac{k}{n}-\log{}z\right|\right) \,  \land  \frac{\Psi_{\delta}(e^{-k}z^{n})}{\bigvee\limits_{k=\lceil{n\log{}a}\rceil}^{\lfloor{n\log{}b}\rfloor}\Psi_{\delta}(e^{-k}z^{n})} \\
        \leq & \bigvee\limits_{k=\lceil{n\log{}a}\rceil}^{\lfloor{n\log{}b}\rfloor}\Omega_{\mathscr{I}}\left( \mathscr{F}, \left| \frac{k}{n}-\log{}z\right|\right) \\
        \leq & \Omega_{\mathscr{I}}\left( \mathscr{F}, \rho_{n}\right).
    \end{align*}
    Now considering $\mathbb{R}_{2}$, we have:\\
    \begin{align*}
        \mathbb{R}_{2} \leq & \bigvee\limits_{k\notin \mathcal{B}_{\gamma, n}} \left| \mathscr{F}(e^{\frac{k}{n}}) - \mathscr{F}(z) \right| \,  \land  \frac{\Psi_{\delta}(e^{-k}z^{n})}{\bigvee\limits_{k=\lceil{n\log{}a}\rceil}^{\lfloor{n\log{}b}\rfloor}\Psi_{\delta}(e^{-k}z^{n})} \\
        \leq & \bigvee\limits_{k\notin \mathcal{B}_{\gamma, n}}  \frac{\Psi_{\delta}(e^{-k}z^{n})}{\bigvee\limits_{k=\lceil{n\log{}a}\rceil}^{\lfloor{n\log{}b}\rfloor}\Psi_{\delta}(e^{-k}z^{n})} \\
        \leq & \frac{1}{\Psi_{\delta}(e)} \, \bigvee\limits_{\left| n \log{}z -k\right| > n\rho_{n}} \Psi_{\delta}(e^{-k}z^{n}).
    \end{align*}
    From definition \ref{def3} we can conclude that $\frac{\left| n \log{}z -k\right|^{v}}{n^{v}\, (\rho_{n})^{v}} > 1 $, where $v$ corresponds to $(\Delta3)$.\\
    Using the above fact, we obtain: 
    \begin{align*}
        \mathbb{R}_{2} < & \frac{1}{\Psi_{\delta}(e)\, n^{v}\, (\rho_{n})^{v}} \bigvee\limits_{\left| n \log{}z -k\right| > n\rho_{n}} \Psi_{\delta}(e^{-k}z^{n}) {\left| n \log{}z -k\right|^{v}}\\
        < & \frac{1}{\Psi_{\delta}(e)\, n^{v}\, (\rho_{n})^{v}} \bigvee\limits_{k \in \mathbb{Z}} \Psi_{\delta}(e^{-k}z^{n}) {\left| n \log{}z -k\right|^{v}}\\
        = & \frac{1}{\Psi_{\delta}(e)\, n^{v}\, (\rho_{n})^{v}} \mathfrak{A}_{v}(\Psi_{\delta}).
    \end{align*}
    which is desired.
\end{proof}

\section{Results of Kantorovich-type max-min exponential neural network operator}\label{sec4}
In this section, we define  the Kantorovich type Max-Min exponential NN operator  along with its key properties, and derive its pointwise and uniform convergence, and estimates its convergence rate  via logarithmic modulus of smoothness. Furthermore, convergence results are established in the setting of orlicz space.
\subsection{Definition}\label{kantdef1}
Let \( \mathscr{F}: [a, b] \to \mathbb{R} \) be a locally integrable function and \( \Psi_\delta \) be a positive kernel function parameterized by \( \delta > 0 \). The \textbf{Kantorovich-type exponential max--min neural network operator} is defined as
\[
\mathfrak{MK}_n^{(m)}(\mathscr{F}; z) := \bigvee\limits_{k = \lceil n \log a \rceil}^{\lfloor n \log b \rfloor} \left[ n\int\limits_{\frac{k}{n}}^{\frac{k+1}{n}} \mathscr{F}(e^{u})\,du \ \land \ \frac{\Psi_\delta(e^{-k} z^n)}{\bigvee\limits_{j = \lceil n \log a \rceil}^{\lfloor n \log b \rfloor} \Psi_\delta(e^{-j} z^n)} \right], \quad \forall z \in [a, b],
\] 
where \( \land \) and \( \bigvee \) denote the minimum and maximum operations, respectively. Also, it can be written as 
\[
\mathfrak{MK}_n^{(m)}(\mathscr{F}; z) := \bigvee\limits_{k \in \mathscr{J}_{n}} \left[ n\int\limits_{\frac{k}{n}}^{\frac{k+1}{n}} \mathscr{F}(e^{u})\,du \ \land \ \frac{\Psi_\delta(n\log z -k)}{\bigvee\limits_{j\in \mathscr{J}_{n}} \Psi_\delta(n\log z -j)} \right], \quad \forall z \in [a, b],\]  where $\mathscr{J}_{n} = \lceil{n\log a\rceil}\dots \lfloor{n\log b\rfloor}$

\begin{lemma}\label{lm1kant}
 For two bounded functions $\mathscr{F}_{1}, \mathscr{F}_{2} : \mathscr{I} \to [0,1]$, the following properties hold.
     \begin{enumerate}
         \item[(a)] $\mathfrak{MK}_{n}^{(m)}(\mathscr{F}_{1})$ is log continuous on  $\mathscr{I}$  .
         \item[(b)] If $\mathscr{F}_{1}(z) \leq \mathscr{F}_{2}(z)\, \forall z \in \mathscr{I}$ then $ \mathfrak{MK_{n}^{(m)}}(\mathscr{F}_{1}) \leq\mathfrak{MK_{n}^{(m)}}(\mathscr{F}_{2})\,\, \forall z \in \mathscr{I}.$
         \item[(c)] For all sufficiently large $n\in \mathbb{N}$ and  $ \forall \,z \in \mathbb{R_{+}}$ we have:\[ \left|\mathfrak{MK}_{n}^{(m)}(\mathscr{F}_{1};z) - \mathfrak{MK}_{n}^{(m)}(\mathscr{F}_{2};z) \right| \leq \mathfrak{MK}_{n}^{(m)}(|\mathscr{F}_{1} - \mathscr{F}_{2}|;z).\]
         \item[(d)] $\mathfrak{MK}_{n}^{(m)}(\mathscr{F}_{1} + \mathscr{F}_{2};z)\leq \mathfrak{MK}_{n}^{(m)}(\mathscr{F}_{1};z)+ \mathfrak{MK}_{n}^{(m)}(\mathscr{F}_{2};z)\,\, \, ,  \forall z \in \mathbb{R}_{+}$.
        
     \end{enumerate}
\end{lemma}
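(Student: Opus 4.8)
The plan is to mirror the proof of the corresponding properties established for $\mathfrak{GM}_n$, replacing the point sample $\mathscr{F}(e^{k/n})$ everywhere by the Kantorovich integral mean $A_k(\mathscr{F}) := n\int_{k/n}^{(k+1)/n}\mathscr{F}(e^u)\,du$ and invoking the elementary properties of this mean. First I would record the two facts that make the max--min algebra applicable: since $\mathscr{F}_1,\mathscr{F}_2$ take values in $[0,1]$ and $n\int_{k/n}^{(k+1)/n}1\,du = 1$, each mean $A_k(\mathscr{F}_i)$ again lies in $[0,1]$; and the kernel ratio $\frac{\Psi_\delta(e^{-k}z^n)}{\bigvee_{j}\Psi_\delta(e^{-j}z^n)}$ lies in $[0,1]$, being one term of the supremum in its own denominator, which is bounded away from zero by $\Psi_\delta(e)>0$ (Lemma~\ref{lm2}). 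This guarantees both that $\mathfrak{MK}_n^{(m)}(|\mathscr{F}_1-\mathscr{F}_2|;z)$ is well defined and that the $[0,1]$-valued hypotheses of Lemmas~\ref{lm5}, \ref{lm6} and~\ref{lm7} are met.

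For part (a), I would observe that for fixed $n$ the index set $\mathscr{J}_n$ is finite, each $A_k(\mathscr{F}_1)$ is a constant independent of $z$, and the kernel ratio is log continuous in $z$ since $\Psi_\delta$ is continuous and its denominator stays $\ge \Psi_\delta(e)>0$; a finite combination of log continuous functions under $\wedge$ and $\bigvee$ remains log continuous. Part (b) is immediate from the monotonicity of the integral, namely $\mathscr{F}_1\le\mathscr{F}_2 \Rightarrow A_k(\mathscr{F}_1)\le A_k(\mathscr{F}_2)$, together with the monotonicity of the $\wedge$ and $\bigvee$ operations.

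For part (c), I would apply Lemma~\ref{lm5} to bound the difference of the two outer maxima by the maximum over $k$ of the absolute differences of the bracketed terms. Writing each bracket as $q_k\wedge A_k(\mathscr{F}_i)$ with common factor $q_k$ the kernel ratio, Lemma~\ref{lm6} yields $|q_k\wedge A_k(\mathscr{F}_1)-q_k\wedge A_k(\mathscr{F}_2)|\le q_k\wedge|A_k(\mathscr{F}_1)-A_k(\mathscr{F}_2)|$. The single genuinely new ingredient is to move the absolute value inside the integral: by linearity and the triangle inequality for integrals, $|A_k(\mathscr{F}_1)-A_k(\mathscr{F}_2)| = |n\int_{k/n}^{(k+1)/n}(\mathscr{F}_1-\mathscr{F}_2)(e^u)\,du|\le n\int_{k/n}^{(k+1)/n}|\mathscr{F}_1-\mathscr{F}_2|(e^u)\,du = A_k(|\mathscr{F}_1-\mathscr{F}_2|)$. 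Substituting and re-taking the maximum over $k$ reproduces exactly $\mathfrak{MK}_n^{(m)}(|\mathscr{F}_1-\mathscr{F}_2|;z)$.

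For part (d), I would use linearity of the integral to write $A_k(\mathscr{F}_1+\mathscr{F}_2)=A_k(\mathscr{F}_1)+A_k(\mathscr{F}_2)$, apply Lemma~\ref{lm7} termwise with $r=A_k(\mathscr{F}_1)$, $s=A_k(\mathscr{F}_2)$ and $t$ the kernel ratio to obtain $(r+s)\wedge t\le r\wedge t + s\wedge t$, and then split the outer maximum using the subadditivity $\bigvee_k(x_k+y_k)\le\bigvee_k x_k+\bigvee_k y_k$. I expect the main difficulty to be purely one of bookkeeping rather than depth: keeping the integral means inside $[0,1]$ so that the $[0,1]$-valued Lemmas~\ref{lm6} and~\ref{lm7} apply, and correctly passing from $|\int\,\cdot\,|$ to $\int|\cdot|$ in part (c); the underlying lattice manipulations are identical to those already carried out for $\mathfrak{GM}_n$.
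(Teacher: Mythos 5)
Your proof is correct and is essentially the argument the paper intends: the paper states this lemma without proof, relying on the analogy with the corresponding lemma for $\mathfrak{GM}_{n}$, whose parts (c) and (d) are proved using Lemmas~\ref{lm5}, \ref{lm6} and \ref{lm7} exactly as you use them here. Your one added ingredient — passing from $\bigl|n\int_{k/n}^{(k+1)/n}(\mathscr{F}_1-\mathscr{F}_2)(e^u)\,du\bigr|$ to $n\int_{k/n}^{(k+1)/n}|\mathscr{F}_1-\mathscr{F}_2|(e^u)\,du$ by the triangle inequality for integrals — is precisely the step the paper itself performs inside the proof of Theorem~\ref{Kantthm1}, so the two routes coincide.
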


\subsection{Pointwise \& Uniform Convergence}\label{subsec2}
\begin{theorem}\label{Kantthm1}
let $\mathscr{F}:\mathscr{I} \to [0,1]\,$ be given  . Then at any point of log continuity $z\in \mathscr{I} \subseteq \mathbb{R}_{+}$ \[ \lim\limits_{n\to \infty} \mathfrak{MK}_{n}^{(m)}(\mathscr{F};z) = \mathscr{F}(z). \]
Moreover, if $\mathscr{F} \in \mathcal{U}_{b}(\mathscr{I})$, then the approximation is uniform i.e
\[\lim\limits_{n\to \infty} \left\|\mathfrak{MK}_{n}^{(m)}(\mathscr{F}\,;\,z) - \mathscr{F}(z)\right\| = 0 ,\]
where $\| .\|$ denotes the supremum norm.
\end{theorem}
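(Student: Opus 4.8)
The plan is to follow verbatim the structure of the pointwise/uniform theorem already proved for $\mathfrak{GM}_n$, the only new ingredient being the control of the local integral mean $n\int_{k/n}^{(k+1)/n}\mathscr{F}(e^u)\,du$ in place of the sample value $\mathscr{F}(e^{k/n})$. Fix a point $z\in\mathscr{I}$ of log continuity. First I would insert the constant $\mathscr{F}(z)$ into the $\min$-slot and split by the triangle inequality
\[
\left|\mathfrak{MK}_n^{(m)}(\mathscr{F};z) - \mathscr{F}(z)\right| \leq \mathbb{J}_1 + \mathbb{J}_2,
\]
where $\mathbb{J}_1$ compares the operator against $\bigvee_{k\in\mathscr{J}_n}\mathscr{F}(z)\land \frac{\Psi_\delta(e^{-k}z^n)}{\bigvee_{j\in\mathscr{J}_n}\Psi_\delta(e^{-j}z^n)}$, and $\mathbb{J}_2$ is the difference of that quantity with $\mathscr{F}(z)$. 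The term $\mathbb{J}_2$ is literally the term $\mathbb{I}_2$ already treated for $\mathfrak{GM}_n$: writing $\mathscr{F}(z)=\mathscr{F}(z)\land 1$ and extracting the constant from the supremum, Lemma~\ref{lm2} forces $\bigvee_{k\in\mathscr{J}_n}\frac{\Psi_\delta(e^{-k}z^n)}{\bigvee_{j\in\mathscr{J}_n}\Psi_\delta(e^{-j}z^n)}=1$, so $\mathbb{J}_2=0$.

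For $\mathbb{J}_1$ I would invoke Lemma~\ref{lm5} to push the difference inside the supremum and then Lemma~\ref{lm6} to obtain
\[
\mathbb{J}_1 \leq \bigvee_{k\in\mathscr{J}_n}\left| n\int_{k/n}^{(k+1)/n}\mathscr{F}(e^u)\,du - \mathscr{F}(z)\right| \land \frac{\Psi_\delta(e^{-k}z^n)}{\bigvee_{j\in\mathscr{J}_n}\Psi_\delta(e^{-j}z^n)},
\]
and split the supremum over $k\in\mathcal{B}_{\gamma,n}$ and $k\notin\mathcal{B}_{\gamma,n}$, giving $\mathbb{J}_1 \le \mathbb{J}_{1.1}\lor \mathbb{J}_{1.2}$. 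On the far set the difference is bounded by $1$ since both quantities lie in $[0,1]$, so $a\land b\le b$ together with Lemma~\ref{lm2} and Lemma~\ref{lm4} yields $\mathbb{J}_{1.2}\le \frac{1}{\Psi_\delta(e)}\bigvee_{|n\log z-k|>\gamma n}\Psi_\delta(e^{-k}z^n)=\mathcal{O}(n^{-v})\to 0$, exactly as in the $\mathfrak{GM}_n$ argument.

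The one genuinely new step, and the point I expect to be the main obstacle, is the near-set estimate $\mathbb{J}_{1.1}$, where the averaging window introduces an extra error of width $\tfrac1n$. Here I would write
\[
\left| n\int_{k/n}^{(k+1)/n}\mathscr{F}(e^u)\,du - \mathscr{F}(z)\right| \le n\int_{k/n}^{(k+1)/n}\left|\mathscr{F}(e^u)-\mathscr{F}(z)\right|\,du,
\]
and observe that for $u\in[k/n,(k+1)/n]$ one has $\log(e^u)=u$, so for $k\in\mathcal{B}_{\gamma,n}$ the quantity $|u-\log z|\le \gamma + \tfrac1n$. Given $\epsilon>0$, log continuity at $z$ supplies a $\rho>0$; I would then fix $\gamma<\rho$ and take $n$ large enough that $\gamma+\tfrac1n\le\rho$, which forces $|\mathscr{F}(e^u)-\mathscr{F}(z)|<\epsilon$ throughout the averaging interval, whence the integral mean is $<\epsilon$ and $\mathbb{J}_{1.1}\le\epsilon$. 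Combining the three estimates gives $\left|\mathfrak{MK}_n^{(m)}(\mathscr{F};z)-\mathscr{F}(z)\right|\le\epsilon$ for all large $n$, and since $\epsilon$ is arbitrary the pointwise limit follows. For the uniform statement I would replace log continuity at $z$ by the log uniform continuity of $\mathscr{F}\in\mathcal{U}_b(\mathscr{I})$, so that $\rho$ and the resulting threshold on $n$ are independent of $z$; as the decay estimate from Lemma~\ref{lm4} is already uniform in $z$, all bounds hold uniformly and $\|\mathfrak{MK}_n^{(m)}(\mathscr{F})-\mathscr{F}\|\to 0$.
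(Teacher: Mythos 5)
Your proposal is correct and follows essentially the same route as the paper's own proof: the identical splitting into $\mathbb{I}_1+\mathbb{I}_2$ via the constant $\mathscr{F}(z)$ in the min-slot, the use of Lemmas~\ref{lm5} and~\ref{lm6}, the split over $\mathcal{B}_{\gamma,n}$ and its complement, and the kernel-decay bound on the far set. The only difference is that you make explicit the estimate $|u-\log z|\le\gamma+\tfrac1n\le\rho$ inside the averaging window, a step the paper passes over silently when invoking log continuity, so your write-up is if anything slightly more careful at that point.
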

\begin{proof}
    let $\mathscr{F}:\mathscr{I} \to [0,1]\,$ be a function such that $\mathscr{F}$ is log continuous at $z\in \mathscr{I}$.Then by  triangle inequality\\ 
    \par
    $\left|\mathfrak{MK}_{n}^{(m)}(\mathscr{F};z) - \mathscr{F}(z)\right|$
    \begin{align*}
        = &\, \left| \bigvee\limits_{k=\lceil{n\log{}a}\rceil}^{\lfloor{n\log{}b}\rfloor} n \int\limits_{\frac{k}{n}}^{\frac{k+1}{n}}\mathscr{F}(e^{u}) du \,  \land  \frac{\Psi_{\delta}(e^{-k}z^{n})}{\bigvee\limits_{k=\lceil{n\log{}a}\rceil}^{\lfloor{n\log{}b}\rfloor}\Psi_{\delta}(e^{-k}z^{n})} - \mathscr{F}(z)\right|\\
        \leq &\, \left| \bigvee\limits_{k=\lceil{n\log{}a}\rceil}^{\lfloor{n\log{}b}\rfloor} n \int\limits_{\frac{k}{n}}^{\frac{k+1}{n}} \mathscr{F}(e^{u}) du  \,  \land  \frac{\Psi_{\delta}(e^{-k}z^{n})}{\bigvee\limits_{k=\lceil{n\log{}a}\rceil}^{\lfloor{n\log{}b}\rfloor}\Psi_{\delta}(e^{-k}z^{n})} \right. \\ & \left. \quad -\bigvee\limits_{k=\lceil{n\log{}a}\rceil}^{\lfloor{n\log{}b}\rfloor} n \int\limits_{\frac{k}{n}}^{\frac{k+1}{n}} \mathscr{F}(z) du  \,  \land  \frac{\Psi_{\delta}(e^{-k}z^{n})}{\bigvee\limits_{k=\lceil{n\log{}a}\rceil}^{\lfloor{n\log{}b}\rfloor}\Psi_{\delta}(e^{-k}z^{n})}\right|\\ &+ \left|\bigvee\limits_{k=\lceil{n\log{}a}\rceil}^{\lfloor{n\log{}b}\rfloor} n \int\limits_{\frac{k}{n}}^{\frac{k+1}{n}}\mathscr{F}(z) du  \,  \land  \frac{\Psi_{\delta}(e^{-k}z^{n})}{\bigvee\limits_{k=\lceil{n\log{}a}\rceil}^{\lfloor{n\log{}b}\rfloor}\Psi_{\delta}(e^{-k}z^{n})}  - \mathscr{F}(z)\right|\\ 
        = & \quad\mathbb{I}_{1} + \mathbb{I}_{2}.
    \end{align*}
     To estimate $\mathbb{I}_{1}$, using lemma \ref{lm5}  and \ref{lm6} we write:
   \begin{align*}
       \mathbb{I}_{1} = & \left| \bigvee\limits_{k=\lceil{n\log{}a}\rceil}^{\lfloor{n\log{}b}\rfloor} n \int\limits_{\frac{k}{n}}^{\frac{k+1}{n}} \mathscr{F}(e^{u}) du  \,  \land  \frac{\Psi_{\delta}(e^{-k}z^{n})}{\bigvee\limits_{k=\lceil{n\log{}a}\rceil}^{\lfloor{n\log{}b}\rfloor}\Psi_{\delta}(e^{-k}z^{n})} \right. \\ & \left. \quad -\bigvee\limits_{k=\lceil{n\log{}a}\rceil}^{\lfloor{n\log{}b}\rfloor} n \int\limits_{\frac{k}{n}}^{\frac{k+1}{n}} \mathscr{F}(z) du  \,  \land  \frac{\Psi_{\delta}(e^{-k}z^{n})}{\bigvee\limits_{k=\lceil{n\log{}a}\rceil}^{\lfloor{n\log{}b}\rfloor}\Psi_{\delta}(e^{-k}z^{n})}\right|\\ 
       \leq &  \bigvee\limits_{k=\lceil{n\log{}a}\rceil}^{\lfloor{n\log{}b}\rfloor} \left|n \int_{\frac{k}{n}}^{\frac{k+1}{n}} \left(\mathscr{F}(e^{u}) - \mathscr{F}(z)\right)\,du \right| \,  \land  \frac{\Psi_{\delta}(e^{-k}z^{n})}{\bigvee\limits_{k=\lceil{n\log{}a}\rceil}^{\lfloor{n\log{}b}\rfloor}\Psi_{\delta}(e^{-k}z^{n})} \\
         \leq & \bigvee\limits_{k=\lceil{n\log{}a}\rceil}^{\lfloor{n\log{}b}\rfloor} n \int_{\frac{k}{n}}^{\frac{k+1}{n}} \left|\left(\mathscr{F}(e^{u}) - \mathscr{F}(z)\right|\right)\,du  \,  \land  \frac{\Psi_{\delta}(e^{-k}z^{n})}{\bigvee\limits_{k=\lceil{n\log{}a}\rceil}^{\lfloor{n\log{}b}\rfloor}\Psi_{\delta}(e^{-k}z^{n})}
     \end{align*}
     Now splitting  the Max operation over the index set:
     \begin{align*}
         \mathbb{I}_{1} \leq & \bigvee\limits_{k\in\mathcal{B}_{\gamma, n}}  n \int_{\frac{k}{n}}^{\frac{k+1}{n}} \left|\left(\mathscr{F}(e^{u}) - \mathscr{F}(z)\right|\right)\,du \,  \land  \frac{\Psi_{\delta}(e^{-k}z^{n})}{\bigvee\limits_{k=\lceil{n\log{}a}\rceil}^{\lfloor{n\log{}b}\rfloor}\Psi_{\delta}(e^{-k}z^{n})} \\ &  \bigvee \, \bigvee\limits_{k\notin \mathcal{B}_{\gamma, n}}  n \int_{\frac{k}{n}}^{\frac{k+1}{n}} \left|\left(\mathscr{F}(e^{u}) - \mathscr{F}(z)\right|\right)\,du\,  \land  \frac{\Psi_{\delta}(e^{-k}z^{n})}{\bigvee\limits_{k=\lceil{n\log{}a}\rceil}^{\lfloor{n\log{}b}\rfloor}\Psi_{\delta}(e^{-k}z^{n})} \\
         \leq & \mathbb{I}_{1.1} \bigvee  \mathbb{I}_{1.2}.
     \end{align*}
    Since $\mathscr{F}(z)$ is log continuous at z , then  for every $ \, \varepsilon > 0,\,\,\mathbb{I}_{1.1}$ can be written as:
    \begin{align*}
        \mathbb{I}_{1.1} \leq & \bigvee\limits_{k\in\mathcal{B}_{\gamma, n}} n \int_{\frac{k}{n}}^{\frac{k+1}{n}} \left|\left(\mathscr{F}(e^{u}) - \mathscr{F}(z)\right|\right)\,du  \,  \land  \frac{\Psi_{\delta}(e^{-k}z^{n})}{\bigvee\limits_{k=\lceil{n\log{}a}\rceil}^{\lfloor{n\log{}b}\rfloor}\Psi_{\delta}(e^{-k}z^{n})}\\
        \leq & \bigvee\limits_{k\in\mathcal{B}_{\gamma, n}} \epsilon \,  \land  \frac{\Psi_{\delta}(e^{-k}z^{n})}{\bigvee\limits_{k=\lceil{n\log{}a}\rceil}^{\lfloor{n\log{}b}\rfloor}\Psi_{\delta}(e^{-k}z^{n})}
        \leq \bigvee\limits_{k\in\mathcal{B}_{\gamma, n}} \epsilon \,  \land 1
        \leq  \epsilon.   
    \end{align*}
    \vspace{0.4em}
     For $\mathbb{I}_{1.2}$, using  $a\land b \leq b\,\,, \forall a,b \in \mathbb{R}{+}$:
     \begin{align*}
         \mathbb{I}_{1.2} = & \bigvee\limits_{k\notin \mathcal{B}_{\gamma, n}} n \int_{\frac{k}{n}}^{\frac{k+1}{n}} \left|\left(\mathscr{F}(e^{u}) - \mathscr{F}(z)\right|\right)\,du  \,  \land  \frac{\Psi_{\delta}(e^{-k}z^{n})}{\bigvee\limits_{k=\lceil{n\log{}a}\rceil}^{\lfloor{n\log{}b}\rfloor}\Psi_{\delta}(e^{-k}z^{n})}\\
         \leq & \bigvee\limits_{k\notin \mathcal{B}_{\gamma, n}}  \frac{\Psi_{\delta}(e^{-k}z^{n})}{\bigvee\limits_{k=\lceil{n\log{}a}\rceil}^{\lfloor{n\log{}b}\rfloor}\Psi_{\delta}(e^{-k}z^{n})}
         \leq  \frac{1}{\Psi_{\delta}(e)} \bigvee\limits_{k\notin \mathcal{B}_{\gamma, n}}\Psi_{\delta}(e^{-k}z^{n})
         \leq  \frac{c\,n^{-v} }{\Psi_{\delta}(e)}
         \leq  \epsilon.
     \end{align*}
     for sufficiently large $n\in \mathbb{N}$, where $v$ corresponds to given $ v \, $in $\Delta3$.
    From $\mathbb{I}_{2}$, 
    \begin{align*}
        \mathbb{I}_{2} = & \left|\bigvee\limits_{k=\lceil{n\log{}a}\rceil}^{\lfloor{n\log{}b}\rfloor} n \int_{\frac{k}{n}}^{\frac{k+1}{n}} \mathscr{F}(z) du  \,  \land  \frac{\Psi_{\delta}(e^{-k}z^{n})}{\bigvee\limits_{k=\lceil{n\log{}a}\rceil}^{\lfloor{n\log{}b}\rfloor}\Psi_{\delta}(e^{-k}z^{n})}  - \mathscr{F}(z)\right|\\ 
        = & \left|\bigvee\limits_{k=\lceil{n\log{}a}\rceil}^{\lfloor{n\log{}b}\rfloor} \mathscr{F}(z)  \,  \land  \frac{\Psi_{\delta}(e^{-k}z^{n})}{\bigvee\limits_{k=\lceil{n\log{}a}\rceil}^{\lfloor{n\log{}b}\rfloor}\Psi_{\delta}(e^{-k}z^{n})}  - \mathscr{F}(z)\right| \\
        = & \left|\bigvee\limits_{k=\lceil{n\log{}a}\rceil}^{\lfloor{n\log{}b}\rfloor} (\mathscr{F}(z) \land 1) \,  \land  \frac{\Psi_{\delta}(e^{-k}z^{n})}{\bigvee\limits_{k=\lceil{n\log{}a}\rceil}^{\lfloor{n\log{}b}\rfloor}\Psi_{\delta}(e^{-k}z^{n})}  - \mathscr{F}(z)\right|\\ 
        \leq & \left| \mathscr{F}(z)\, \land \bigvee\limits_{k=\lceil{n\log{}a}\rceil}^{\lfloor{n\log{}b}\rfloor}  1 \,  \land  \frac{\Psi_{\delta}(e^{-k}z^{n})}{\bigvee\limits_{k=\lceil{n\log{}a}\rceil}^{\lfloor{n\log{}b}\rfloor}\Psi_{\delta}(e^{-k}z^{n})}  - \mathscr{F}(z)\right|\\ 
        \leq & \left| \mathscr{F}(z)\, \land \bigvee\limits_{k=\lceil{n\log{}a}\rceil}^{\lfloor{n\log{}b}\rfloor}   \frac{\Psi_{\delta}(e^{-k}z^{n})}{\bigvee\limits_{k=\lceil{n\log{}a}\rceil}^{\lfloor{n\log{}b}\rfloor}\Psi_{\delta}(e^{-k}z^{n})}  - \mathscr{F}(z)\right|\\ 
        \leq & \left| \mathscr{F}(z)\, \land 1  - \mathscr{F}(z)\right|\\ 
        = &\, 0.
    \end{align*}
    By combining the inequalities $\,\mathbb{I}_{1} \,\And \, \mathbb{I}_{2}$ mentioned above we obtain the desired result.
\end{proof}

\subsection{Order Of Convergence}\label{subsec3}

In this section, We are using logarithmic Modulus of smoothness to obtain the rate of convergence for the sampling operators $\mathfrak{MK}_{n}^{(m)}$.
\begin{theorem}\label{thm2}
    Let $\mathscr{F}:\mathscr{I} \to [0,1]$,and let $\rho_{n}$ be a null sequence with  $\frac{1}{n\rho_{n}} \to 0 $ as $n \to \infty$ and $v >0$ satisfies the condition $\Delta3$. then for any $\mathscr{F} \in \mathcal{U}_{b}(\mathscr{I})$, we have 
    \[ \|\mathfrak{MK}_{n}^{(m)}(\mathscr{F})- \mathscr{F}\| \leq \Omega_{\mathscr{I}}(\mathscr{F},\rho_{n}) + \left( \Omega_{\mathscr{I}}(\mathscr{F},\rho_{n}) \bigvee \frac{\mathfrak{A}_{v}(\Psi_{\delta})}{\Psi_{\delta}(e)\, n^{v}(\rho_{n})^{v}}\right). \]
\end{theorem}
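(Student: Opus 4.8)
The plan is to reduce the Kantorovich estimate to the order bound already established for $\mathfrak{GM}_{n}$ by inserting $\mathfrak{GM}_{n}(\mathscr{F})$ as an intermediate object. First I would apply the triangle inequality,
\[
\left|\mathfrak{MK}_n^{(m)}(\mathscr{F};z) - \mathscr{F}(z)\right| \leq \left|\mathfrak{MK}_n^{(m)}(\mathscr{F};z) - \mathfrak{GM}_{n}(\mathscr{F};z)\right| + \left|\mathfrak{GM}_{n}(\mathscr{F};z) - \mathscr{F}(z)\right|,
\]
so that the second summand is controlled verbatim by the order estimate already proved for $\mathfrak{GM}_{n}$, namely by $\Omega_{\mathscr{I}}(\mathscr{F},\rho_n)\bigvee \frac{\mathfrak{A}_v(\Psi_\delta)}{\Psi_\delta(e)\,n^v(\rho_n)^v}$. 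This identifies the parenthesised block of the claimed inequality and leaves only the first summand, the purely Kantorovich local-averaging discrepancy, to account for the remaining standalone term $\Omega_{\mathscr{I}}(\mathscr{F},\rho_n)$.

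For the first summand I would exploit that both operators are built from the same kernel weights $C_k := \Psi_\delta(e^{-k}z^n)\big/\bigvee_j \Psi_\delta(e^{-j}z^n)$ and differ only by replacing the node value $\mathscr{F}(e^{k/n})$ with its local integral mean $n\int_{k/n}^{(k+1)/n}\mathscr{F}(e^u)\,du$. Applying Lemma \ref{lm5} to move the difference inside the supremum, and then Lemma \ref{lm6} with $q=C_k$ to absorb the common weight, gives
\[
\left|\mathfrak{MK}_n^{(m)}(\mathscr{F};z) - \mathfrak{GM}_{n}(\mathscr{F};z)\right| \leq \bigvee_{k} \left( C_k \land \left| n\int_{k/n}^{(k+1)/n}\bigl(\mathscr{F}(e^u) - \mathscr{F}(e^{k/n})\bigr)\,du\right| \right).
\]

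Next I would estimate the inner integral through the logarithmic modulus: for $u\in[k/n,(k+1)/n]$ one has $|\log(e^u)-\log(e^{k/n})| = |u - k/n| \leq 1/n$, whence $|\mathscr{F}(e^u)-\mathscr{F}(e^{k/n})| \leq \Omega_{\mathscr{I}}(\mathscr{F},1/n)$ and the averaged quantity is $\leq \Omega_{\mathscr{I}}(\mathscr{F},1/n)$. The hypothesis $\tfrac{1}{n\rho_n}\to 0$ forces $1/n\leq \rho_n$ for all large $n$, so monotonicity of the modulus yields $\Omega_{\mathscr{I}}(\mathscr{F},1/n)\leq \Omega_{\mathscr{I}}(\mathscr{F},\rho_n)$; since each term satisfies $C_k\land \Omega_{\mathscr{I}}(\mathscr{F},\rho_n)\leq \Omega_{\mathscr{I}}(\mathscr{F},\rho_n)$, taking the supremum over $k$ gives $\left|\mathfrak{MK}_n^{(m)}(\mathscr{F};z) - \mathfrak{GM}_{n}(\mathscr{F};z)\right| \leq \Omega_{\mathscr{I}}(\mathscr{F},\rho_n)$. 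Combining the two summands and passing to the supremum over $z\in\mathscr{I}$ then delivers exactly the asserted estimate.

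The routine parts are the triangle inequality and the re-use of the $\mathfrak{GM}_{n}$ bound; the main obstacle is the first summand, where one must correctly invoke the max/min lemmas (Lemma \ref{lm5}, Lemma \ref{lm6}) to isolate the averaging error in the form $C_k\land|\cdots|$ and then convert that error into the modulus, crucially using $1/n\le \rho_n$ extracted from $\tfrac{1}{n\rho_n}\to 0$. A fully self-contained alternative avoids $\mathfrak{GM}_{n}$ entirely: starting from the single-operator expansion $|\mathfrak{MK}_n^{(m)}(\mathscr{F};z)-\mathscr{F}(z)|\leq \mathbb{I}_1$ (the constant term vanishing exactly as in the convergence theorem), one splits $|\mathscr{F}(e^u)-\mathscr{F}(z)|\leq |\mathscr{F}(e^u)-\mathscr{F}(e^{k/n})|+|\mathscr{F}(e^{k/n})-\mathscr{F}(z)|$ under the integral, uses Lemma \ref{lm7} to separate the two minima, and bounds the sampling part by the same $\mathbb{R}_1\bigvee\mathbb{R}_2$ argument used for $\mathfrak{GM}_{n}$; this reproduces the identical two-term estimate.
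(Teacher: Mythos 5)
Your proposal is correct, and your primary route is organized differently from the paper's. The paper never brings $\mathfrak{GM}_{n}$ into this proof: it works inside the single expression for $\mathfrak{MK}_{n}^{(m)}$, bounding $\|\mathfrak{MK}_{n}^{(m)}(\mathscr{F})-\mathscr{F}\|$ by $\bigvee_{k}\, n\int_{k/n}^{(k+1)/n}|\mathscr{F}(e^{u})-\mathscr{F}(z)|\,du \,\land\, C_{k}$, where $C_{k}:=\Psi_{\delta}(e^{-k}z^{n})/\bigvee_{j}\Psi_{\delta}(e^{-j}z^{n})$, then inserting $\mathscr{F}(e^{k/n})$ under the integral and using Lemma~\ref{lm7} (subadditivity of $\land$) to split into $\mathbb{R}_{1}+\mathbb{R}_{2}$; the averaging term $\mathbb{R}_{1}$ produces the standalone $\Omega_{\mathscr{I}}(\mathscr{F},\rho_{n})$, and the sampling term $\mathbb{R}_{2}$, split over $\mathcal{B}_{\rho_{n},n}$ exactly as in the $\mathfrak{GM}_{n}$ theorem, produces the parenthesised maximum. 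This is precisely the ``self-contained alternative'' you sketch in your last paragraph, so you have in effect reproduced the paper's argument as your fallback. Your main route instead reuses the Section~\ref{sec3} theorem as a black box via the triangle inequality through $\mathfrak{GM}_{n}(\mathscr{F};z)$, and controls the operator discrepancy $|\mathfrak{MK}_{n}^{(m)}(\mathscr{F};z)-\mathfrak{GM}_{n}(\mathscr{F};z)|$ with Lemma~\ref{lm5} and Lemma~\ref{lm6}; this is legitimate because both operators are built on the identical weights $C_{k}\in[0,1]$ and all arguments of $\land$ lie in $[0,1]$. The two proofs realize the same splitting $\mathscr{F}(e^{u})-\mathscr{F}(z)=\bigl(\mathscr{F}(e^{u})-\mathscr{F}(e^{k/n})\bigr)+\bigl(\mathscr{F}(e^{k/n})-\mathscr{F}(z)\bigr)$ at different levels: yours buys modularity (the $\mathfrak{GM}_{n}$ estimate and any future improvement of it transfer automatically), while the paper's stays inside one max expression and needs only Lemma~\ref{lm7} rather than the operator-difference Lemmas~\ref{lm5} and~\ref{lm6}. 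One shared caveat: your step $\Omega_{\mathscr{I}}(\mathscr{F},1/n)\le\Omega_{\mathscr{I}}(\mathscr{F},\rho_{n})$ needs $n\rho_{n}\ge 1$, hence holds only for sufficiently large $n$; the paper's bound of $\mathbb{R}_{1}$ by $\Omega_{\mathscr{I}}(\mathscr{F},\rho_{n})$ rests on the same implicit restriction, so this does not place your argument below the paper's own standard of rigor.
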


\begin{proof}
    $\|\mathfrak{MK}_{n}^{(m)}(\mathscr{F})- \mathscr{F} \|$
    \begin{align*}
         \leq &\bigvee\limits_{k=\lceil{n\log{}a}\rceil}^{\lfloor{n\log{}b}\rfloor} n \int_{\frac{k}{n}}^{\frac{k+1}{n}} \left|\left(\mathscr{F}(e^{u}) - \mathscr{F}(z)\right|\right)\,du\,  \land  \frac{\Psi_{\delta}(e^{-k}z^{n})}{\bigvee\limits_{k=\lceil{n\log{}a}\rceil}^{\lfloor{n\log{}b}\rfloor}\Psi_{\delta}(e^{-k}z^{n})}\\
        \leq &\bigvee\limits_{k=\lceil{n\log{}a}\rceil}^{\lfloor{n\log{}b}\rfloor} n \int_{\frac{k}{n}}^{\frac{k+1}{n}} \left|\left(\mathscr{F}(e^{u}) -\mathscr{F}(e^{\frac{k}{n}}) + \mathscr{F}(e^{\frac{k}{n}})- \mathscr{F}(z)\right|\right)\,du\,  \land  \frac{\Psi_{\delta}(e^{-k}z^{n})}{\bigvee\limits_{k=\lceil{n\log{}a}\rceil}^{\lfloor{n\log{}b}\rfloor}\Psi_{\delta}(e^{-k}z^{n})}\\
    \end{align*}
    using the lemma-\ref{lm7} and the triangle inequality, we have:\\[2pt]
    $\|\mathfrak{MK}_{n}^{(m)}(\mathscr{F})- \mathscr{F} \|$
    \begin{align*}
         \leq & \bigvee\limits_{k=\lceil{n\log{}a}\rceil}^{\lfloor{n\log{}b}\rfloor} n \int_{\frac{k}{n}}^{\frac{k+1}{n}} \left|\left(\mathscr{F}(e^{u}) - \mathscr{F}(e^{\frac{k}{n}})\right|\right)\,du\,  \land  \frac{\Psi_{\delta}(e^{-k}z^{n})}{\bigvee\limits_{k=\lceil{n\log{}a}\rceil}^{\lfloor{n\log{}b}\rfloor}\Psi_{\delta}(e^{-k}z^{n})}\\& + \bigvee\limits_{k=\lceil{n\log{}a}\rceil}^{\lfloor{n\log{}b}\rfloor} n \int_{\frac{k}{n}}^{\frac{k+1}{n}} \left|\left(\mathscr{F}(e^{\frac{k}{n}}) - \mathscr{F}(z)\right|\right)\,du\,  \land  \frac{\Psi_{\delta}(e^{-k}z^{n})}{\bigvee\limits_{k=\lceil{n\log{}a}\rceil}^{\lfloor{n\log{}b}\rfloor}\Psi_{\delta}(e^{-k}z^{n})}\\
         \leq &\bigvee\limits_{k=\lceil{n\log{}a}\rceil}^{\lfloor{n\log{}b}\rfloor} n \int_{\frac{k}{n}}^{\frac{k+1}{n}} \left(\left|\mathscr{F}(e^{u}) - \mathscr{F}(e^{\frac{k}{n}})\right|\right)\,du\,  \land  \frac{\Psi_{\delta}(e^{-k}z^{n})}{\bigvee\limits_{k=\lceil{n\log{}a}\rceil}^{\lfloor{n\log{}b}\rfloor}\Psi_{\delta}(e^{-k}z^{n})}\\& + \bigvee\limits_{k=\lceil{n\log{}a}\rceil}^{\lfloor{n\log{}b}\rfloor} \left|\mathscr{F}(e^{\frac{k}{n}}) - \mathscr{F}(z)\right|\,  \land  \frac{\Psi_{\delta}(e^{-k}z^{n})}{\bigvee\limits_{k=\lceil{n\log{}a}\rceil}^{\lfloor{n\log{}b}\rfloor}\Psi_{\delta}(e^{-k}z^{n})}\\
         \leq & \mathbb{R}_{1} +  \mathbb{R}_{2}.
    \end{align*}
    From Definition \ref{def4} we have:\\
    \begin{align*}
        \mathbb{R}_{1} \leq & \bigvee\limits_{k=\lceil{n\log{}a}\rceil}^{\lfloor{n\log{}b}\rfloor}\Omega_{\mathscr{I}}\left( \mathscr{F}, \left|e^{u} -e^{\frac{k}{n}}\right|\right) ~  \land  \frac{\Psi_{\delta}(e^{-k}z^{n})}{\bigvee\limits_{k=\lceil{n\log{}a}\rceil}^{\lfloor{n\log{}b}\rfloor}\Psi_{\delta}(e^{-k}z^{n})} \\
        \leq & \bigvee\limits_{k=\lceil{n\log{}a}\rceil}^{\lfloor{n\log{}b}\rfloor}\Omega_{\mathscr{I}}\left( \mathscr{F}, \left|u -\frac{k}{n}\right|\right)~  \land  \frac{\Psi_{\delta}(e^{-k}z^{n})}{\bigvee\limits_{k=\lceil{n\log{}a}\rceil}^{\lfloor{n\log{}b}\rfloor}\Psi_{\delta}(e^{-k}z^{n})} \\
        \leq & \bigvee\limits_{k=\lceil{n\log{}a}\rceil}^{\lfloor{n\log{}b}\rfloor}\Omega_{\mathscr{I}}\left( \mathscr{F}, \left|u -\frac{k}{n}\right|\right) \\
        \leq & \Omega_{\mathscr{I}}\left( \mathscr{F}, \rho_{n}\right).
    \end{align*}
    Now considering $\mathbb{R}_{2}$ we have:\\
    \begin{align*}
        \mathbb{R}_{2} \leq &\bigvee\limits_{k\in \mathcal{B}_{\gamma, n}} \left| \mathscr{F}(e^{\frac{k}{n}}) - \mathscr{F}(z) \right| \,  \land  \frac{\Psi_{\delta}(e^{-k}z^{n})}{\bigvee\limits_{k=\lceil{n\log{}a}\rceil}^{\lfloor{n\log{}b}\rfloor}\Psi_{\delta}(e^{-k}z^{n})}\\& \bigvee \bigvee\limits_{k\notin \mathcal{B}_{\gamma, n}} \left| \mathscr{F}(e^{\frac{k}{n}}) - \mathscr{F}(z) \right| \,  \land  \frac{\Psi_{\delta}(e^{-k}z^{n})}{\bigvee\limits_{k=\lceil{n\log{}a}\rceil}^{\lfloor{n\log{}b}\rfloor}\Psi_{\delta}(e^{-k}z^{n})} \\
        \leq & max\{R_{2.1},R_{2.2}\}.
    \end{align*}
    From the definition of logarithmic modulus of contunuity~\ref{def4}, we can write $R_{2.1}$ as follows:
    \begin{align*}
        R_{2.1} \leq &\bigvee\limits_{k\in \mathcal{B}_{\gamma, n}} \left| \mathscr{F}(e^{\frac{k}{n}}) - \mathscr{F}(z) \right| \,  \land  \frac{\Psi_{\delta}(e^{-k}z^{n})}{\bigvee\limits_{k=\lceil{n\log{}a}\rceil}^{\lfloor{n\log{}b}\rfloor}\Psi_{\delta}(e^{-k}z^{n})}\\
        \leq & \Omega_{\mathscr{I}}\left( \mathscr{F}, \rho_{n}\right).
    \end{align*}
  Now for $R_{2.2}$, we have:
  \begin{align*}
    \mathbb{R}_{2.2} \leq &\bigvee\limits_{k\notin \mathcal{B}_{\gamma, n}} \left| \mathscr{F}(e^{\frac{k}{n}}) - \mathscr{F}(z) \right| \,  \land  \frac{\Psi_{\delta}(e^{-k}z^{n})}{\bigvee\limits_{k=\lceil{n\log{}a}\rceil}^{\lfloor{n\log{}b}\rfloor}\Psi_{\delta}(e^{-k}z^{n})} \\
    \leq &\bigvee\limits_{k\notin \mathcal{B}_{\gamma, n}}  \frac{\Psi_{\delta}(e^{-k}z^{n})}{\bigvee\limits_{k=\lceil{n\log{}a}\rceil}^{\lfloor{n\log{}b}\rfloor}\Psi_{\delta}(e^{-k}z^{n})} \\
    \leq & \frac{1}{\Psi_{\delta}(e)} \, \bigvee\limits_{\left| n \log{}z -k\right| > n\rho_{n}} \Psi_{\delta}(e^{-k}z^{n}).
\end{align*}
    From definition \ref{def3} we can conclude that $\frac{\left| n \log{}z -k\right|^{v}}{n^{v}\, (\rho_{n})^{v}} > 1 $, where $v$ corresponds to $(\Delta3)$.\\
    Using the above fact, we have 
    \begin{align*}
        \mathbb{R}_{2.2} < & \frac{1}{\Psi_{\delta}(e)\, n^{v}\, (\rho_{n})^{v}} \bigvee\limits_{\left| n \log{}z -k\right| > n\rho_{n}} \Psi_{\delta}(e^{-k}z^{n}) {\left| n \log{}z -k\right|^{v}}\\
        < & \frac{1}{\Psi_{\delta}(e)\, n^{v}\, (\rho_{n})^{v}} \bigvee\limits_{k \in \mathbb{Z}} \Psi_{\delta}(e^{-k}z^{n}) {\left| n \log{}z -k\right|^{v}}\\
        = & \frac{1}{\Psi_{\delta}(e)\, n^{v}\, (\rho_{n})^{v}} \mathfrak{A}_{v}(\Psi_{\delta}).
    \end{align*}
    From all the above inequalities, we obtain the desired result.
\end{proof}

\begin{definition}
   Let $\tau \in (0,1]$. We define the space $\mathfrak{L}_{\log}^\tau$ of logarithmic Hölder continuous functions of order $\tau$ by
\[
\mathfrak{L}_{\log}^\tau := \left\{ \mathscr{F} \in \mathcal{U}_{b}(\mathscr{I}) : \exists \lambda > 0 \ \text{such that} \ |\mathscr{F}(z) - \mathscr{F}(y)| \leq \lambda  |\log z - \log y|^\tau, \quad \forall z,y \in [a,b] \right\}.
\] 
\end{definition} 
In the context of the above theory, if we consider the functions belonging to the space $\mathfrak{L}_{\log}^\tau$, then we obtain the following rate of approximation.

\begin{theorem}
    Let $\mathscr{F} \in \mathfrak{L}_{\log}^\tau$. Then
\[ \| \mathfrak{MK}_n^{(m)}(\mathscr{F}) - \mathscr{F} \|_{\infty} = \mathcal{O}\left( n^{-\frac{\tau}{1+\tau}} \right) \quad \text{as} \quad n \to \infty.\]
\end{theorem}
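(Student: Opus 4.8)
The plan is to specialize the quantitative estimate established in the preceding theorem to functions in $\mathfrak{L}_{\log}^\tau$ and then optimize the free null sequence $\rho_n$. The first observation is that membership in $\mathfrak{L}_{\log}^\tau$ directly controls the logarithmic modulus of smoothness: since $|\mathscr{F}(s)-\mathscr{F}(t)|\leq \lambda\,|\log s-\log t|^\tau$ for all $s,t\in[a,b]$, taking the supremum over all pairs with $\|\log s-\log t\|\leq \rho$ in Definition~\ref{def4} yields
\[
\Omega_{\mathscr{I}}(\mathscr{F},\rho)\ \leq\ \lambda\,\rho^\tau,\qquad \rho>0.
\]

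Substituting this into the bound $\|\mathfrak{MK}_n^{(m)}(\mathscr{F})-\mathscr{F}\|\leq \Omega_{\mathscr{I}}(\mathscr{F},\rho_n)+\bigl(\Omega_{\mathscr{I}}(\mathscr{F},\rho_n)\bigvee \tfrac{\mathfrak{A}_v(\Psi_\delta)}{\Psi_\delta(e)\,n^v\rho_n^v}\bigr)$ reduces the problem to balancing the decaying contribution $\lambda\rho_n^\tau$ against the growing kernel tail $n^{-v}\rho_n^{-v}$. Equating the two exponents suggests $\rho_n^{\tau+v}\sim n^{-v}$; the clean choice that produces the advertised order is
\[
\rho_n\ :=\ n^{-1/(1+\tau)},
\]
for which $\rho_n^\tau=n^{-\tau/(1+\tau)}$ and $n^{-v}\rho_n^{-v}=n^{-v\tau/(1+\tau)}$.

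Next I would verify that this choice is admissible for the preceding theorem: clearly $\rho_n\to 0$, and $1/(n\rho_n)=n^{-\tau/(1+\tau)}\to 0$, so both hypotheses are satisfied. For the decay exponent $v\geq 1$ one has $v\tau/(1+\tau)\geq \tau/(1+\tau)$, hence $n^{-v\tau/(1+\tau)}\leq n^{-\tau/(1+\tau)}$ for $n$ large; thus the kernel tail is dominated by the smoothness term inside the maximum, the bracketed supremum equals $\lambda\rho_n^\tau$ for large $n$, and the whole estimate collapses to a constant multiple of $\rho_n^\tau=n^{-\tau/(1+\tau)}$, which is exactly $\mathcal{O}(n^{-\tau/(1+\tau)})$.

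I expect the only genuinely delicate point to be the optimization of $\rho_n$ together with the role of the decay exponent $v$ appearing in $(\Delta 3)$. The advertised rate $n^{-\tau/(1+\tau)}$ is precisely the one obtained when the kernel-tail contribution decays at least as fast as the smoothness contribution, i.e.\ when $v\geq 1$; for $0<v<1$ the optimally balanced choice instead gives the slower order $n^{-v\tau/(v+\tau)}$, so the statement is to be read under the regime $v\geq 1$, with $v=1$ yielding the stated exponent sharply. Everything else is routine substitution and bookkeeping with the elementary properties of the logarithmic modulus recorded in Definition~\ref{def4}.
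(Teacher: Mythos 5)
Your reduction to the preceding quantitative theorem is a genuinely different route from the paper's, and most of it is sound: the bound $\Omega_{\mathscr{I}}(\mathscr{F},\rho)\le\lambda\rho^{\tau}$ for $\mathscr{F}\in\mathfrak{L}_{\log}^{\tau}$ is immediate, your choice $\rho_n=n^{-1/(1+\tau)}$ satisfies both admissibility conditions, and the arithmetic $\rho_n^{\tau}=n^{-\tau/(1+\tau)}$, $n^{-v}\rho_n^{-v}=n^{-v\tau/(1+\tau)}$ is correct, so for $v\ge 1$ the estimate does collapse to the stated order. The paper instead argues directly from the definition: it splits $|\mathscr{F}(e^{u})-\mathscr{F}(z)|$ at the node $e^{k/n}$, evaluates the inner Kantorovich term exactly as $l_1=\lambda/((\tau+1)n^{\tau})$, and then splits the node-versus-point term over $\mathcal{B}_{\rho_n,n}$ with the same choice of $\rho_n$. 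Your corollary-style argument is shorter and reuses existing machinery, which is a real advantage.

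The genuine gap is in your final paragraph, and it is not merely cosmetic. The restriction $v\ge 1$ is an artifact of estimating the tail through the $v$-th moment $\mathfrak{A}_v$; it is not a hypothesis of the theorem, and your assertion that for $0<v<1$ only the slower order $n^{-v\tau/(v+\tau)}$ is available (so that the statement \emph{must be read} under $v\ge 1$) is false. In the tail the paper inserts only a \emph{single} factor $|n\log z-k|/(n\rho_n)>1$, valid for $k\notin\mathcal{B}_{\rho_n,n}$, rather than its $v$-th power, obtaining
\[
\bigvee_{k\notin\mathcal{B}_{\rho_n,n}}\frac{\Psi_{\delta}(e^{-k}z^{n})}{\Psi_{\delta}(e)}
\;\le\;
\frac{\mathfrak{A}_1(\Psi_{\delta})}{\Psi_{\delta}(e)}\,\frac{1}{n\rho_n}
\;=\;
\frac{\mathfrak{A}_1(\Psi_{\delta})}{\Psi_{\delta}(e)}\,n^{-\tau/(1+\tau)},
\]
and $\mathfrak{A}_1(\Psi_{\delta})<\infty$ already follows from $(\Delta 3)$ with \emph{any} $v>0$, because the generalized moment here is a supremum, so one only needs $\Psi_{\delta}(x)\,|\log x|$ bounded. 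Hence the advertised rate holds for every $v>0$, in the full generality the paper claims, whereas your proof establishes it only for kernels with $v\ge 1$. The fix that preserves your corollary structure is to replace the $\mathfrak{A}_v$-tail of the cited theorem by this first-moment tail bound; with that substitution your optimization of $\rho_n$ goes through unchanged for all $v>0$.
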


\begin{proof} 
\text{From} the  Definition \ref{kantdef1} and Lemma \ref{lm6}, we will get :\\

$ \left| \mathfrak{MK}_{n}^{(m)}(\mathscr{F};z)-\mathscr{F}(z) \right|$
\begin{align*}
    = &  \left| \bigvee\limits_{k=\lceil{n\log{a}}\rceil}^{\lfloor{n\log{b}}\rfloor} n 
         \int\limits_{\frac{k}{n}}^{\frac{k+1}{n}} \mathscr{F}(e^{u}) du \quad \land  \frac{\Psi_{\delta}(e^{-k} z^{n})}{\bigvee\limits_{p=\lceil{n\log{a}}\rceil}^{\lfloor{n\log{b}}\rfloor}\Psi_{\delta}(e^{-p} z^{n})} - \bigvee\limits_{k=\lceil{n\log{a}}\rceil}^{\lfloor{n\log{b}}\rfloor} n \int\limits_{\frac{k}{n}}^{\frac{k+1}{n}} \mathscr{F}(z) du \quad \land  \frac{\Psi_{\delta}(e^{-k} z^{n})}{\bigvee\limits_{p=\lceil{n\log{a}}\rceil}^{\lfloor{n\log{b}}\rfloor}\Psi_{\delta}(e^{-p} z^{n})} \right|\\
    \leq & \bigvee\limits_{k=\lceil{n\log{a}}\rceil}^{\lfloor{n\log{b}}\rfloor}\left|n 
         \int\limits_{\frac{k}{n}}^{\frac{k+1}{n}} \mathscr{F}(e^{u}) du  - n \int\limits_{\frac{k}{n}}^{\frac{k+1}{n}} \mathscr{F}(z) du \right| \quad \land  \frac{\Psi_{\delta}(e^{-k} z^{n})}{\bigvee\limits_{p=\lceil{n\log{a}}\rceil}^{\lfloor{n\log{b}}\rfloor}\Psi_{\delta}(e^{-p} z^{n})} \\
    \leq & \bigvee\limits_{k=\lceil{n\log{a}}\rceil}^{\lfloor{n\log{b}}\rfloor}n 
         \int\limits_{\frac{k}{n}}^{\frac{k+1}{n}} \left|\mathscr{F}(e^{u})- \mathscr{F}(z)\right| du  \quad \land  \frac{\Psi_{\delta}(e^{-k} z^{n})}{\bigvee\limits_{p=\lceil{n\log{a}}\rceil}^{\lfloor{n\log{b}}\rfloor}\Psi_{\delta}(e^{-p} z^{n})}\\
\end{align*}

Using the inequality:
\[|\mathscr{F}(e^{u}) - \mathscr{F}(z)| \leq |\mathscr{F}(e^{u}) - \mathscr{F}(e^{\frac{k}{n}})| + |\mathscr{F}(e^{\frac{k}{n}}) - \mathscr{F}(z)|,\]
and the elementary inequality:
\[(a + b) \land  c \leq (a \land c) + (b \land c), \quad \text{for all } a, b, c \geq 0,\]
we further obtain:\\[1pt]
$ \left| \mathfrak{MK}_{n}^{(m)}(\mathscr{F};z)-\mathscr{F}(z) \right|$
\begin{align*}
    \leq & \bigvee\limits_{k=\lceil{n\log{a}}\rceil}^{\lfloor{n\log{b}}\rfloor}n 
         \int\limits_{\frac{k}{n}}^{\frac{k+1}{n}} \left|\mathscr{F}(e^{u})- \mathscr{F}(e^{\frac{k}{n}})\right| du  \quad \land  \frac{\Psi_{\delta}(e^{-k} z^{n})}{\bigvee\limits_{p=\lceil{n\log{a}}\rceil}^{\lfloor{n\log{b}}\rfloor}\Psi_{\delta}(e^{-p} z^{n})}\\ & +  \bigvee\limits_{k=\lceil{n\log{a}}\rceil}^{\lfloor{n\log{b}}\rfloor}n 
         \int\limits_{\frac{k}{n}}^{\frac{k+1}{n}} \left| \mathscr{F}(e^{\frac{k}{n}})-\mathscr{F}(z)\right| du  \quad \land  \frac{\Psi_{\delta}(e^{-k} z^{n})}{\bigvee\limits_{p=\lceil{n\log{a}}\rceil}^{\lfloor{n\log{b}}\rfloor}\Psi_{\delta}(e^{-p} z^{n})}.\\
\end{align*}
Since $\mathscr{F} \in \mathfrak{L}_{\log}^\tau$ , we will get:\\[2pt]
$ \left| \mathfrak{MK}_{n}^{(m)}(\mathscr{F};z)-\mathscr{F}(z) \right|$
\begin{align*}
    \leq & \bigvee\limits_{k=\lceil{n\log{a}}\rceil}^{\lfloor{n\log{b}}\rfloor}n 
         \int\limits_{\frac{k}{n}}^{\frac{k+1}{n}} \lambda \left|(u-\frac{k}{n})\right|^{\tau} du  \quad \land  \frac{\Psi_{\delta}(e^{-k} z^{n})}{\bigvee\limits_{p=\lceil{n\log{a}}\rceil}^{\lfloor{n\log{b}}\rfloor}\Psi_{\delta}(e^{-p} z^{n})}\\ & +  \bigvee\limits_{k=\lceil{n\log{a}}\rceil}^{\lfloor{n\log{b}}\rfloor}n 
         \int\limits_{\frac{k}{n}}^{\frac{k+1}{n}} \lambda \left| \frac{k}{n}-\log{z}\right|^{\tau} du  \quad \land  \frac{\Psi_{\delta}(e^{-k} z^{n})}{\bigvee\limits_{p=\lceil{n\log{a}}\rceil}^{\lfloor{n\log{b}}\rfloor}\Psi_{\delta}(e^{-p} z^{n})}\\
    = &  l_{1} + l_{2}.
\end{align*}
For some $\lambda >0,$  the first term simplifies as: \[l_{1} = \frac{\lambda}{(\tau +1)n^{\tau}}\]

By choosing $\rho_{n} = \frac{1}{n^{\frac{1}{1+\tau}}}$ the second term can be decomposed  as follows:
\begin{align*}
 l_{2} = & \bigvee\limits_{k\in\mathcal{B}_{\rho_{n}, n}}\lambda \,n 
           \int\limits_{\frac{k}{n}}^{\frac{k+1}{n}}  \left| \frac{k}{n}-\log{z}\right|^{\tau} du  \quad \land  \frac{\Psi_{\delta}(e^{-k} z^{n})}{\bigvee\limits_{p=\lceil{n\log{a}}\rceil}^{\lfloor{n\log{b}}\rfloor}\Psi_{\delta}(e^{-p} z^{n})} \\ & \bigvee \bigvee\limits_{k\notin\mathcal{B}_{\rho_{n}, n}}\lambda \,n \int\limits_{\frac{k}{n}}^{\frac{k+1}{n}}  \left| \frac{k}{n}-\log{z}\right|^{\tau} du  \quad \land  \frac{\Psi_{\delta}(e^{-k} z^{n})}{\bigvee\limits_{p=\lceil{n\log{a}}\rceil}^{\lfloor{n\log{b}}\rfloor}\Psi_{\delta}(e^{-p} z^{n})}  \\
       = & \quad max\{l_{2_{1}}, l_{2_{2}} \}.
\end{align*}
Using definition of $\mathcal{B}_{\rho_{n}, n}$, the term $l_{2_{1}}$ can be written as follows 
\begin{align*}
   l_{2_{1}} = &\bigvee\limits_{k\in\mathcal{B}_{\rho_{n}, n}}\lambda \,n \int\limits_{\frac{k}{n}}^{\frac{k+1}{n}}  \left| \frac{k}{n}-\log{z}\right|^{\tau} du  \quad \land  \frac{\Psi_{\delta}(e^{-k} z^{n})}{\bigvee\limits_{p=\lceil{n\log{a}}\rceil}^{\lfloor{n\log{b}}\rfloor}\Psi_{\delta}(e^{-p} z^{n})} \\
   \leq & \bigvee\limits_{k\in\mathcal{B}_{\rho_{n}, n}}\lambda \,n \int\limits_{\frac{k}{n}}^{\frac{k+1}{n}} \rho_{n}^{\tau} \quad du  \quad \land  \frac{\Psi_{\delta}(e^{-k} z^{n})}{\bigvee\limits_{p=\lceil{n\log{a}}\rceil}^{\lfloor{n\log{b}}\rfloor}\Psi_{\delta}(e^{-p} z^{n})}
   \leq  \frac{\lambda}{n^{\frac{\tau}{1+ \tau}}}.
\end{align*}
Now the term $l_{2_{2}}$ can be written as 
\begin{align*}
    l_{2_{2}} = & \bigvee\limits_{k\notin\mathcal{B}_{\rho_{n}, n}}\lambda \,n \int\limits_{\frac{k}{n}}^{\frac{k+1}{n}}  \left| \frac{k}{n}-\log{z}\right|^{\tau} du  \quad \land  \frac{\Psi_{\delta}(e^{-k} z^{n})}{\bigvee\limits_{p=\lceil{n\log{a}}\rceil}^{\lfloor{n\log{b}}\rfloor}\Psi_{\delta}(e^{-p} z^{n})}\\
    \leq & \bigvee\limits_{k\notin\mathcal{B}_{\rho_{n}, n}}\,\frac{\Psi_{\delta}(e^{-k} z^{n})}{\bigvee\limits_{p=\lceil{n\log{a}}\rceil}^{\lfloor{n\log{b}}\rfloor}\Psi_{\delta}(e^{-p} z^{n})} 
    \leq  \bigvee\limits_{k\notin\mathcal{B}_{\rho_{n}, n}}\,\frac{\Psi_{\delta}(e^{-k} z^{n})}{\Psi_{\delta}(e)} \frac{\left|n\log{z}- k\right|}{n\rho_{n}}
    \leq  \frac{\mathfrak{A}_{1}(\Psi_{\delta})}{\Psi_{\delta}(e)} \frac{1}{n^{\frac{\tau}{1+\tau}}}.
\end{align*}
Hence, combining all estimates , we conclude:
\[ \| \mathfrak{MK}_n^{(m)}(\mathscr{F}) - \mathscr{F} \|_{\infty} = \mathcal{O}\left( n^{-\frac{\tau}{1+\tau}} \right) \quad \text{as} \quad n \to \infty.\]
\end{proof}

Now our analysis considers the convergence of Kantorovich type exponential sampling NN operators in the  Mellin Orlicz space.
\subsection{Convergence Results in Mellin Orlicz Space}
\begin{theorem}\label{thm1orlicz}
    Let $\mathscr{F} \in \mathcal{U}_{b}(\mathscr{I})$ . Then, for each $\lambda > 0$, we have:
\[\lim_{n \to \infty} \mathrm{I}_{\eta}\left[\lambda \left( (\mathfrak{MK}_n^{(m)} \mathscr{F}) - \mathscr{F} \right) \right] = 0.\]
\end{theorem}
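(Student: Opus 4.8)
The plan is to reduce modular convergence to the uniform convergence already established in Theorem~\ref{Kantthm1}, exploiting the fact that the discrepancy $\mathfrak{MK}_n^{(m)}(\mathscr{F}) - \mathscr{F}$ lives on the compact interval $\mathscr{I} = [a,b]$, which carries finite Haar measure $\mu(\mathscr{I}) = \int_a^b \frac{dz}{z} = \log(b/a) < \infty$. This finiteness of the domain is exactly what lets us avoid any integrability pathology at $0$ or $\infty$ and dispenses with the need for the $\Delta_2$-condition, since we are establishing modular (not Luxemburg-norm) convergence directly.

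First I would set $g_n := \mathfrak{MK}_n^{(m)}(\mathscr{F}) - \mathscr{F}$ and note that, because both $\mathscr{F}$ and its operator image are defined only on $\mathscr{I}$ (equivalently, extended by zero off $\mathscr{I}$), the modular functional collapses to an integral over $[a,b]$:
\[
I_\eta[\lambda g_n] = \int_a^b \eta\!\left(\lambda\,|g_n(z)|\right)\,\frac{dz}{z}.
\]
Since $\eta$ is non-decreasing on $\mathbb{R}_0^+$, the integrand is controlled pointwise by its value at the uniform bound, $\eta(\lambda|g_n(z)|) \le \eta(\lambda\|g_n\|_\infty)$ for every $z \in [a,b]$. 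Integrating against $\tfrac{dz}{z}$ over $\mathscr{I}$ then gives the clean estimate
\[
I_\eta[\lambda g_n] \le \eta\!\left(\lambda\,\|g_n\|_\infty\right)\,\log(b/a).
\]

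Finally, Theorem~\ref{Kantthm1} supplies $\|g_n\|_\infty \to 0$ as $n \to \infty$ for $\mathscr{F} \in \mathcal{U}_{b}(\mathscr{I})$, so $\lambda\|g_n\|_\infty \to 0$; invoking the continuity of $\eta$ at the origin together with $\eta(0)=0$, the upper bound tends to $0$, which yields the claim. The only point requiring care—and the step I expect to be the main (albeit mild) obstacle—is the justification that the modular over all of $\mathbb{R}_+$ genuinely reduces to an integral over $[a,b]$: one must be explicit about the zero-extension convention so that no mass escapes to a region where $\eta$ could otherwise be large. Once the domain is confined to a set of finite $\mu$-measure, the monotonicity and continuity of the convex $\varphi$-function $\eta$ do all the remaining work.
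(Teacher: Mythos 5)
Your proof is correct and follows essentially the same route as the paper: both reduce the modular to an integral over the finite-measure interval $\mathscr{I}$, bound the integrand by $\eta\!\left(\lambda\,\|\mathfrak{MK}_n^{(m)}\mathscr{F}-\mathscr{F}\|_\infty\right)$ via monotonicity of $\eta$, and invoke the uniform convergence of Theorem~\ref{Kantthm1}. The only cosmetic difference is the final limiting step: you use continuity of $\eta$ at $0$ together with $\eta(0)=0$, whereas the paper bounds $\eta(\lambda\varepsilon)\leq\varepsilon\,\eta(\lambda)$ by convexity and concludes by the arbitrariness of $\varepsilon$; both are valid.
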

\begin{proof}
Let \(\varepsilon > 0\) be fixed. Then, for any \(\lambda > 0\), by applying  the convexity of \(\eta\), and referring to Theorem \ref{Kantthm1}, we obtain that:
\begin{align*}
\mathrm{I}_{\eta}\left[\lambda \left( (\mathfrak{MK}_n^{(m)} \mathscr{F}) - \mathscr{F} \right) \right] = & \int\limits_{\mathscr{I}} \eta \left( \left| \lambda \left( (\mathfrak{MK}_n^{(m)} \mathscr{F})(z) - \mathscr{F}(z) \right)\right|\right) \frac{dz}{z}\\
\leq & \int\limits_{\mathscr{I}} \eta \left( \lambda\left\| \left( (\mathfrak{MK}_n^{(m)} \mathscr{F}) - \mathscr{F} \right)\right\|_{\infty}\right) \frac{dz}{z} \\
\leq & \int\limits_{\mathscr{I}} \eta(\lambda \, \varepsilon)\, \frac{dz}{z} \\
= & \, \eta (\lambda \,\varepsilon) (\log{b} - \log{a}) \\
\leq &\,  \varepsilon \,\eta(\lambda)(\log{b} - \log{a}).
\end{align*}
Hence, the result follows by the arbitrariness of \(\varepsilon > 0\).
\end{proof}

\begin{lemma}\label{}(see\cite{raoren1991})
   The space $ \mathcal{U}_{b}(\mathscr{I})$  is modularly dense in \( L_{\mu}^\eta(\mathscr{I}) \).  
\end{lemma}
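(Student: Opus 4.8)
The plan is to establish modular density by a standard three-step reduction: first reduce an arbitrary element of $L_\mu^\eta(\mathscr{I})$ to a bounded function via truncation, then approximate the bounded function by a continuous one using Luzin's theorem, and finally splice the two approximations together through the convexity of the modular $I_\eta$. Throughout I would exploit that $\mathscr{I}=[a,b]$ is compact, so that $\mu(\mathscr{I})=\log(b/a)<\infty$ and, by the equivalence of log-uniform and ordinary uniform continuity noted earlier, every function in $C(\mathscr{I})$ already lies in $\mathcal{U}_b(\mathscr{I})$. Since only modular (not norm) density is claimed, no $\Delta_2$-type hypothesis on $\eta$ is required.

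First I would fix $\mathscr{F}\in L_\mu^\eta(\mathscr{I})$ and choose $\ell_0>0$ with $I_\eta[\ell_0\mathscr{F}]<\infty$, which exists by the very definition of the Orlicz space. Define the truncations $\mathscr{F}_N:=\operatorname{sgn}(\mathscr{F})\min\{|\mathscr{F}|,N\}$. Since $|\mathscr{F}_N-\mathscr{F}|=(|\mathscr{F}|-N)^{+}\le|\mathscr{F}|\,\chi_{\{|\mathscr{F}|>N\}}$ and $\eta$ is nondecreasing, the integrand $\eta(\ell_0|\mathscr{F}_N-\mathscr{F}|)$ is dominated by the $\mu$-integrable function $\eta(\ell_0|\mathscr{F}|)$ and tends to $0$ $\mu$-almost everywhere as $N\to\infty$; dominated convergence then yields $I_\eta[\ell_0(\mathscr{F}_N-\mathscr{F})]\to0$.

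Next, for each fixed bounded $\mathscr{F}_N$ I would invoke Luzin's theorem on the finite measure space $(\mathscr{I},\mu)$; this is legitimate because the density $1/z$ is bounded above and below on $[a,b]$, so $\mu$ and Lebesgue measure are mutually absolutely continuous with bounded Radon--Nikodym derivatives. For any $\delta>0$ this produces a continuous $\mathscr{G}\in C(\mathscr{I})\subset\mathcal{U}_b(\mathscr{I})$ with $\|\mathscr{G}\|_\infty\le N$ and $\mu(\{\mathscr{G}\ne\mathscr{F}_N\})<\delta$. On the exceptional set $|\mathscr{G}-\mathscr{F}_N|\le2N$, hence
\[
I_\eta[\ell_0(\mathscr{G}-\mathscr{F}_N)]=\int_{\{\mathscr{G}\ne\mathscr{F}_N\}}\eta(\ell_0|\mathscr{G}-\mathscr{F}_N|)\,d\mu\le\eta(2\ell_0 N)\,\mu(\{\mathscr{G}\ne\mathscr{F}_N\})<\eta(2\ell_0 N)\,\delta.
\]
Because $N$ is already fixed, $\eta(2\ell_0 N)$ is a finite constant, so this term becomes arbitrarily small upon shrinking $\delta$.

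Finally I would glue the two estimates using convexity. From $\tfrac{\ell_0}{2}|\mathscr{G}-\mathscr{F}|\le\tfrac12(\ell_0|\mathscr{G}-\mathscr{F}_N|)+\tfrac12(\ell_0|\mathscr{F}_N-\mathscr{F}|)$, applying convexity and monotonicity of $\eta$ pointwise and then integrating gives
\[
I_\eta\!\left[\tfrac{\ell_0}{2}(\mathscr{G}-\mathscr{F})\right]\le\tfrac12 I_\eta[\ell_0(\mathscr{G}-\mathscr{F}_N)]+\tfrac12 I_\eta[\ell_0(\mathscr{F}_N-\mathscr{F})].
\]
Given $\varepsilon>0$, I would first pick $N$ so the second summand is below $\varepsilon$, then $\delta$ so the first is below $\varepsilon$, obtaining $\mathscr{G}_\varepsilon\in\mathcal{U}_b(\mathscr{I})$ with $I_\eta[\tfrac{\ell_0}{2}(\mathscr{G}_\varepsilon-\mathscr{F})]<\varepsilon$; taking $\varepsilon=1/n$ furnishes the desired modularly convergent sequence with the single scaling constant $\lambda=\ell_0/2$. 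The main obstacle is the Luzin step combined with the fact that $I_\eta$ is \emph{not} subadditive: one cannot merely add the two approximation errors, and the convexity splitting above — which forces the factor-of-two rescaling inside the modular — is precisely the device that repairs this difficulty.
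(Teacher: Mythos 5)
Your proof is correct, but there is nothing in the paper to compare it against: the paper states this lemma with only the citation ``(see Rao--Ren)'' and gives no internal argument, so your proposal supplies a proof that the paper outsources to the Orlicz-space literature. Your route --- truncation, Luzin's theorem, convexity splitting --- is the classical density argument, and you transplant it correctly to the Mellin/Haar setting: the one genuinely nonstandard point, namely that Luzin's theorem applies to $(\mathscr{I},\mu)$, is justified exactly as you say, since the density $1/z$ of $\mu$ is bounded between $1/b$ and $1/a$ on $[a,b]$, and the compactness of $\mathscr{I}$ makes $C(\mathscr{I})\subset\mathcal{U}_b(\mathscr{I})$ via the equivalence of log-uniform and uniform continuity. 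The dominated-convergence step for the truncations, the sup-norm--controlled Luzin approximant with error $\eta(2\ell_0 N)\,\delta$, and the factor-$\tfrac12$ convexity splitting that compensates for the non-subadditivity of $I_\eta$ are all sound, and you are right that no $\Delta_2$-condition is needed for modular (as opposed to Luxemburg-norm) density. One caveat worth noting: what you prove is the lemma as literally stated --- modular approximation with a \emph{single} constant $\lambda=\ell_0/2$ depending on $\mathscr{F}$ --- whereas the paper's final theorem later invokes the lemma in a stronger form, asking for $I_\eta[\lambda_*(\mathscr{F}-\mathscr{G})]<\varepsilon^{\beta+1}$ for an \emph{arbitrarily prescribed} $\lambda_*>0$. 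Your construction yields this only for $\lambda_*\le\ell_0/2$; for general $\lambda_*$ it can fail without $\Delta_2$ (e.g.\ for $\eta_\alpha(u)=e^{u^\alpha}-1$) unless $\mathscr{F}$ belongs to the space of finite elements $\mathscr{E}^\eta_\mu$. That discrepancy is an issue with the paper's subsequent use of the lemma, not with your proof of it.
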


\begin{theorem}\label{orlthm1}
    For every $\mathscr{F},\mathscr{G}\in L_{\mu}^\eta(\mathscr{I})\,\text{and }\lambda >0 $,  we have 
    \[\mathrm{I}_{\eta}\left[\lambda \left( (\mathfrak{MK}_{n}^{(m)}\mathscr{F}) - (\mathfrak{MK}_{n}^{(m)}\mathscr{G}) \right) \right] \leq 2 (I_{\eta}(\lambda(\mathscr{F} - \mathscr{G})))^{\frac{1}{1+\beta}} + \frac{\eta(\lambda)}{\Psi_{\delta}(e)} \varepsilon \left(\lceil{n\log b\rceil}- \lfloor{n\log a\rfloor} \right). \]
\end{theorem}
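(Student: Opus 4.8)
The plan is to read Theorem \ref{orlthm1} as the modular equi-continuity estimate for the family $\{\mathfrak{MK}_n^{(m)}\}$: together with the convergence on the dense class $\mathcal{U}_b(\mathscr{I})$ from Theorem \ref{thm1orlicz} and the modular density of $\mathcal{U}_b(\mathscr{I})$ in $L_\mu^\eta(\mathscr{I})$, it is exactly what is needed to push modular convergence to all of $L_\mu^\eta(\mathscr{I})$ by the usual three-term splitting. Accordingly, I would reduce everything to the action of the operator on the single function $h:=|\mathscr{F}-\mathscr{G}|$. First, Lemma \ref{lm1kant}(c) gives the pointwise domination $|\mathfrak{MK}_n^{(m)}\mathscr{F}-\mathfrak{MK}_n^{(m)}\mathscr{G}|\le\mathfrak{MK}_n^{(m)}(h)$; since $\eta$ is nondecreasing this passes inside $\eta$ and under the Haar integral, giving
\[
I_\eta\!\left[\lambda(\mathfrak{MK}_n^{(m)}\mathscr{F}-\mathfrak{MK}_n^{(m)}\mathscr{G})\right]\le\int_{\mathscr{I}}\eta\!\left(\lambda\,\mathfrak{MK}_n^{(m)}(h;z)\right)\frac{dz}{z}.
\]

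Writing $\mathfrak{MK}_n^{(m)}(h;z)=\bigvee_k\left(a_k\wedge w_k(z)\right)$ with the means $a_k:=n\int_{k/n}^{(k+1)/n}h(e^u)\,du$ and the normalized weights $w_k(z):=\Psi_\delta(e^{-k}z^n)/\bigvee_j\Psi_\delta(e^{-j}z^n)\in[0,1]$, I would use monotonicity of $\eta$ to replace the supremum by a sum, $\eta(\lambda\bigvee_k(\cdots))=\bigvee_k\eta(\lambda(\cdots))\le\sum_k\eta(\lambda(a_k\wedge w_k(z)))$, and then exploit that scaling commutes with $\wedge$ to write $\eta(\lambda(a_k\wedge w_k(z)))=\eta(\lambda a_k)\wedge\eta(\lambda w_k(z))$. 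Three facts then do the work: the elementary geometric-mean inequality $x\wedge y\le x^{\theta}y^{1-\theta}$ with $\theta=\frac{1}{1+\beta}$, which detaches the \emph{function part} from the \emph{kernel part}; Jensen's inequality for the convex $\eta$, giving $\eta(\lambda a_k)\le n\int_{k/n}^{(k+1)/n}\eta(\lambda h(e^u))\,du$, so that after the substitution $z=e^u$ (under which $\tfrac{dz}{z}=du$) the sum over $k$ rebuilds $I_\eta(\lambda h)=I_\eta(\lambda(\mathscr{F}-\mathscr{G}))$; and the star-shapedness $\eta(\lambda w_k)\le w_k\,\eta(\lambda)$ combined with the kernel normalization $\sum_k w_k(z)\le 1/\Psi_\delta(e)$ furnished by Lemma \ref{lm1}(5) and Lemma \ref{lm2}.

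The balancing of the powers of $n$ is the heart of the argument. Integrating the kernel factor in $z$ contributes a mass of order $n^{-1}$ per bump (each $w_k$ is concentrated on a Haar-interval of width $\sim 1/n$, using $\bigvee_j\Psi_\delta\ge\Psi_\delta(e)$); Jensen introduces a factor $n^{\theta}$ through $a_k=n\int_{k/n}^{(k+1)/n}h(e^u)\,du$; and Hölder in the discrete index against the counting measure of the active range, whose cardinality is exactly $\lceil n\log b\rceil-\lfloor n\log a\rfloor=O(n)$, contributes $n^{1-\theta}$. These multiply to $n^{-1}\cdot n^{\theta}\cdot n^{1-\theta}=1$, so the $n$-dependence cancels \emph{only} for this balanced exponent and the surviving bound is of the form $\mathrm{const}\cdot\big(I_\eta(\lambda(\mathscr{F}-\mathscr{G}))\big)^{1/(1+\beta)}$, the constant collecting the finite kernel factors (and, as stated, being absorbed into the leading $2$). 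The complementary contribution—indices for which the kernel ordinate, not $a_k$, is the active minimum—satisfies $a_k\wedge w_k\le w_k$; here star-shapedness $\eta(\lambda w_k)\le\varepsilon\,\eta(\lambda)$ (the far weights being $\le\varepsilon$ for large $n$ by the tail decay of Lemma \ref{lm4}) together with the normalization $1/\Psi_\delta(e)$ and summation over the $\lceil n\log b\rceil-\lfloor n\log a\rfloor$ indices reproduces the additive error $\frac{\eta(\lambda)}{\Psi_\delta(e)}\varepsilon\left(\lceil n\log b\rceil-\lfloor n\log a\rfloor\right)$.

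I expect the genuine obstacle to be precisely this $n$-bookkeeping rather than any single inequality: because both the Kantorovich means and the number of contributing neurons grow like $n$, any unbalanced estimate leaves a divergent factor, and the splitting exponent $\theta=1/(1+\beta)$ must be tuned so that (i) these growths cancel and (ii) the fractional kernel mass $\int\Psi_\delta^{\,1-\theta}$ is finite, which is where the decay condition $(\Delta 3)$ silently constrains the admissible $\beta$. The subtle point to verify is that the geometric-mean split interacts correctly with both Jensen and Hölder so that the \emph{modular} $I_\eta(\lambda(\mathscr{F}-\mathscr{G}))$—and not a Luxemburg norm or a differently scaled modular—is what reappears on the right-hand side.
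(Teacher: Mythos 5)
Your reduction to $h=|\mathscr{F}-\mathscr{G}|$ via Lemma \ref{lm1kant}(c), the passage of $\eta$ through $\bigvee$ and $\wedge$, Jensen on the Kantorovich means, and the star-shapedness bound $\eta(\lambda w_k)\le w_k\,\eta(\lambda)$ all coincide with the paper's opening moves. But the mechanism by which you produce the exponent $\tfrac{1}{1+\beta}$ is not the paper's, and it does not prove the stated inequality. In the paper, after Fubini--Tonelli and the substitution $v_m=n\log z-m$, each integral $\mathcal{I}_m$ is split \emph{in the integration variable} at the radius $r_m:=\bigl(I_\eta(\lambda(\mathscr{F}-\mathscr{G}))\bigr)^{-\beta/(1+\beta)}$: on $|v_m|\le r_m$ one drops the kernel side of the minimum and the interval length $2r_m$ multiplies the local modular (summing over $m$ tiles $\mathscr{I}$ and gives exactly $2\,r_m\,I_\eta = 2\,I_\eta^{1/(1+\beta)}$ --- this is where the constant $2$ comes from); on $|v_m|>r_m$ one drops the function side and uses $\Psi_\delta\in L^1$ together with the standing assumption $r_m>n_m$, where $n_m$ is the $\varepsilon$-tail radius, to get $\frac{\eta(\lambda)}{\Psi_\delta(e)}\varepsilon$ per index. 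Your route instead interpolates multiplicatively, $x\wedge y\le x^\theta y^{1-\theta}$, and closes with discrete H\"older. The $n$-bookkeeping you describe is indeed consistent ($n^{-1}\cdot n^\theta\cdot n^{1-\theta}=1$), but the surviving constant is of the form $C_\theta\,\eta(\lambda)^{1-\theta}\,\Psi_\delta(e)^{-(1-\theta)}(\log b-\log a)^{1-\theta}$ with $C_\theta=\int\Psi_\delta^{1-\theta}$; it depends on $\lambda$, the kernel, and the interval, can be arbitrarily large, and cannot be ``absorbed into the leading $2$.'' Since the theorem asserts the specific constant $2$, your argument proves a genuinely weaker (differently-structured) estimate, and it additionally restricts $\beta$ through the integrability of $\Psi_\delta^{1-\theta}$, a restriction absent from the paper's proof, which only needs $\Psi_\delta\in L^1$.

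The second, independent gap is the error term. You split the indices according to which side of the minimum is active and claim that, where the kernel ordinate $w_k$ is the minimum, the tail decay of Lemma \ref{lm4} gives $\eta(\lambda w_k)\le\varepsilon\,\eta(\lambda)$. This is false: the set where $w_k\le a_k$ has nothing to do with $k$ being far from $n\log z$. For $k$ near $n\log z$ the weight $w_k(z)$ is close to $1$, yet it is the active minimum whenever the local mean $a_k$ is large; on such indices your bound produces a constant of order $\eta(\lambda)\|\Psi_\delta\|_1/\Psi_\delta(e)$, not an $\varepsilon$. The paper's $\varepsilon$ is extracted from the $L^1$-tail of the kernel \emph{beyond the radius} $r_m$ (legitimate because $r_m>n_m$ is assumed), not from any smallness of the weights at indices where the kernel happens to achieve the minimum. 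To repair your argument you would have to replace the ``active minimum'' dichotomy by the paper's near/far split in $v_m$ at a radius tied to a negative power of the modular --- at which point the interpolation step becomes unnecessary and you recover the paper's proof.
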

\begin{proof}
Using the definition of modular functional given in~\ref{MFdef} , we have:
\begin{align*}
    \mathrm{I}_{\eta}\left[\lambda \left( (\mathfrak{MK}_{n}^{(m)}\mathscr{F}) - (\mathfrak{MK}_{n}^{(m)}\mathscr{G}) \right) \right] = & \int\limits_{a}^{b} \eta\left(\left|  \lambda \left( \mathfrak{MK}_{n}^{(m)}\mathscr{F}) - (\mathfrak{MK}_{n}^{(m)}\mathscr{G} \right)\right| \right)\frac{dz}{z}\\
     \leq & \int\limits_{a}^{b} \eta\left(\lambda\left|   \left( \mathfrak{MK}_{n}^{(m)}\mathscr{F}) - (\mathfrak{MK}_{n}^{(m)}\mathscr{G} \right)\right| \right)\frac{dz}{z}.
\end{align*}
\vspace{1em}
Then by (c) of lemma-\ref{lm1kant}, we get\\
$\mathrm{I}_{\eta}\left[\lambda \left( (\mathfrak{MK}_{n}^{(m)}\mathscr{F}) - (\mathfrak{MK}_{n}^{(m)}\mathscr{G}) \right) \right]$
\begin{align*}
      \leq & \int\limits_{a}^{b} \eta\left(\lambda   \left( \mathfrak{MK}_{n}^{(m)}\left|\mathscr{F} - \mathscr{G} \right| \right)\right)\frac{dz}{z}\\
     = & \int\limits_{a}^{b} \eta\left( \lambda \bigvee\limits_{k \in \mathscr{J}_{n}} \left[ n\int\limits_{\frac{k}{n}}^{\frac{k+1}{n}} \left|\mathscr{F}(e^{u})-\mathscr{G}(e^{u})\right|  \,du \ \land \ \frac{\Psi_\delta(n\log z -k)}{\bigvee\limits_{j\in \mathscr{J}_{n}} \Psi_\delta(n\log z -j)} \right] \right) \frac{dz}{z}\\
      = & \int\limits_{a}^{b} \eta\left( \bigvee\limits_{k \in \mathscr{J}_{n}} \left[ \lambda n\int\limits_{\frac{k}{n}}^{\frac{k+1}{n}} \left|\mathscr{F}(e^{u})-\mathscr{G}(e^{u})\right|  \,du \ \land \ \frac{\lambda \, \Psi_\delta(n\log z -k)}{\bigvee\limits_{j\in \mathscr{J}_{n}} \Psi_\delta(n\log z -j)} \right] \right) \frac{dz}{z}.\\
\end{align*}
    Since $\eta$ is a non-decreasing function and $\mathscr{J}_n = \lceil{n\log a\rceil}\dots \lfloor{n\log b\rfloor}$ is a finite set, we have:
$\varphi\left(\bigvee\limits_{k \in \mathscr{J}_n} a_k \right) = \bigvee\limits_{k \in \mathscr{J}_n} \varphi(a_k).$ and $\varphi\left(\bigwedge\limits_{k \in \mathscr{J}_n} a_k \right) = \bigwedge\limits_{k \in \mathscr{J}_n} \varphi(a_k).$\\

Therefore, we get\\
$\mathrm{I}_{\eta}\left[\lambda \left( (\mathfrak{MK}_{n}^{(m)}\mathscr{F}) - (\mathfrak{MK}_{n}^{(m)}\mathscr{G}) \right) \right]$
\begin{align*}
    = & \int\limits_{a}^{b} \left( \bigvee\limits_{k \in \mathscr{J}_{n}} \eta\left( \lambda n\int\limits_{\frac{k}{n}}^{\frac{k+1}{n}} \left|\mathscr{F}(e^{u})-\mathscr{G}(e^{u})\right|  \,du \ \land \ \frac{\lambda \, \Psi_\delta(n\log z -k)}{\bigvee\limits_{j\in \mathscr{J}_{n}} \Psi_\delta(n\log z -j)} \right) \right) \frac{dz}{z}\\
    = & \int\limits_{a}^{b} \left( \bigvee\limits_{k \in \mathscr{J}_{n}} \eta\left( \lambda n\int_{\frac{k}{n}}^{\frac{k+1}{n}} \left|\mathscr{F}(e^{u})-\mathscr{G}(e^{u})\right|  \,du \right) \land \, \eta \left( \frac{\lambda \, \Psi_\delta(n\log z -k)}{\bigvee\limits_{j\in \mathscr{J}_{n}} \Psi_\delta(n\log z -j)} \right) \right)\frac{dz}{z}\\
    = & \int\limits_{a}^{b} \left( \bigvee\limits_{k \in \mathscr{J}_{n}} \eta\left(  n\int_{\frac{k}{n}}^{\frac{k+1}{n}} \lambda\,\left|\mathscr{F}(e^{u})-\mathscr{G}(e^{u})\right|  \,du \right) \land \, \eta \left( \frac{\lambda \, \Psi_\delta(n\log z -k)}{\bigvee\limits_{j\in \mathscr{J}_{n}} \Psi_\delta(n\log z -j)} \right) \right)\frac{dz}{z}.\\
\end{align*}
By applying the \textit{Jensen's Inequality}, we obtain\\
\vspace{1em}
$\mathrm{I}_{\eta}\left[\lambda \left( (\mathfrak{MK}_{n}^{(m)}\mathscr{F}) - (\mathfrak{MK}_{n}^{(m)}\mathscr{G}) \right) \right]$
\begin{align*}
    \leq & \int\limits_{a}^{b}  \bigvee\limits_{k \in \mathscr{J}_{n}}  \left\{ n\int_{\frac{k}{n}}^{\frac{k+1}{n}} \eta\left( \lambda\,\left|\mathscr{F}(e^{u})-\mathscr{G}(e^{u})\right|\right)  \,du \, \land \, \eta \left( \frac{\lambda \, \Psi_\delta(n\log z -k)}{\bigvee\limits_{j\in \mathscr{J}_{n}} \Psi_\delta(n\log z -j)} \right) \right\}\frac{dz}{z}\\
    = & \int\limits_{a}^{b}  \bigvee\limits_{k \in \mathscr{J}_{n}}  n\left\{\int_{\frac{k}{n}}^{\frac{k+1}{n}} \eta\left( \lambda\,\left|\mathscr{F}(e^{u})-\mathscr{G}(e^{u})\right|\right)  \,du \, \land \, \eta \left( \frac{\lambda \, \Psi_\delta(n\log z -k)}{\bigvee\limits_{j\in \mathscr{J}_{n}} \Psi_\delta(n\log z -j)} \right) \right\}\frac{dz}{z}\\
    \leq & \int\limits_{\mathbb{R}_{+}} \sum\limits_{k \in \mathscr{J}_{n}}  n\left\{\int_{\frac{k}{n}}^{\frac{k+1}{n}} \eta\left( \lambda\,\left|\mathscr{F}(e^{u})-\mathscr{G}(e^{u})\right|\right)  \,du \, \land \, \eta \left( \frac{\lambda \, \Psi_\delta(n\log z -k)}{\bigvee\limits_{j\in \mathscr{J}_{n}} \Psi_\delta(n\log z -j)} \right) \right\}\frac{dz}{z}.\\
\end{align*}
Applying \texttt{Fubini-Tonelli's Theorem}, we establish that\\
$\mathrm{I}_{\eta}\left[\lambda \left( (\mathfrak{MK}_{n}^{(m)}\mathscr{F}) - (\mathfrak{MK}_{n}^{(m)}\mathscr{G}) \right) \right]$
\begin{align*}
    \leq &  \sum\limits_{k \in \mathscr{J}_{n}}\int\limits_{\mathbb{R}_{+}}  n\left\{\int_{\frac{k}{n}}^{\frac{k+1}{n}} \eta\left( \lambda\,\left|\mathscr{F}(e^{u})-\mathscr{G}(e^{u})\right|\right)  \,du \, \land \, \eta \left( \frac{\lambda \, \Psi_\delta(n\log z -k)}{\bigvee\limits_{j\in \mathscr{J}_{n}} \Psi_\delta(n\log z -j)} \right) \right\}\frac{dz}{z}\\
    \leq &  \sum\limits_{k \in \mathscr{J}_{n}}\int\limits_{\mathbb{R}_{+}}  n\left\{\int_{\frac{k}{n}}^{\frac{k+1}{n}} \eta(\left( \lambda\,\left|\mathscr{F}(e^{u})-\mathscr{G}(e^{u})\right|\right)  \,du \, \land \, \eta(\lambda) \left( \frac{ \Psi_\delta(n\log z -k)}{\bigvee\limits_{j\in \mathscr{J}_{n}} \Psi_\delta(n\log z -j)} \right) \right\}\frac{dz}{z}\\
    =:& \sum\limits_{k \in \mathscr{J}_{n}} \mathcal{I}_{k}.
\end{align*}
Now for fixed $m \in \mathscr{J}_{n}$, the integral $\mathcal{I}_{m}$ is given by
\[\mathcal{I}_{m}=\int\limits_{\mathbb{R}_{+}}  n\left\{\int_{\frac{m}{n}}^{\frac{m+1}{n}} \eta(\left( \lambda\,\left|\mathscr{F}(e^{u})-\mathscr{G}(e^{u})\right|\right)  \,du \, \land \, \eta(\lambda) \left( \frac{ \Psi_\delta(n\log z -m)}{\bigvee\limits_{j\in \mathscr{J}_{n}} \Psi_\delta(n\log z -j)} \right) \right\}\frac{dz}{z}.\]
By setting $n\log z -m = v_{m}$ and applying lemma-\ref{lm2}, we obtain \\
\[\mathcal{I}_{m}=\int\limits_{\mathbb{R}_{+}}  \left\{\int_{\frac{m}{n}}^{\frac{m+1}{n}} \eta(\left( \lambda\,\left|\mathscr{F}(e^{u})-\mathscr{G}(e^{u})\right|\right)  \,du \, \land \, \eta(\lambda)  \frac{ \Psi_\delta(v_{m})}{\Psi_{\delta}(e)}  \right\}dv_{m}\, .\]
Since $\Psi_\delta \in L^1(\mathbb{R}_+)$, then for each given $\varepsilon >0$, there exists $n_{m} >0$ such that 
\begin{eqnarray}\label{orleq1}
    \int\limits_{|v_{m}| > n_{m}} \Psi_{\delta}(v_{m}) \, dv_{m} < \varepsilon.
\end{eqnarray}
For some fixed $\beta > 0 $, suppose that the modular functional $I_{\eta}(\lambda(\mathscr{F} - \mathscr{G}))$ to be of such a way that
\[r_{m} := (I_{\eta}(\lambda(\mathscr{F} - \mathscr{G})))^{\frac{-\beta}{1+\beta}} > n_{m}.\]
By partitioning the $\mathcal{I}_{m}$ , we obtain:
 \begin{align*}
    \mathcal{I}_{m} = &   \int\limits_{|v_{m}| \leq r_{m}}  \left\{\int_{\frac{m}{n}}^{\frac{m+1}{n}} \eta(\left( \lambda\,\left|\mathscr{F}(e^{u})-\mathscr{G}(e^{u})\right|\right)  \,du \, \land \, \eta(\lambda)  \frac{ \Psi_\delta(v_{m})}{\Psi_{\delta}(e)}  \right\}dv_{m} \\ & + \int\limits_{|v_{m}| > r_{m}}  \left\{\int_{\frac{m}{n}}^{\frac{m+1}{n}} \eta(\left( \lambda\,\left|\mathscr{F}(e^{u})-\mathscr{G}(e^{u})\right|\right)  \,du \, \land \, \eta(\lambda)  \frac{ \Psi_\delta(v_{m})}{\Psi_{\delta}(e)}  \right\}dv_{m}\\ 
    \leq & \int\limits_{|v_{m}| \leq r_{m}}  \left\{\int_{\frac{m}{n}}^{\frac{m+1}{n}} \eta(\left( \lambda\,\left|\mathscr{F}(e^{u})-\mathscr{G}(e^{u})\right|\right)  \,du\,  \right\}dv_{m} + \int\limits_{|v_{m}| > r_{m}}  \left\{ \eta(\lambda)  \frac{ \Psi_\delta(v_{m})}{\Psi_{\delta}(e)}  \right\}dv_{m}.\\
 \end{align*}
 From \ref{orleq1} and by setting $e^{u} =t $, we have:
\begin{align*}
    \mathcal{I}_{m} \leq & \int\limits_{|v_{m}| \leq r_{m}}  \left\{\int\limits_{e^{\frac{m}{n}}}^{e^{\frac{m+1}{n}}} \eta(\left( \lambda\,\left|\mathscr{F}(t)-\mathscr{G}(t)\right|\right)  \,\frac{dt}{t}\,  \right\}dv_{m} + \int\limits_{|v_{m}| > r_{m}}  \left\{ \eta(\lambda)  \frac{ \Psi_\delta(v_{m})}{\Psi_{\delta}(e)}  \right\}dv_{m}\\
    \leq & 2r_{m} \left\{\int\limits_{e^{\frac{m}{n}}}^{e^{\frac{m+1}{n}}} \eta(\left( \lambda\,\left|\mathscr{F}(t)-\mathscr{G}(t)\right|\right)  \,\frac{dt}{t} \right\} + \frac{\eta(\lambda)}{\Psi_{\delta}(e)} \varepsilon.
 \end{align*}
 Since this is true for each $m\in \mathscr{I}_{n}$, we have:\\[1pt]
 $\mathrm{I}_{\eta}\left[\lambda \left( (\mathfrak{MK}_{n}^{(m)}\mathscr{F}) - (\mathfrak{MK}_{n}^{(m)}\mathscr{G}) \right) \right] $
 \begin{align*}
    \leq & \sum\limits_{k\in \mathscr{I}_{n}}2r_{m} \left\{\int\limits_{e^{\frac{k}{n}}}^{e^{\frac{k+1}{n}}} \eta(\left( \lambda\,\left|\mathscr{F}(t)-\mathscr{G}(t)\right|\right)  \,\frac{dt}{t} \right\} + \frac{\eta(\lambda)}{\Psi_{\delta}(e)} \varepsilon\\
    = & 2r_{m}\sum\limits_{k\in \mathscr{I}_{n}} \left\{\int\limits_{e^{\frac{k}{n}}}^{e^{\frac{k+1}{n}}} \eta(\left( \lambda\,\left|\mathscr{F}(t)-\mathscr{G}(t)\right|\right)  \,\frac{dt}{t} \right\} + \sum\limits_{k\in \mathscr{I}_{n}}\frac{\eta(\lambda)}{\Psi_{\delta}(e)} \varepsilon\\
    \leq & 2 (I_{\eta}(\lambda(\mathscr{F} - \mathscr{G})))^{\frac{-\beta}{1+\beta}} (I_{\eta}(\lambda(\mathscr{F} - \mathscr{G}))) + \frac{\eta(\lambda)}{\Psi_{\delta}(e)} \varepsilon \left(\lceil{n\log b\rceil}- \lfloor{n\log a\rfloor} \right)\\
    = & 2 (I_{\eta}(\lambda(\mathscr{F} - \mathscr{G})))^{\frac{1}{1+\beta}} + \frac{\eta(\lambda)}{\Psi_{\delta}(e)} \varepsilon \left(\lceil{n\log b\rceil}- \lfloor{n\log a\rfloor} \right).
 \end{align*}
\end{proof}

\begin{theorem}\label{}
   For every $ \mathscr{F} \in L_{\mu}^\eta(\mathscr{I})\,\,\text{there exists} \,\,\, \lambda > 0 \,\, $ such that
\[\lim_{n \to \infty} \mathrm{I}_{\eta}\left[\lambda \left( (\mathfrak{MK}_{n}^{(m)}\mathscr{F}) - \mathscr{F} \right) \right] = 0.\]
 
\end{theorem}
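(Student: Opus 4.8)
The plan is to reduce the general case $\mathscr{F}\in L_{\mu}^{\eta}(\mathscr{I})$ to the already-settled continuous case by a density argument, gluing together three facts: the modular convergence on the dense subspace (Theorem \ref{thm1orlicz}), the modular density of $\mathcal{U}_{b}(\mathscr{I})$ in $L_{\mu}^{\eta}(\mathscr{I})$ (the preceding lemma), and the modular continuity estimate for the operators (Theorem \ref{orlthm1}). The convexity of $I_{\eta}$ supplies the triangle-inequality-type splitting that ties these together.

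First I would use modular density to fix the scaling constant once and for all: there exist $\lambda'>0$ and a sequence $(\mathscr{G}_j)\subset\mathcal{U}_{b}(\mathscr{I})$ with $I_{\eta}[\lambda'(\mathscr{G}_j-\mathscr{F})]\to 0$. Set the fixed value $\lambda:=\lambda'/3$; this is the $\lambda$ whose existence the theorem asserts. Given $\varepsilon>0$, choose $j$ with $I_{\eta}[\lambda'(\mathscr{G}_j-\mathscr{F})]<\varepsilon$ and write $\mathscr{G}:=\mathscr{G}_j\in\mathcal{U}_{b}(\mathscr{I})$, so that $3\lambda=\lambda'$ gives $I_{\eta}[3\lambda(\mathscr{G}-\mathscr{F})]<\varepsilon$. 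Decomposing $\lambda(\mathfrak{MK}_n^{(m)}\mathscr{F}-\mathscr{F})$ into the three increments $\lambda(\mathfrak{MK}_n^{(m)}\mathscr{F}-\mathfrak{MK}_n^{(m)}\mathscr{G})$, $\lambda(\mathfrak{MK}_n^{(m)}\mathscr{G}-\mathscr{G})$ and $\lambda(\mathscr{G}-\mathscr{F})$, then using the triangle inequality, monotonicity, and convexity of $\eta$ (Jensen with weights $\tfrac13$) before integrating, I obtain
\begin{align*}
I_{\eta}\!\left[\lambda(\mathfrak{MK}_n^{(m)}\mathscr{F}-\mathscr{F})\right]\le \tfrac{1}{3}I_{\eta}\!\left[3\lambda(\mathfrak{MK}_n^{(m)}\mathscr{F}-\mathfrak{MK}_n^{(m)}\mathscr{G})\right]+\tfrac{1}{3}I_{\eta}\!\left[3\lambda(\mathfrak{MK}_n^{(m)}\mathscr{G}-\mathscr{G})\right]+\tfrac{1}{3}I_{\eta}\!\left[3\lambda(\mathscr{G}-\mathscr{F})\right].
\end{align*}

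The three terms are then controlled separately. The third term is $<\varepsilon/3$ by the choice of $\mathscr{G}$. The second term tends to $0$ as $n\to\infty$ by Theorem \ref{thm1orlicz}, applied with the positive parameter $3\lambda$ (legitimate since $\mathscr{G}\in\mathcal{U}_{b}(\mathscr{I})$ and that theorem holds for every positive scaling). The first term is handled by Theorem \ref{orlthm1} with $\lambda$ replaced by $3\lambda$: its modular part $2\,(I_{\eta}[3\lambda(\mathscr{F}-\mathscr{G})])^{1/(1+\beta)}\le 2\varepsilon^{1/(1+\beta)}$ is small by the density choice. After these three bounds one takes $\limsup_{n\to\infty}$ (killing the second term) and finally lets $\varepsilon\to 0$, yielding the claimed limit for the fixed $\lambda=\lambda'/3$.

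The hard part will be the residual summand $\frac{\eta(3\lambda)}{\Psi_{\delta}(e)}\,\varepsilon'\,(\lceil n\log b\rceil-\lfloor n\log a\rfloor)$ coming from Theorem \ref{orlthm1}, where $\varepsilon'$ is the free tail level from the $L^1$-tail estimate \eqref{orleq1}. Since $\lceil n\log b\rceil-\lfloor n\log a\rfloor$ grows linearly in $n$, a tail level $\varepsilon'$ fixed independently of $n$ cannot be pushed through $\limsup_{n\to\infty}$. The delicate point is therefore to coordinate the parameters: one must let the tail threshold shrink as $\varepsilon'=\varepsilon'_n$ with $\varepsilon'_n(\lceil n\log b\rceil-\lfloor n\log a\rfloor)\to 0$, while keeping the induced truncation radius $r_m$ admissible in the constraint $r_m=(I_{\eta}[3\lambda(\mathscr{F}-\mathscr{G})])^{-\beta/(1+\beta)}>n_m$ of Theorem \ref{orlthm1}, and simultaneously keeping the density approximant $\mathscr{G}$ (hence $I_{\eta}[3\lambda(\mathscr{F}-\mathscr{G})]$) uniformly small. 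Once this balancing is secured, the first term is also driven to $0$ and the proof concludes as above.
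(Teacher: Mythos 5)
Your decomposition is exactly the one the paper uses: pick $\mathscr{G}\in\mathcal{U}_{b}(\mathscr{I})$ by modular density, split $\lambda(\mathfrak{MK}_n^{(m)}\mathscr{F}-\mathscr{F})$ into the three increments, apply convexity of $\eta$ with $3\lambda<\lambda_{*}$, control the middle term by Theorem \ref{thm1orlicz} and the first term by Theorem \ref{orlthm1}. The genuinely valuable part of your write-up is the final paragraph: the residual summand $\frac{\eta(\lambda_{*})}{\Psi_{\delta}(e)}\,\varepsilon\,(\lceil n\log b\rceil-\lfloor n\log a\rfloor)$ grows linearly in $n$, so for a tail level $\varepsilon$ fixed independently of $n$ it cannot survive $\limsup_{n\to\infty}$. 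This is not a defect of your write-up alone; it is a gap in the paper's own proof, which ends with precisely this bound and then asserts the conclusion ``due to arbitrariness of $\varepsilon$'' --- an invalid step, since for fixed $\varepsilon$ the right-hand side diverges as $n\to\infty$. So you have correctly located the crux that the paper glosses over.

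However, the balancing you sketch cannot be ``secured'' inside the given framework, so your proposal also remains incomplete at the same point. The obstruction is the coupling built into the proof of Theorem \ref{orlthm1}: the tail level $\varepsilon'$ and the truncation radius are linked by the requirement $r_{m}=\bigl(I_{\eta}[\lambda(\mathscr{F}-\mathscr{G})]\bigr)^{-\beta/(1+\beta)}>n_{m}(\varepsilon')$, where $n_{m}(\varepsilon')$ is the radius beyond which the tail mass of $\Psi_{\delta}$ is below $\varepsilon'$. Choosing $\varepsilon'_{n}=o(1/n)$ forces $n_{m}(\varepsilon'_{n})\to\infty$, hence forces $I_{\eta}[\lambda(\mathscr{F}-\mathscr{G})]\to 0$, i.e.\ the density approximant must become a sequence $\mathscr{G}_{n}$ varying with $n$. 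But then the middle term $I_{\eta}\bigl[\lambda_{*}(\mathfrak{MK}_n^{(m)}\mathscr{G}_{n}-\mathscr{G}_{n})\bigr]$ is no longer controlled: Theorem \ref{thm1orlicz} gives convergence for a \emph{fixed} $\mathscr{G}$, with a rate governed (via Theorem \ref{Kantthm1}) by the modulus of continuity of $\mathscr{G}$, and there is no uniformity over $\mathcal{U}_{b}(\mathscr{I})$; a sequence of better and better approximants of a rough $\mathscr{F}$ typically has degenerating moduli, so no diagonal choice is guaranteed to make both terms small simultaneously. A genuine repair needs a new ingredient: either assume $\Psi_{\delta}$ compactly supported (e.g.\ the ramp or three-level kernels), in which case the tail term is exactly zero for a fixed finite radius and the whole argument closes with $\mathscr{G}$ fixed; or replace Theorem \ref{orlthm1} by a modular inequality of the form $I_{\eta}\bigl[\lambda(\mathfrak{MK}_n^{(m)}\mathscr{F}-\mathfrak{MK}_n^{(m)}\mathscr{G})\bigr]\le C\,I_{\eta}[\mu(\mathscr{F}-\mathscr{G})]$ with constants independent of $n$, which is the device used in the Orlicz-space literature on Kantorovich-type operators and which the paper's Theorem \ref{orlthm1} does not supply.
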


\begin{proof}
    Let $\varepsilon > 0 $ be given and choose $\lambda_{*} > 0$. Then by Theorem-\ref{thm1orlicz}, \,\,$\exists \, n_0 \in \mathbb{N}$ such that
\begin{eqnarray}\label{orleq2}
    I_\eta\left[ \lambda_{*} \left( \mathfrak{MK}_n^{(m)} \mathscr{G} - \mathscr{G} \right) \right] < \varepsilon, \quad  \forall n \geq n_0.
\end{eqnarray} 
    Moreover, since the space \( \, \mathcal{U}_{b}(\mathscr{I}) \)  is modularly dense in \( L_{\mu}^\eta(\mathscr{I}) \), for each given $\varepsilon > 0 $ and $ \lambda_{*} > 0\,\, \exists $ a function \( \mathscr{G} \in \mathcal{U}_{b}([a,b]) \) such that
\begin{eqnarray}\label{orleq3}
    \mathrm{I}_{\eta}\left[\lambda_{*} (\mathscr{F} - \mathscr{G})\right] < \varepsilon^{\beta+1}, \, for \,\beta >0.
\end{eqnarray}
\begin{align*}
    \mathrm{I}_{\eta}\left[\lambda \left( (\mathfrak{MK}_{n}^{(m)}\mathscr{F}) - \mathscr{F} \right) \right] = &  \mathrm{I}_{\eta}\left[\lambda \left( (\mathfrak{MK}_{n}^{(m)}\mathscr{F}) - (\mathfrak{MK}_{n}^{(m)}\mathscr{G}) +(\mathfrak{MK}_{n}^{(m)}\mathscr{G})- \mathscr{G} + \mathscr{G}- \mathscr{F}\right) \right]. \\
\end{align*}
since $\eta$ is convex then for each $n > n_{0}$, we get
\begin{align*}
    \mathrm{I}_{\eta}\left[\lambda \left( (\mathfrak{MK}_{n}^{(m)}\mathscr{F}) - \mathscr{F} \right) \right] \leq & \frac{1}{3} \mathrm{I}_{\eta}\left[3\lambda \left( (\mathfrak{MK}_{n}^{(m)}\mathscr{F}) -(\mathfrak{MK}_{n}^{(m)}\mathscr{G})\right)\right] + \frac{1}{3} \mathrm{I}_{\eta}\left[3\lambda(\mathfrak{MK}_{n}^{(m)}\mathscr{G})- \mathscr{G}\right] \\&+ \frac{1}{3}\mathrm{I}_{\eta}\left[3\lambda\left(\mathscr{G}- \mathscr{F}\right)\right] . \\
\end{align*}
By considering   $3\lambda < \lambda_{*}$ for $\lambda > 0 $, we get
\begin{align*}
    \mathrm{I}_{\eta}\left[\lambda \left( (\mathfrak{MK}_{n}^{(m)}\mathscr{F}) - \mathscr{F} \right) \right] \leq & \frac{1}{3} \mathrm{I}_{\eta}\left[\lambda_{*}\left( (\mathfrak{MK}_{n}^{(m)}\mathscr{F}) -(\mathfrak{MK}_{n}^{(m)}\mathscr{G})\right)\right] + \frac{1}{3} \mathrm{I}_{\eta}\left[\lambda_{*}(\mathfrak{MK}_{n}^{(m)}\mathscr{G})- \mathscr{G}\right] \\&+ \frac{1}{3}\mathrm{I}_{\eta}\left[\lambda_{*}\left(\mathscr{G}- \mathscr{F}\right)\right].  \\
\end{align*}
Now by using Theorem-\ref{orlthm1} and from \ref{orleq2},\ref{orleq3}, we obtain :
\begin{align*}
    \mathrm{I}_{\eta}\left[\lambda \left( (\mathfrak{MK}_{n}^{(m)}\mathscr{F}) - \mathscr{F} \right) \right] \leq & \frac{1}{3} \left[2 (\varepsilon^{\beta +1})^{\frac{1}{1+\beta}} + \frac{\eta(\lambda_{*})}{\Psi_{\delta}(e)} \varepsilon \left(\lceil{n\log b\rceil}- \lfloor{n\log a\rfloor} \right)\right] +\frac{\varepsilon}{3} + \frac{\varepsilon^{\beta +1}}{3} \\
    \leq & \frac{1}{3} \left[2\varepsilon + \frac{\eta(\lambda_{*})}{\Psi_{\delta}(e)} \varepsilon \left(\lceil{n\log b\rceil}- \lfloor{n\log a\rfloor} \right)\right] +\frac{\varepsilon}{3} + \frac{\varepsilon^{\beta +1}}{3} .\\
\end{align*}
 The proof is complete due to arbitrariness of  $\varepsilon$.
\end{proof}

\section{Examples of Activation Functions }\label{sec5}

In this section, we have presented several examples of sigmoidal functions along with their corresponding activation (kernel) functions, as summarized in Table~\ref{tab:sigmoid-kernels}. Among these, the \texttt{logistic} and \texttt{hyperbolic tangent-based} sigmoidal functions are smooth and satisfy all three structural conditions: $ \Delta1 $ (odd-centered symmetry), $ \Delta2 $ (concavity on \( \mathbb{R}_+ \)), and \( \Delta3 \) (polynomial-type decay as \( z \to -\infty \)). On the other hand, the \texttt{ramp sigmoidal function} \( \sigma_{R} \) is a non-smooth example that satisfies conditions \( \Delta1 \) and \( \Delta3 \), but not \( \Delta_2 \), due to its piecewise linear structure. However, in view of Remark~\ref{rem2}, which characterizes the kernel \( \varphi_\sigma \) as a centered bell-shaped function, \( \sigma_{R} \) can still be considered a valid activation function in the framework of exponential sampling theory. Additionally, the \texttt{three-level sigmoidal function} \( \sigma_{\text{three}} \) serves as another example of a valid, though non-smooth, sigmoidal function. It is important to note that the kernels derived from \( \sigma_l \)\,and  \,\( \sigma_h \) are not compactly supported, while those from \( \sigma_R \) and \( \sigma_{\text{three}} \) are compactly supported, making them advantageous in local approximation scenarios.\\

\begin{table}[h!]
\centering
\caption{Examples of kernels associated with sigmoidal function}
\label{tab:sigmoid-kernels}
\begin{tabular}{@{}ll@{}}
\toprule
\textbf{Sigmoidal Function \( \sigma(z) \)} & \textbf{Kernel \( \varphi_\sigma(z) = \frac{1}{2}[\sigma(\log z+1) - \sigma(\log z-1)] \)} \\
\midrule

\textbf{1. Logistic} 
\vspace{1mm} \\ 
\( \displaystyle \sigma_l(z) = \frac{1}{1 + e^{-z}} \) & 
\( \displaystyle \varphi_{\sigma_l(z)} =  \frac{z(e^2 - 1)}{2(z + e)(ez + 1)}, \quad z \in \mathbb{R}^+ \)

\vspace{2mm} \\

\textbf{2. Hyperbolic Tangent} 
\vspace{1mm} \\
\( \displaystyle \sigma_h(z) = \frac{1}{2}(\tanh z + 1) \) & 
\( \displaystyle \varphi_{\sigma_h}(z) = \frac{1}{2} \cdot \frac{z^2(e^4 - 1)}{z^2(1 + e^4 + e^2z^2) + e^2}, \quad z \in \mathbb{R}^+ \)

\vspace{2mm} \\

\textbf{3. Ramp Function} 
\vspace{1mm} \\
\( \displaystyle
\sigma_R(z) = 
\begin{cases}
0, & z < -\frac{1}{2}, \\
z + \frac{1}{2}, & -\frac{1}{2} \le z \le \frac{1}{2}, \\
1, & z > \frac{1}{2}
\end{cases}
\) &
\( \displaystyle
\varphi_{\sigma_R}(z) =
\begin{cases}
0, & z < e^{-3/2} \\[6pt]
\frac{1}{2} \left( \log z + \frac{3}{2} \right), & e^{-3/2} \le z < e^{-1/2} \\[6pt]
\frac{1}{2}, & e^{-1/2} \le z \leq e^{1/2} \\[6pt]
\frac{1}{2} \left( -\log z + \frac{3}{2} \right), & e^{1/2} < z \le e^{3/2} \\[6pt]
0, & z > e^{3/2}
\end{cases}
\)

\vspace{2mm} \\

\textbf{4. Three-Level Sigmoid}
\vspace{1mm} \\
\( \displaystyle
\sigma_{\text{three}}(z) =
\begin{cases}
0, & z < -\frac{1}{2}, \\
\frac{1}{2}, & -\frac{1}{2} \le z \le \frac{1}{2}, \\
1, & z > \frac{1}{2}
\end{cases}
\) &
\( \displaystyle
\varphi_{\sigma_{\text{three}}}(z) =
\begin{cases}
\dfrac{1}{4}, & e^{-3/2} \le z < e^{-1/2}, \\[6pt]
\dfrac{1}{2}, & e^{-1/2} \le z \le  e^{1/2}, \\[6pt]
\dfrac{1}{4}, & e^{1/2} < z \le e^{3/2}, \\[6pt]
0, & \text{otherwise}
\end{cases}
\)

\vspace{2mm} \\
\bottomrule
\\
\end{tabular}
\end{table}
\noindent

\section{Graphical Visualization}
To evaluate the approximation behavior of the Max--Min exponential sampling operator and its Kantorovich-type variant, we consider two distinct test functions over the interval \([0,2]\). 
\noindent
\subsection{Piecewise Function}
The first function is piecewise-defined:
\[
f(x) =
\begin{cases}
0.25 + 0.1x, & 0 \le x \le 0.4,\\[6pt]
0.85 - 0.05\sin(5x), & 0.4 < x \le 0.75,\\[6pt]
0.4 + 0.1x^2, & 0.75 < x \le 1.25,\\[6pt]
0.65 + 0.02\cos(3x), & 1.25 < x \le 2.
\end{cases}
\]
And approximated using the ramp kernel within the Max--Min and Kantorovich Max--Min exponential sampling frameworks for various sampling densities \(n \in \{10, 26,50, 100\}\). 
We denote by $\mathfrak{GM}_{n,\,\sigma_{R}}$  is the operator$\mathfrak{GM}_{n}$ defined using ramp kernel $ \sigma_{R}$ and by $\mathfrak{MK}_{n,\,\sigma_{R}}^{(m)}$ is the operator $\mathfrak{MK}_{n}^{(m)}$ with the same kernel $ \sigma_{R}.$
\begin{figure}[h!]
  \centering
  \includegraphics[width=0.8\linewidth]{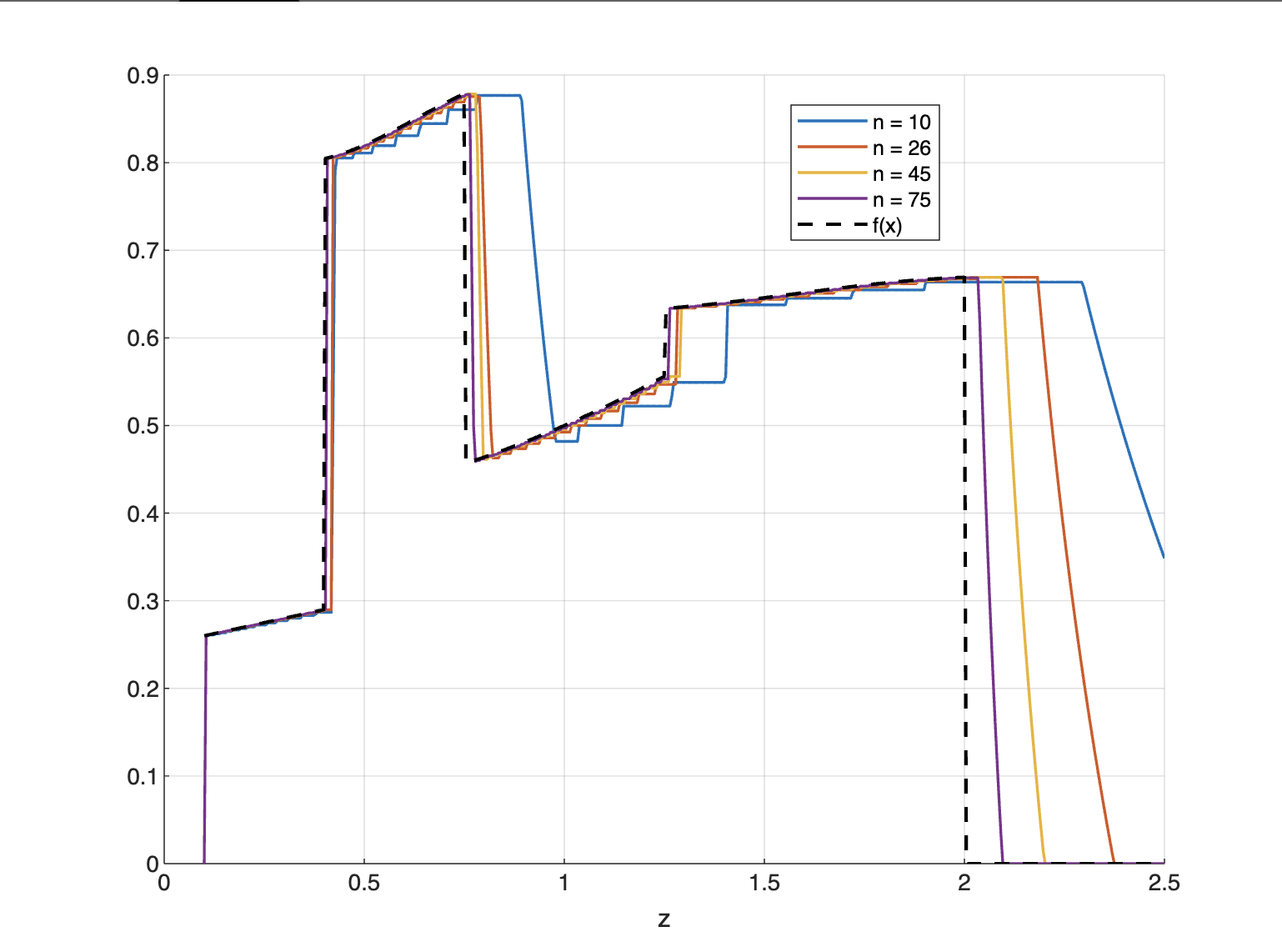}
  \caption{Approximation of $f(x)$ by $\mathfrak{GM}_{n,\sigma_{R}}(f,x)$ for $n=10,26,45,75.$}
  \label{fig:1}
\end{figure}

\vspace{0.1em}

\begin{figure}[h!]
  \centering
  \includegraphics[width=0.8\linewidth]{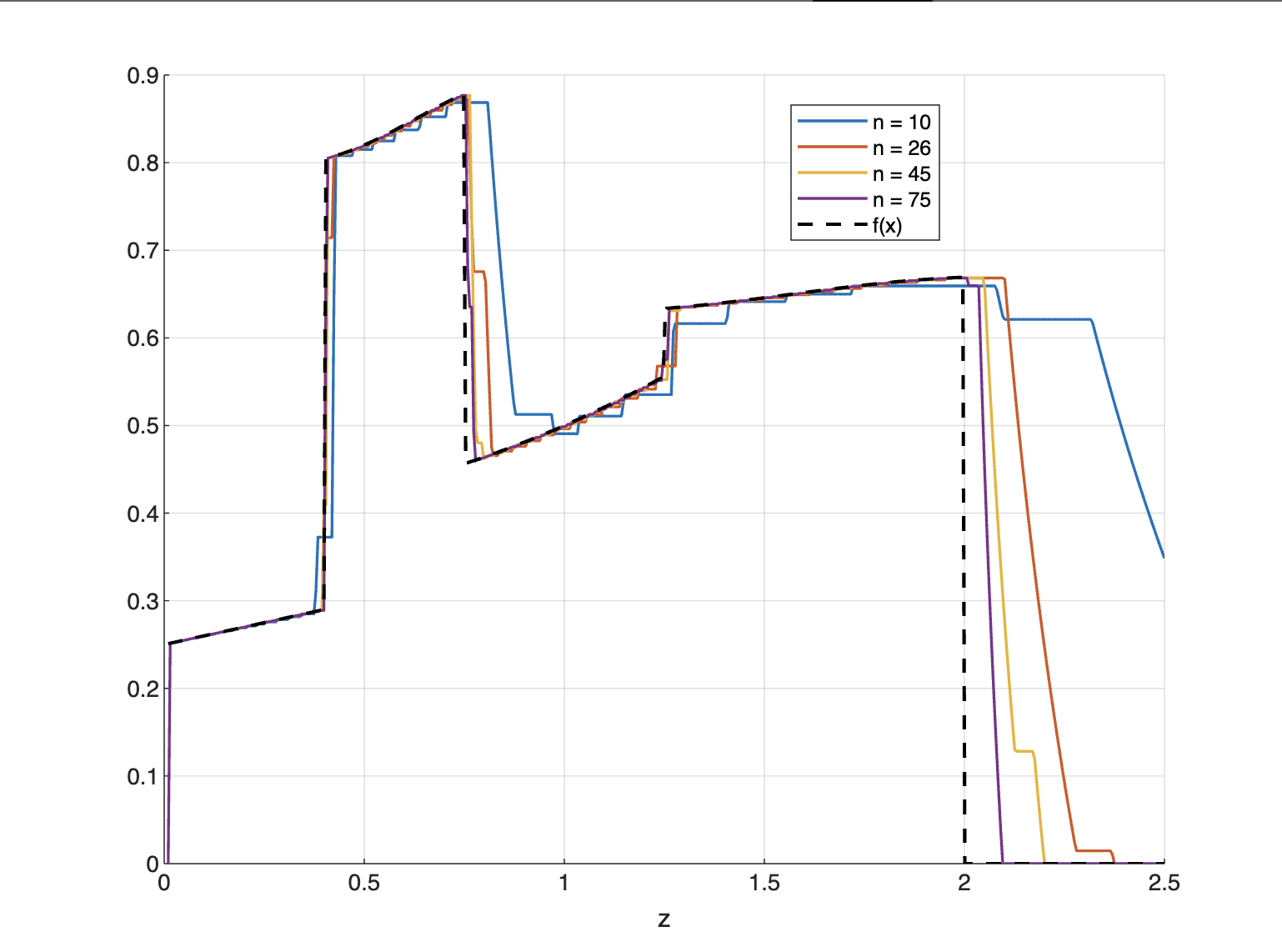}
  \caption{Approximation of $f(x)$ by $\mathfrak{MK}_{n,\sigma_{R}}^{(m)}(f,x)$ for $n=10,26 ,45, 75.$}
  \label{fig:2}
\end{figure}

\vspace{0.1em}

\begin{figure}[h!]
\centering
\includegraphics[width=0.8\textwidth]{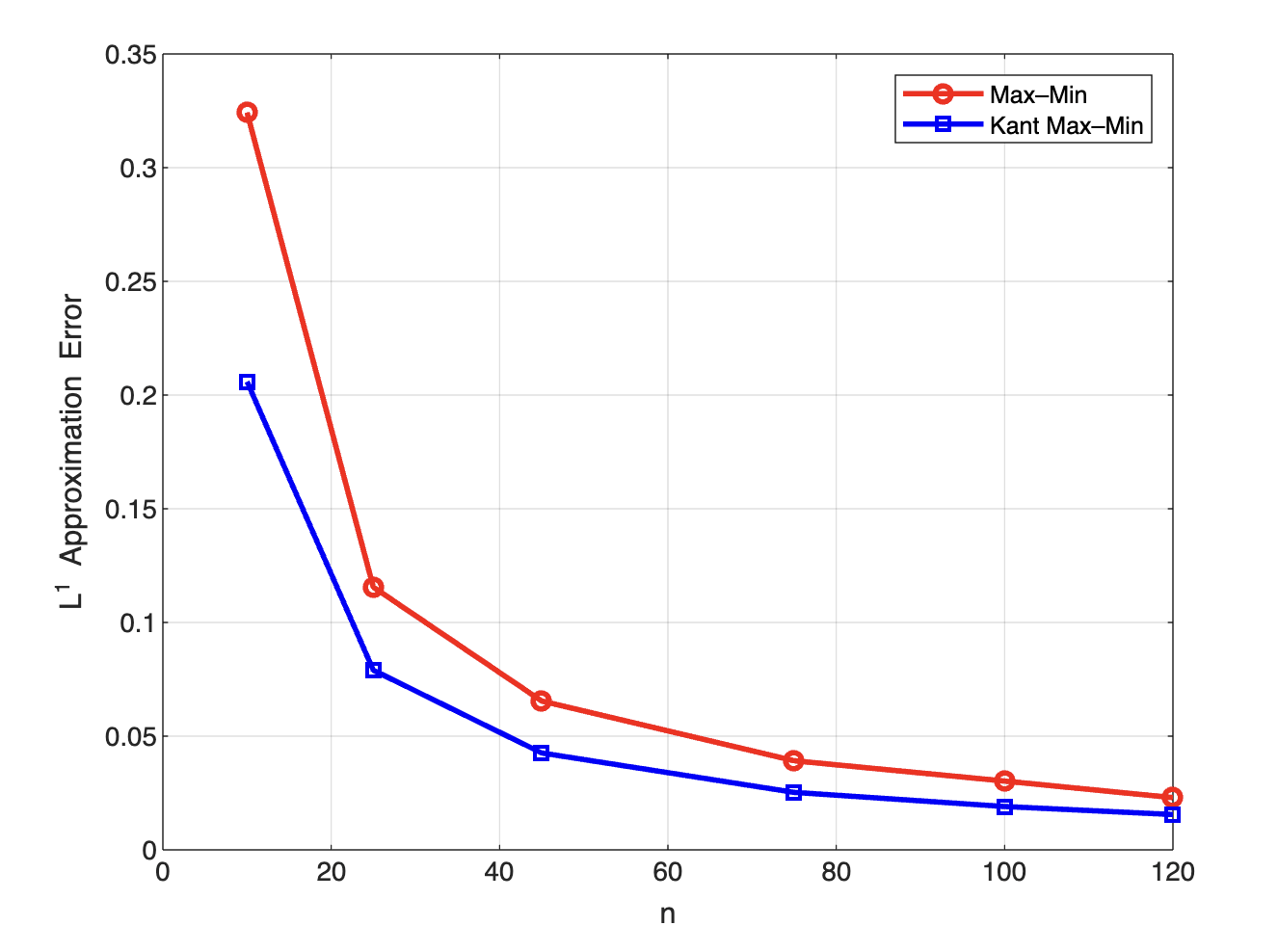}
\caption{Approximation error plot for $\mathscr{\mathfrak{GM}}_{n,\sigma_{R}}(f,z)$ and $\mathfrak{MK}_{n,\sigma_{R}}^{(m)}(f,z)$.}
\label{fig:3}
\end{figure}

\begin{table}[h!]
\centering

\begin{tabular}{|c|c|c|}
\hline
\textbf{n} & \textbf{ Max--Min Error} & \textbf{Kantorovich Max--Min Error} \\
\hline
10   & 0.324257 & 0.205913 \\
25   & 0.115541 & 0.079010 \\
45   & 0.065467 & 0.042613 \\
75   & 0.039184 & 0.025282 \\
100  & 0.030253 & 0.019063 \\
120  & 0.022967 & 0.015536 \\
\hline
\end{tabular}
\caption{Error comparison of L$^1$ approximation for the Max-Min and Kantorovich Max--Min exponential sampling operators using the ramp kernel for $f(x)$.}
\label{tab:2}
\end{table}

\subsection{Continuous Function}

The second function is globally smooth and oscillatory with decaying amplitude:
\[
g(x) = 0.2 \;+\; \frac{e^{\sin(x)} \,\sin(x^2)}{1 + x^4}, \quad x \in [0,2].
\] approximated using the ramp kernel within the Max--Min and Kantorovich Max--Min exponential sampling frameworks for various sampling densities \(n \in \{10, 26,50, 100\}\). 
\begin{figure}[h!]
  \centering
  \includegraphics[width=0.9\linewidth]{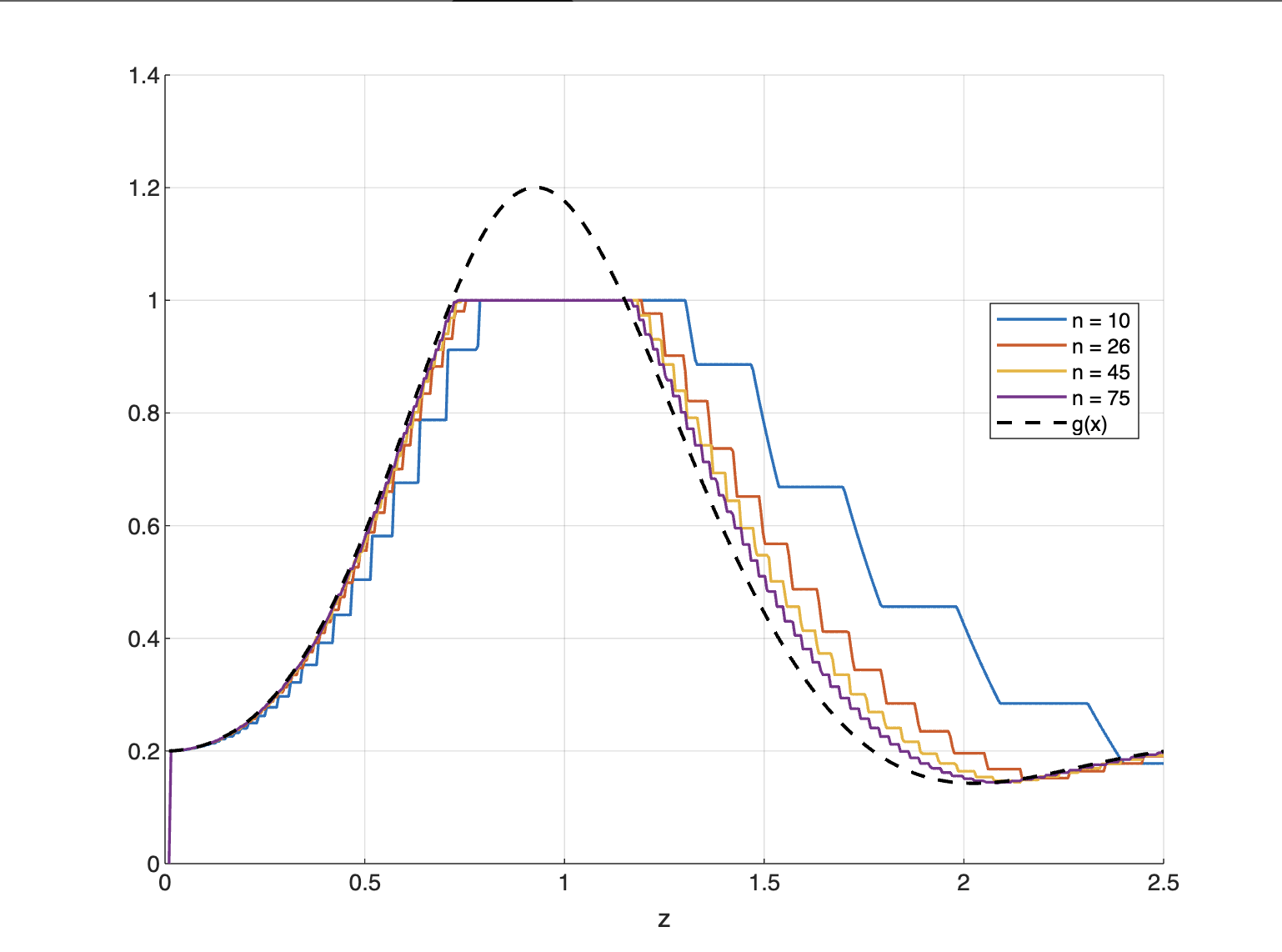}
  \caption{Approximation of $g(x)$ by $\mathscr{\mathfrak{GM}}_{n,\sigma_{R}}(g,x)$ for $n=10,26 ,45,75.$}
  \label{fig:4}
\end{figure}

\vspace{1em}

\begin{figure}[h!]
  \centering
  \includegraphics[width=0.9\linewidth]{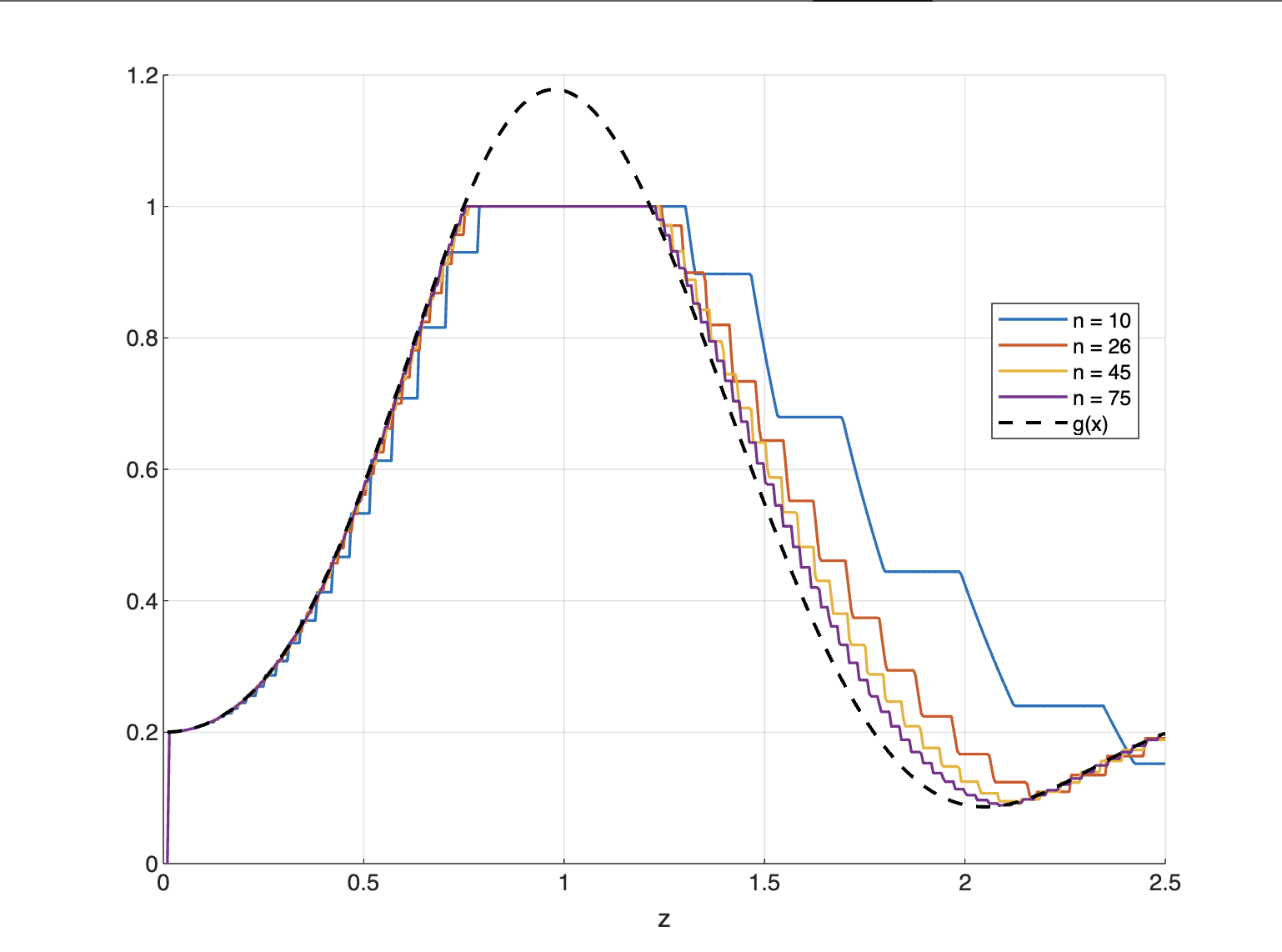}
  \caption{Approximation of $g(x)$ by $\mathscr{\mathfrak{MK}}_{n,\sigma_{R}}^{(m)}(g,x)$ for $n=10,26 ,45, 75.$}
  \label{fig:5}
\end{figure}

\vspace{1em}

\begin{figure}[h!]
\centering
\includegraphics[width=0.9\textwidth]{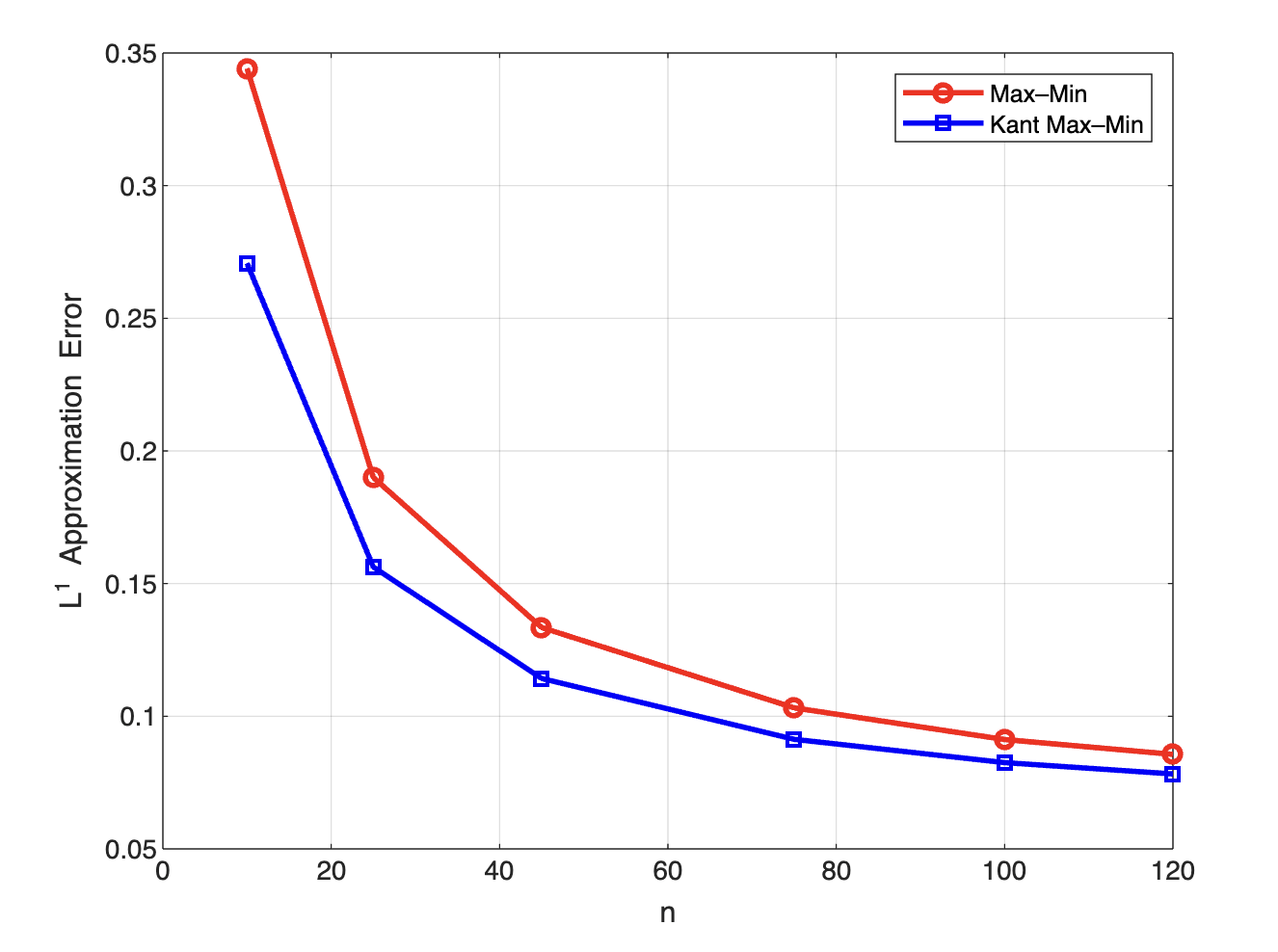}
\caption{\centering Approximation error plot for $\mathscr{\mathfrak{GM}}_{n,\sigma_{R}}(g,x)$ \,and \, $\mathscr{\mathfrak{MK}}_{n,\sigma_{R}}^{(m)}(g,x).$}
\label{fig:6}
\end{figure}

\begin{table}[h!]
\centering

\begin{tabular}{|c|c|c|}
\hline
\textbf{n} & \textbf{Max--Min Error} & \textbf{Kantorovich Max--Min Error} \\
\hline
10   & 0.344159 & 0.270741 \\
25   & 0.190002 & 0.156266 \\
45   & 0.133451 & 0.114245 \\
75   & 0.103169 & 0.091305 \\
100  & 0.091224 & 0.082526 \\
120  & 0.085628 & 0.078252 \\
\hline
\end{tabular}

\caption{Error comparison of L$^1$ approximation for the Max-Min and Kantorovich Max--Min exponential sampling operators using the ramp kernel for $g(x)$.}
\label{tab:maxmin_kant}
\end{table}

\newpage

\section{Conclusions}\label{section5}
In this paper, we studied the approximation properties of the Max-Min Kantrovich and classical Max-Min type  Exponential Sampling  neural network operator for functions in $\mathcal{U}_{b}(\mathscr{I}) \text{\,of log uniformly continuous and bounded function }\,\And\,\text{Mellin Orcliz space} \,\,\mathcal{L}_{\mu}^{\eta} $. We have proved the well-definedness of the operator and discussed its convergence, both in pointwise and uniform sense. The logarithmic modulus of continuity has been used to estimate the rate of convergence. We have illustrated the example of kernel functions and demonstrated the approximation of f(x) and g(x) for the proposed operator with ramp kernel. In figure~\ref{fig:3}   the comparison of approximation error value between  $\mathscr{\mathfrak{GM}}_{n,\sigma_{R}}$ and $\mathscr{\mathfrak{MK}}_{n,\sigma_{R}}^{(m)}$  has been illustrated and in table \ref{tab:2} approximation error has been enumerated with allocation of different values of n. The same approach has also been adopted for g(x)(see fig~\ref{fig:6}, and table~\ref{tab:maxmin_kant}). It is thus evident that the Kantorovich variant, owing to its integral formulation, better captures average behavior and mitigates oscillations near discontinuities. Overall, the synergy between the theoretical properties and the empirical evidence affirms that the proposed operators, especially when combined with the ramp kernel, offer a robust and adaptable framework for approximating both piecewise-defined and smoothly oscillatory functions in nonlinear exponential sampling contexts.



\subsection*{Conflicts of Interest}
	There is no conflict of interest regarding the publication of this article



\end{document}